\documentclass[11pt]{article}
\usepackage{amsmath,amssymb,amsfonts,amsthm,bbm}
\usepackage{verbatim}
\usepackage{setspace}
\usepackage{graphicx}
\usepackage{subcaption}
\usepackage{comment}
\usepackage{enumitem}
\usepackage[pagebackref,letterpaper=true,colorlinks=true,pdfpagemode=none,citecolor=OliveGreen,linkcolor=BrickRed,urlcolor=BrickRed,pdfstartview=FitH,linktocpage=true]{hyperref}
\usepackage[dvipsnames]{xcolor}
\usepackage{bbm}
\usepackage{algpseudocode}
\usepackage{algorithm} 
\usepackage[capitalise]{cleveref}

\newif\ifStat
\newif\ifNote
\newif\ifComments
\newif\ifThmitalic

\Stattrue
\Notefalse
\Commentstrue
\Thmitalictrue

\ifComments
\newcommand{\Tnote}[1]{{\color{blue} [Tselil: #1]}}
\else
\newcommand{\Tnote}[1]{}
\fi

\ifStat

\renewcommand{\tilde}{\widetilde}
\renewcommand{\bar}{\overline}
\renewcommand{\top}{{\sf T}}
\else

\fi
\newcommand{\iidsim}{\overset{\scriptsize\iid}{\sim}}


\usepackage[top=1in, bottom=1in, left=1in, right=1in]{geometry}

\makeatletter
\newcommand*{\rom}[1]{\expandafter\@slowromancap\romannumeral #1@}
\makeatother

\newcommand{\Holder}{H\"older}
\newcommand{\diag}{\mathrm{diag}}

\newcommand{\Tr}{\mathrm{Tr}}

\newcommand{\iid}{\mathrm{i.i.d.}}
\newcommand{\beq}{\begin{equation}}
\newcommand{\eeq}{\end{equation}}

\newcommand{\poly}{\mathrm{poly}}

\newcommand{\norm}[1]{\left\|{#1}\right\|}

\newcommand{\R}{\mathbb{R}}

\newcommand{\E}{\mathbb{E}}

\newcommand{\veps}{\varepsilon}
\newcommand{\tx}{\tilde{x}}

\newcommand{\tW}{\tilde{W}}

\renewcommand{\P}{\mathbb{P}}
\renewcommand{\S}{\mathbb{S}}
\newcommand{\NN}{\mathbb{N}}


\newcommand{\bA}{\mathrm{\bf A}}

\newcommand{\bT}{\mathrm{\bf T}}

\newcommand{\cS}{\mathcal{S}}

\newcommand{\cF}{\mathcal{F}}

\newcommand{\tA}{\tilde{A}}
\renewcommand{\bA}{\bar{A}}
\newcommand{\txp}{\tx^{\perp}}
\newcommand{\fp}{f^{\perp}}
\newcommand{\xp}{x^{\perp}}

\def\normal{{\sf N}}

\newcommand{\sN}{\mathsf{N}}


\ifNote

\theoremstyle{plain}
\newtheorem{thm}{Theorem}
\newtheorem{claim}{Claim}
\newtheorem{defn}{Definition}
\newtheorem{lem}[thm]{Lemma}
\newtheorem{cor}[thm]{Corollary}
\newtheorem{prop}[thm]{Proposition}
\newtheorem{ass}{Assumption}
\newtheorem{fact}{Fact}
\theoremstyle{definition}
\newtheorem{exm}{Example}
\newtheorem{rem}{Remark}
\newtheorem{conj}{Conjecture}

\else\ifThmitalic

\newtheorem{thm}{Theorem}[section]

\newtheorem{lem}[thm]{Lemma}

\newtheorem{prop}[thm]{Proposition}

\theoremstyle{definition}

\newtheorem{rem}{Remark}[section]

\else

\fi
\fi

\usepackage{mathrsfs}
\usepackage{setspace} \singlespacing
\allowdisplaybreaks
\usepackage{todonotes}

\begin{document}
	
\title{Lower Bounds for the Convergence of Tensor Power Iteration on Random Overcomplete Models}
\author{
	Yuchen Wu\thanks{Department of Statistics, Stanford University. \href{mailto:wuyc14@stanford.edu}{\tt wuyc14@stanford.edu}.}
	\and 
	Kangjie Zhou\thanks{Department of Statistics, Stanford University. \href{mailto:kangjie@stanford.edu}{\tt kangjie@stanford.edu}.}
}
\date{\today}
\pagenumbering{arabic}
\maketitle

\begin{abstract}



Tensor decomposition serves as a powerful primitive in statistics and machine learning, and has numerous applications in problems such as learning latent variable models or mixture of Gaussians. In this paper, we focus on using power iteration to decompose an overcomplete random tensor.
Past work studying the properties of tensor power iteration either requires a non-trivial data-independent initialization, or is restricted to the undercomplete regime. Moreover, several papers implicitly suggest that logarithmically many iterations (in terms of the input dimension) are sufficient for the power method to recover one of the tensor components.

Here we present a novel analysis of the dynamics of tensor power iteration from random initialization in the overcomplete regime, where the tensor rank is much greater than its dimension. Surprisingly, we show that polynomially many steps are necessary for convergence of tensor power iteration to any of the true component, which refutes the previous conjecture. On the other hand, our numerical experiments suggest that tensor power iteration successfully recovers tensor components for a broad range of parameters in polynomial time. To further complement our empirical evidence, we prove that a popular objective function for tensor decomposition is strictly increasing along the power iteration path. 

Our proof is based on the Gaussian conditioning technique, which has been applied to analyze the approximate message passing (AMP) algorithm. The major ingredient of our argument is a conditioning lemma that allows us to generalize AMP-type analysis to non-proportional limit and polynomially many iterations of the power method.

\end{abstract}

\newpage
\tableofcontents

\newpage
\section{Introduction}

Tensors of order $m$ are multidimensional arrays with $m$ indices, with $m = 1$ corresponding to vectors and $m = 2$ corresponding to matrices. The notion of \emph{rank} naturally generalizes from matrices to tensors: An $m$-th order tensor $\bT \in (\R^d)^{\otimes m}$ is said to be rank-1 if it can be written as
\begin{equation*}
    \bT = v_1 \otimes \cdots \otimes v_m \Longleftrightarrow \bT (i_1, \cdots, i_m) = v_1 (i_1) \cdots v_m (i_m),
\end{equation*}
where $v_1, \cdots, v_m \in \R^d$. Past results imply that any tensor can be expressed as the sum of rank-1 tensors \cite{kiers2000towards,carroll1970analysis}. Namely, given $\bT \in (\R^d)^{\otimes m}$ we can find vectors $\{ v_{i}^{(j)} \}_{i \in [m], \ j \in [k]}$ such that
\begin{equation*}
    \bT = \sum_{j=1}^{k} v_1^{(j)} \otimes \cdots \otimes v_m^{(j)}.
\end{equation*}
The above decomposition is referred to as  \emph{tensor decomposition}, and the (CP) rank of a tensor is defined as the minimum number of rank-1 tensors required in such  decomposition. Unlike matrix decomposition, tensor decomposition with $m \geq 3$ is in many cases unique \cite{kruskal1977three}.  This is often true even in the overcomplete case, where the rank of the tensor is much larger than its ambient dimension. The uniqueness of tensor decomposition makes its application suitable in many practical settings, which we discuss below.

Tensor decomposition serves as a powerful primitive in statistics and machine learning, especially for algorithms that leverage \emph{the method of moments} \cite{pearson1894contributions} to learn model parameters. 
Applications of tensor decomposition include dictionary learning \cite{barak2015dictionary,ma2016polynomial,schramm2017fast}, Gaussian mixture models \cite{anandkumar2014tensor,ge2015learning,hsu2013learning}, independent component analysis \cite{de2007fourth,comon2010handbook}, and learning two-layer neural networks \cite{novikov2015tensorizing,mondelli2019connection}. Despite the fact that tensor decomposition is NP-hard in the worst case \cite{hillar2013most}, researchers have designed polynomial-time algorithms that successfully approximate the tensor components under natural distributional assumptions. Exemplary algorithms of this kind include the classical Jennrich's algorithm \cite{harshman1970foundations,de1996blind}, algebraic methods \cite{de2006link, de2007fourth}, iterative methods  \cite{zhang2001rank,anandkumar2014tensor,anandkumar2014guaranteed,anandkumar2015learning,anandkumar2017analyzing,kileel2019subspace,kileel2021landscape}, sum-of-squares (SOS) algorithms \cite{hopkins2015tensor,barak2015dictionary,ge2015decomposing,ma2016polynomial} and their spectral analogues \cite{hopkins2016fast,schramm2017fast,hopkins2019robust,ding2022fast}.

SOS algorithms and their spectral counterparts provably achieve strong guarantees of recovering tensor components, and can be implemented in polynomial time. However, they are often computationally prohibitive on large-scale problems due to the high-degree polynomial running time. Therefore, in practice it is often more preferable to resort to simple iterative algorithms \cite{celentano2020estimation,montanari2022statistically}, such as gradient descent and its variants. These algorithms are computationally efficient in terms of both runtime and memory, and are typically easy to implement. In the case of tensors, popular iterative algorithms include tensor power iteration \cite{anandkumar2017analyzing}, gradient descent on non-convex losses \cite{ge2017optimization}, and alternating minimization \cite{anandkumar2014guaranteed}. 

We focus in this paper on the tensor power iteration method, which can be regarded as a generalization of matrix power iteration. This method can also be viewed as gradient ascent on a polynomial objective function with infinite step size (see Eq.~\eqref{eq:poly_obj} for a formal definition). However, unlike the matrix case where the convergence properties are well understood theoretically, in the tensor case the dynamics of power iteration still remains mysterious due to non-convexity of the corresponding optimization problem. To unveil the mystery behind tensor power iteration, the present work proposes to study its asymptotic behavior on decomposing a random fourth order symmetric tensor
\begin{align*}
	\bT = \sum_{i = 1}^k a_i \otimes a_i  \otimes a_i \otimes a_i, \ a_i \in \R^d, \ \forall i \in [k]
\end{align*}
in the overcomplete regime $k \gg d$, where we assume $a_i  \iidsim \normal(0, I_d / d)$. 
We note that this is a well-studied model in the literature, while its properties are not yet fully understood. 
Given the entries of $\bT$, our goal is to recover one or all of the tensor components $\{a_i\}_{i \leq k}$, up to potential sign flips. 

We denote by $A \in \R^{k \times d}$ the matrix whose $i$-th row is $a_i^\top$. Initialized at $x_0 \in \S^{d - 1}(\sqrt{d})$ that is independent of the tensor components $\{a_i\}_{i \leq k}$, tensor power iteration is defined recursively as follows: 
\begin{equation}\label{eq:PI}
	x_t = \frac{\sqrt{d}\,\bT(I, x_{t - 1}, x_{t - 1}, x_{t - 1})}{\|\bT(I, x_{t - 1}, x_{t - 1}, x_{t - 1})\|_2}, \ t \ge 1,
\end{equation}
where
\begin{equation*}
	\bT(I, x, x, x) := \sum_{i,j,l \in [d]} x(i) x(j) x(l)  \bT(:, i, j, l).
\end{equation*}
For $x \in \S^{d-1} (\sqrt{d})$, we introduce the following polynomial objective function:
\begin{equation}\label{eq:poly_obj}
    \cS(x) = \sum_{i, j, k, l \in [d]} \bT(i, j, k, l) x(i) x(j) x(k) x(l) =  \sum_{i = 1}^k \langle a_i, x \rangle^4 = \norm{A x}_4^4.
\end{equation}
Notice that Eq.~\eqref{eq:PI} can be reformulated as $x_t = \sqrt{d} \cdot \nabla \cS(x_{t - 1}) / \norm{\nabla \cS(x_{t-1})}_2$, i.e., tensor power iteration can be regarded as  gradient ascent on $\cS$ with infinite learning rate. 

In the undercomplete regime where $k \leq d$, if the components $\{a_i\}_{i \leq k}$ are orthogonal to each other, then $\cS$ has only $2 k$ local maximizers that are close to $\{\pm \sqrt{d} a_i \}_{i \le k}$. In this case, tensor power iteration provably converges to one of the $\pm \sqrt{d} a_i$'s \cite{anandkumar2014tensor}. Indeed, tensors with linearly independent components can be orthogonalized, thus suggesting the existence of efficient algorithms in the undercomplete regime. 

Things become far more challenging in the overcomplete regime where $k$ is much greater than $d$. When $k \ll d^2$, it is known that any global maximizer of $\cS$ must be close to one of the $ \pm \sqrt{d} a_i$'s. However, algebraic geometry techniques show that $\cS$ has exponentially many other critical points \cite{cartwright2013number}. Further, if $k \gg d^2$, then $\cS (x)$ concentrates tightly around its expectation, uniformly for all $x \in \S^{d-1} (\sqrt{d})$, thus making it hard to identify the tensor components from the information encoded in $\cS$. As far as we know, there is no polynomial-time algorithm known for tensor decomposition within this regime. 

We thus focus on the regime $d \ll k \ll d^2$, where algebraic methods and SOS-based algorithms are proven to succeed \cite{ge2015decomposing,ma2016polynomial,hopkins2016fast,bhaskara2019smoothed,ding2022fast}, and numerical experiments indicate that the performance of randomly initialized power iteration matches that of these methods (see \cref{fig:success_prob} for details). 
However, from a theoretical standpoint, the dynamics of tensor power iteration in the overcomplete regime still remain elusive. A reasonable first step towards solving this puzzle would be to understand how many iterations are necessary for power method to find one of the true components.

\subsection{Main results}
We hereby give a partial answer to the above question. In particular, we establish several new results on the behavior of tensor power iteration in the overcomplete regime. Our first theorem states that randomly initialized tensor power iteration requires at least polynomially many steps to converge to a true component:
\begin{thm}[Slow convergence from random start, informal, see \cref{thm:negative}]\label{thm:informal1}
    Assume that $k$, $d$ are large enough, and that $k \asymp d^c$ for some $c \in (3/2, 2)$. Then, there exists some $\eta > 0$ that only depends on $c$, such that with high probability the following happens: Tensor power iteration from random initialization fails to identify any true component of $\bT$ within $d^{\eta}$ steps.
\end{thm}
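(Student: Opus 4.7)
The plan is to reduce the theorem to a statement about the joint distribution of the correlations $\alpha_i^{(t)} := \langle a_i, x_t\rangle$, and then use Gaussian conditioning to show that these correlations remain small throughout all $t \le d^\eta$ iterations. Since convergence of $x_t$ toward a component $a_i$ would force $|\alpha_i^{(t)}|$ to be comparable to $\|a_i\|_2 \cdot \|x_t\|_2 \asymp \sqrt{d}$, it suffices to establish $\max_{i \in [k],\, t \le d^\eta} |\alpha_i^{(t)}| = o(\sqrt{d})$ with high probability.

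The first step is Gaussian conditioning. At time $t$, given the orbit $(x_0, \ldots, x_{t-1})$ and the past correlations $\{\alpha_i^{(s)}\}_{s<t,\, i\in[k]}$, each $a_i$ remains conditionally Gaussian in the orthogonal complement of $\spann(x_0, \ldots, x_{t-1})$. Expanding \eqref{eq:PI} yields
\begin{equation*}
    \alpha_i^{(t)} = \frac{\sqrt{d}}{C_{t-1}} \left( (\alpha_i^{(t-1)})^3 \|a_i\|_2^2 + \sum_{j \neq i} (\alpha_j^{(t-1)})^3 \langle a_i, a_j\rangle \right), \qquad C_{t-1}^2 := \Bigl\|\sum_j (\alpha_j^{(t-1)})^3 a_j\Bigr\|_2^2.
\end{equation*}
The conditioning lemma allows me to treat the cross term $\sum_{j\neq i}(\alpha_j^{(t-1)})^3\langle a_i, a_j\rangle$ as an approximately Gaussian random variable of variance $\approx \frac{1}{d}\sum_j (\alpha_j^{(t-1)})^6$, while concentration gives $C_{t-1}^2 \approx \sum_j (\alpha_j^{(t-1)})^6$.

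Next I would induct on $t$ with the hypothesis $\max_i |\alpha_i^{(t)}| \le C\sqrt{\log d}$. Under this hypothesis, a typical $\alpha_j^{(t-1)}$ has size $O(1)$, so $\sum_j (\alpha_j^{(t-1)})^6 \asymp k$ and $C_{t-1} \asymp \sqrt{k}$. The ``signal'' contribution $(\alpha_i^{(t-1)})^3 \sqrt{d}/C_{t-1}$ is at most $(\log d)^{3/2}\sqrt{d/k}$, which is $o(1)$ whenever $k \asymp d^c$ with $c > 3/2$, so the one-step update essentially reduces to $\alpha_i^{(t)} \approx g_i^{(t)}$ for an approximately $\normal(0,1)$ variable $g_i^{(t)}$. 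A union bound over $i\in[k]$ then re-establishes the inductive hypothesis at time $t$ with failure probability $k\cdot e^{-\Omega(\log d)} = d^{-\Omega(1)}$. A further union bound over $t\le d^\eta$, with $\eta$ small enough, shows that $\max_{i,t}|\alpha_i^{(t)}| \lesssim \sqrt{\log d} \ll \sqrt{d}$, ruling out convergence to any $\pm a_i$.

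The main obstacle is extending Gaussian conditioning (the workhorse of AMP analysis) from the proportional regime and $O(1)$ iterations to the non-proportional scaling $k/d \to \infty$ and polynomially many iterations. At each step, the dimension of the conditional Gaussian subspace drops by one and a new layer of state-evolution error is incurred; to survive a $d^\eta\cdot k$-fold union bound, the Gaussian approximation for the cross terms must come with quantitative, non-asymptotic tails that degrade only mildly with $t$. Producing such a conditioning lemma, and tracking the accumulated error over $d^\eta$ iterations while the signal term $(\alpha_i^{(t-1)})^3\sqrt{d/k}$ is kept genuinely small under the induction, is where the real difficulty lies — and is exactly the technical ingredient the authors highlight as the heart of their contribution.
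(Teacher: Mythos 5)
Your reduction (bound $\max_{i,t}|\alpha_i^{(t)}|$ via Gaussian conditioning) is the right starting point, and it is true that tensor power iteration from a random start keeps $\max_i|\alpha_i^{(t)}|$ at size $O(\sqrt{\log k})$ for polynomially many steps. But the two quantitative claims at the heart of your inductive step are both false in the relevant overcomplete regime, and they are false in a way that hides the actual mechanism.

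First, $C_{t-1}^2 = \|x_t\|_2^2 \approx \sum_j(\alpha_j^{(t-1)})^6 = \|f_t\|_2^2$ does not hold when $k \gg d$. Already at $t=1$: writing $a_j = \langle a_j,x_0\rangle x_0/\|x_0\|_2^2 + a_j^\perp$ and using $\langle x_0,x_1\rangle = \|y_1\|_4^4 \approx 3k$, one gets $\|x_1\|_2^2 \approx 9k^2/d + \|f_1\|_2^2 \approx 9k^2/d + 15k$, and $9k^2/d \gg 15k$ since $k\gg d$; more generally, \cref{lemma:increasing} in the paper shows $\|x_t\|_2^2 \approx 9k^2/d$. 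Equivalently, $f_t$ is strongly aligned with a top eigendirection of the overcomplete Wishart matrix $AA^\top$, so $\|A^\top f_t\|_2^2 \gg \|f_t\|_2^2$. Second, the cross term $\sum_{j\neq i}(\alpha_j^{(t-1)})^3\langle a_i,a_j\rangle = \langle a_i, x_t\rangle - (\alpha_i^{(t-1)})^3\|a_i\|_2^2$ is not a centered Gaussian of variance $\frac1d\sum_j(\alpha_j^{(t-1)})^6$. After Gaussian conditioning, $\langle a_i, x_t\rangle$ splits into a \emph{deterministic} piece $\langle \Pi_{X_{0:t-1}}(a_i), x_t\rangle$ and a mean-zero Gaussian fluctuation, and the deterministic piece dominates: at $t=1$ it equals $\approx \alpha_i^{(0)}\cdot 3k/d$, of order $k/d$, whereas the Gaussian fluctuation has standard deviation $\approx\sqrt{15k/d}$. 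Your two errors happen to cancel so that you still predict $|\alpha_i^{(t)}|=O(1)$, but the update is nothing like ``$\alpha_i^{(t)}\approx$ fresh $\normal(0,1)$''; instead $\alpha_i^{(t)}\approx\alpha_i^{(t-1)}$ up to a perturbation of size $O(\sqrt{d/k})$. This is precisely the ``trapping'' phenomenon the paper identifies: $\tilde x_t$ barely moves off $x_0$.

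Because of this, your plan contains no mechanism to control the quantity that actually determines when the lower bound breaks, namely the accumulated drift of $\tilde x_t$ away from $x_0$ over $T$ steps. In the paper this appears as the deterministic summand $v_t$ in the decomposition $y_t = w_t - \eta_t + v_t$ from \cref{lemma:conditioning}, with $\|v_{t+1}\|_2^2 \approx \|\Pi_{x_0}^\perp(\tilde x_t)\|_2^2$ (\cref{lemma:norm-of-vt}); proving that this stays $\ll d$ for $T \ll (\log k)^{-1/3}k^{2/3}/d$ steps is done via the nontrivial recurrence for $P_t/Q_t = \|\Pi_{x_0}^\perp(\tilde x_t)\|_2^2 / \|\Pi_{x_0}(\tilde x_t)\|_2^2$ in Steps 1--3, supported by \cref{lemma:term-II}--\cref{lemma:term-I}. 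Your ``union bound over $t$ against fresh Gaussians'' abstracts away exactly this recursion, which is where both the $k\gg d^{3/2}$ threshold and the exponent $\eta$ ultimately come from. To complete your approach you would need to (a) separate the deterministic and Gaussian parts of the conditional update honestly, (b) show that the deterministic part keeps $\tilde x_t$ near $x_0$ rather than pushing it around, and (c) close a recurrence that tolerates $d^\eta$ compositions — essentially re-deriving the paper's core lemmas.
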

\noindent {\bf Related work.}
Let us pause here to make some comparisons between our result and prior work on the same model: The seminal paper \cite{anandkumar2015learning} shows that tensor power iteration with an SVD-based initialization converges in $O(\log d)$ steps for $k = O(d)$. In \cite{anandkumar2017analyzing}, the authors prove that tensor power method successfully recovers one of the $a_i$'s in $O(\log \log d)$ iterations, given that its initialization has non-trivial correlations with the true components.
As a comparison, our Theorem~\ref{thm:informal1} shows that the $O(\log d)$ bound on the number of iterations does not hold for randomly-initialized power iteration, and establishes that polynomially many steps are necessary for convergence in the overcomplete regime. To the best of our knowledge, this is the first result that provides a lower bound on the computational complexity of tensor power iteration. From a more fundamental point of view, we also show that tensor power iterates are ``trapped" in a small neighborhood around its initialization for polynomially many steps.

Although the power method fails to converge in logarithmic many steps as conjectured, we still believe that it will correctly learn one of the tensor components when $k \ll d^2$ within polynomial time. We present numerical evidence as \cref{fig:success_prob} in \cref{sec:numerical} to support our claim. Establishing a rigorous positive result for tensor power iteration in this regime is challenging, and we leave it as an interesting open question for future work. As an alternative, we present here a weaker result suggesting the correctness of tensor power iteration. 
\begin{thm}[Increasing objective function, informal, see \cref{thm:increasing}]\label{thm:informal2}
    For $d^{3/2} \ll k \ll d^2$, starting from random initialization, with high probability the objective function $\cS$ is strictly increasing along the power iteration path up to finitely many steps.
\end{thm}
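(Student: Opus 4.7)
The plan is to split the argument into a deterministic monotonicity inequality, which holds for every realization of $A$, and a probabilistic strict-inequality argument that invokes the conditioning lemma underlying \cref{thm:informal1}.

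For the deterministic piece, I would use that $\cS$ is homogeneous of degree $4$, so Euler's identity gives $\langle x, \nabla \cS(x)\rangle = 4\cS(x)$ for every $x\in \R^d$. Since $x_t = \sqrt{d}\,\nabla \cS(x_{t-1})/\|\nabla \cS(x_{t-1})\|_2$ is the maximizer over $u\in \S^{d-1}(\sqrt{d})$ of the linear functional $u\mapsto \langle u,\nabla \cS(x_{t-1})\rangle$, Cauchy--Schwarz gives
\begin{equation*}
4\cS(x_{t-1}) = \langle x_{t-1}, \nabla \cS(x_{t-1})\rangle \;\leq\; \langle x_t, \nabla \cS(x_{t-1})\rangle = 4\sum_{i=1}^k \langle a_i, x_t\rangle\, \langle a_i, x_{t-1}\rangle^3.
\end{equation*}
H\"older's inequality with exponents $(4,4/3)$ bounds the last sum by $\cS(x_t)^{1/4}\cS(x_{t-1})^{3/4}$, and rearranging yields the deterministic inequality $\cS(x_t)\geq \cS(x_{t-1})$ whenever $\cS(x_{t-1})>0$ (which occurs with probability $1$ for a random start, since $\cS(x_0)$ concentrates near $3k$).

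For strict inequality, I would trace through the two equality cases above: Cauchy--Schwarz equality is equivalent to $x_{t-1}$ being a critical point of $\cS$ on the sphere (equivalently, $x_t = x_{t-1}$), and H\"older equality is equivalent to $(\langle a_i, x_t\rangle)_{i\le k}$ and $(\langle a_i, x_{t-1}\rangle^3)_{i\le k}$ being proportional. To rule these out, I would invoke the conditioning lemma that is the main technical engine behind \cref{thm:informal1}: for any fixed $T$, it characterizes the joint law of $(x_0, x_1, \ldots, x_T)$ and implies there exist deterministic constants $\psi_0, \psi_1, \ldots, \psi_T$ with $\cS(x_t)/k \to 3\psi_t^2$ in probability. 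Explicit computation of the induced state-evolution recursion then gives $\psi_0 = 1$ and $\psi_t<\psi_{t+1}$ strictly, which is incompatible with $\cS(x_{t-1})=\cS(x_t)$ in the large-$d$ limit and forces the strict monotonicity asserted in the theorem.

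The main obstacle will be propagating the conditioning lemma through the normalization step and the entrywise cubing nonlinearity implicit in the recursion $A x_t \propto A A^\top (A x_{t-1})^{\odot 3}$, while tracking the Onsager-type correction that removes the spurious dependence between $x_{t-1}$ and the rows of $A$. In the non-proportional regime $d^{3/2}\ll k\ll d^2$ the standard AMP state evolution does not apply off the shelf, and the new conditioning lemma of the paper is precisely the tool that handles this scaling together with a polynomial number of iterations. Verifying that the resulting scalar update on $\psi_t$ is strictly increasing — which I expect to follow from the amplification induced by the cubing nonlinearity, consistent with H\"older being strictly loose once the two empirical distributions are jointly Gaussian with non-degenerate correlation — is the technical crux of the argument.
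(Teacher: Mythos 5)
Your deterministic step is correct and is in fact a cleaner route to the non-strict inequality than anything in the paper: Euler's identity $\langle x, \nabla\cS(x)\rangle = 4\cS(x)$, the extremal property of $\tx_t \propto \nabla\cS(\tx_{t-1})$ on the sphere, and \Holder{} with exponents $(4,4/3)$ give $\cS(\tx_{t-1})^{1/4} \le \cS(\tx_t)^{1/4}$ for every realization. If you completed the equality analysis you sketch --- equality in the Cauchy--Schwarz step forces $\tx_t = \tx_{t-1}$, i.e.\ $\tx_{t-1}$ is a critical point of $\cS$ on $\S^{d-1}(\sqrt{d})$, an event of measure zero in $(A, x_0)$ for each fixed $t \le T_c$ --- you would get strict increase almost surely, which is enough for the informal statement and genuinely more elementary than the paper's route. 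The paper instead derives the quantitative expansion $\cS(\tx_t) = 3k + 20td + o_P(d)$ (Theorem~\ref{thm:increasing}), which implies strictness (the increment is $20d + o_P(d) > 0$) and also yields the explicit numerical prediction matched in Figure~\ref{fig:increasing}; your route would not produce that.

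The gap is in the probabilistic tool you reach for to rule out the equality case. You propose a leading-order state evolution with deterministic constants $\psi_t$ satisfying $\cS(\tx_t)/k \to 3\psi_t^2$ and $\psi_0 < \psi_1 < \cdots$ strictly. This cannot be right: Theorem~\ref{thm:negative} shows the iterates stay trapped near $\tx_0$ for polynomially many steps, and the paper's own Theorem~\ref{thm:increasing} gives $\cS(\tx_t) = 3k + 20td + o_P(d)$, so in the regime $d^{3/2} \ll k \ll d^2$ one has $\cS(\tx_t)/k \to 3$ for every fixed $t$ --- all your $\psi_t$ equal $1$. The increase in the objective is a subleading correction of order $d = o(k)$, invisible at the $O(1)$ scale where a naive state evolution lives, and extracting it is precisely what requires the conditioning decomposition of Lemma~\ref{lemma:increasing}: one writes $y_t = w_t + \nu_t$ with $w_t$ approximately $\normal(0, I_k)$ and $\nu_t$ of size $O_P(d/\sqrt{k})$, then expands $\|y_t\|_4^4$ and identifies the cross term $4\sum_j \zeta_{j,t}\langle w_t^3, f_j\rangle = 20(t-1)d + o_P(d)$. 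To rescue a state-evolution framing you would need to track $(\cS(\tx_t) - 3k)/d$ rather than $\cS(\tx_t)/k$, at which point you are essentially reproducing the paper's argument rather than shortcutting it.
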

According to \cite{ge2017optimization}, the tensor components are the only local maximizers of $\cS$ on a superlevel set that is slightly better than random initialization. Their result, together with \cref{thm:informal2} suggest that, in order to prove convergence of tensor power iteration, it suffices to show that the objective function $\cS$ eventually surpasses a small threshold determined in \cite{ge2017optimization}.
\vspace{1em}

\noindent{\bf Proof technique.} Our proof is based on the Gaussian conditioning technique, and is similar to the analysis of the Approximate Message Passing (AMP) algorithm \cite{bayati2011dynamics}. The majority of prior AMP theory can only accommodate a constant number of iterations (i.e., the number of iterations does not grow with the input size) and proportional asymptotics (in our case, this corresponds to assuming $k / d \to \delta \in (0, \infty)$). \cite{rush2018finite} moves beyond the constant regime and extends Gaussian conditioning analysis to $O(\log d / \log \log d)$ many steps. However, their results fall short of validity when targeting for polynomially many  iterations, which is essential in our context. More recently, \cite{li2022non} develops a non-asymptotic framework that enables the analysis of AMP up to $O(n / \mbox{polylog}(n))$ many iterations, while they focus exclusively on symmetric spiked model and require nontrivial initialization. 

The technical innovation in this paper is that we successfully apply the Gaussian conditioning scheme to analyze the tensor power iteration up to polynomially many steps. Indeed, our conditioning lemma (Lemma~\ref{lemma:conditioning}) gives an exact non-asymptotic characterization of the power iterates, thus allowing for precisely tracking the values of the objective function along the iteration path. It is noteworthy that the same argument can be used to prove that polynomially many power iterates are necessary for general even-order tensors (see Remark~\ref{rem:negative}). 
To the best of our knowledge, this is the first result that generalizes AMP-type analysis to non-proportional asymptotics and polynomially many iterations. From a technical perspective, we believe our results will help to push forward the development of AMP theory and enrich the toolbox for theoretical analysis of general iterative algorithms.

\vspace{1em}
\noindent{\bf Organization.} The rest of this paper is organized as follows. Section~\ref{sec:prelim} introduces some preliminaries regarding power iteration and Gaussian conditioning technique. In Section~\ref{sec:overview} we state formally our main theorems and sketch their proofs, with all technical details deferred to the appendices. Section~\ref{sec:numerical} provides some useful numerical experiments that support our theoretical results. We provide in Section~\ref{sec:conclusion} several concluding remarks and discuss possible future directions.

\section{Preliminaries}\label{sec:prelim}

\subsection{Notation}

For $x \in \R^d$ and $S \in \R^{d \times k}$, we denote by $\Pi_{S}(x)$ the projection of $x$ onto the column space of $S$, and let $\Pi_S^{\perp}(x) = x - \Pi_S(x)$. For two vectors $u$ and $v$ of the same dimension, we use $\langle u, v \rangle$ to represent their inner product, and denote by $\norm{u}_p$ the $\ell^p$-norm of $u$ for $p \ge 1$. We use the notation $\odot$ to represent the Hadamard product of vectors and matrices. Furthermore, for $x \in \R^d$ we define $x^k = {x \odot x \cdots \odot x}$, the Hadamard product of $k$ copies of $x$. For a matrix $X$, we denote by $X^{\dagger}$ the pseudoinverse of $X$. We denote by $e_k$ the $k$-th standard basis in any Euclidean space.

We denote by $\mathbb{S}^{d - 1}(r)$ the sphere in $\R^d$ centered at the origin with radius $r$. For random variables $X, Y$, we write $X \perp Y$ if $X$ is independent of $Y$. For a collection of random variables $\{ X_i \}_{i \in I}$, we use $\sigma (\{ X_i \}_{i \in I})$ to represent the $\sigma$-algebra generated by these random variables.  

For $n \in \NN_+$, we define the set $[n] := \{1,2,\cdots, n\}$. For two sequences of positive numbers $\{a_n\}_{n \in \NN_+}$, $\{b_n\}_{n \in \NN_+}$, we say $a_n \ll b_n$ if $a_n / b_n \to 0$ as $n \to \infty$, and $a_n \asymp b_n$ if $a_n = O(b_n)$ and $b_n = O(a_n)$. For $\{c_n\}_{n \in \NN_+} \subseteq \R$, we say $c_n = o_n(1)$ if $c_n \to 0$ as $n \to \infty$. For a sequence of events $\{E_n\}_{n \in \NN_+}$, we say that $E_n$ happens with high probability if $\P(E_n) = 1 - o_n(1)$.

We reserve $O_P$ and $o_P$ as the standard big-$O$/small-$o$ in probability notations: For a set of random variables $\{X_n\}_{n \geq 1}$ and a sequence of positive numbers $\{a_n\}_{n \geq 1}$, we say $X_n = o_P(a_n)$ if and only if for all $\delta > 0$, $\lim_{n \to \infty} \P(|X_n / a_n| > \delta) \to 0$. Similarly, we say $X_n = O_P(a_n)$ if and only if for all $\delta > 0$, there exists $M, N \in \R_+$, such that $\P(|X_n / a_n| > M) < \delta$ for all $n > N$.


Throughout the proof, with a slight abuse of notation, we use capital letter $C$ to represent various numerical constants, the values of which might not necessarily be the same in each occurrence.

\subsection{Tensor power iteration}

Using the rank-one decomposition of $\bT$, we can reformulate the tensor power iteration stated in \cref{eq:PI} as follows:
\begin{equation*}
	x_t = \sqrt{d} \cdot \frac{\sum_{i=1}^{k} \langle a_i, x_{t - 1} \rangle^3 a_i}{\norm{\sum_{i=1}^{k} \langle a_i, x_{t - 1} \rangle^3 a_i}_2}, \qquad t \geq 1.
\end{equation*}
For notational convenience, we recast $x_t$ as $\tx_{t}$, and redefine
\begin{equation*}
	x_t := \sum_{i=1}^{k} \langle a_i, \tilde{x}_{t - 1} \rangle^3 a_i = A^{\sf T} (A \tilde{x}_{t - 1})^3,
\end{equation*}
where we recall that $A \in \R^{k \times d}$ is the matrix whose $i$-th row is $a_i^\top$. In other words, $x_t$ is the gradient of $\cS (x) = \norm{A x}_4^4$ at $\tilde{x}_{t - 1}$, and $\tilde{x}_t$ is the projection of $x_t$ onto $\S^{d-1} (\sqrt{d})$:
\begin{equation}\label{eq:redef_TPI}
    x_t = A^\top (A \tilde{x}_{t-1})^3, \ \tilde{x}_t = \sqrt{d} \cdot \frac{x_t}{\norm{x_t}_2}.
\end{equation}
We introduce some useful intermediate variables: $y_t = A \tilde{x}_{t-1}$, $f_t = y_t^3$. Using these intermediate  variables, tensor power iteration can be decomposed into the following steps:
\begin{equation}\label{eq:compact_PI}
\begin{split}
	& y_t = A \tx_{t - 1}, \qquad f_t = y_t^3, \\
	& x_t = A^{\top} f_t, \qquad \tx_{t} = \sqrt{d} \cdot \frac{x_{t}}{\|x_{t}\|_2}. 
\end{split}
\end{equation}
%

\subsection{Intuition behind slow convergence}
We now provide a heuristic justification for \cref{thm:informal1} through analyzing the first step of power iteration. Indeed, we show that for any initialization $x_0 \in \S^{d-1} (\sqrt{d})$ that is independent of $\bT$, the normalized first iterate $\tilde{x}_1$ (defined in \cref{eq:redef_TPI}) must lie in a small neighborhood of $x_0$ with high probability. This claim is made precise by the following proposition:
\begin{prop}
	Let $x_0 \in \S^{d-1} (\sqrt{d})$ be independent of $A$, and $\tilde{x}_1$ be defined as per \cref{eq:redef_TPI}. Furthermore, we assume $k \asymp d^c$ for some $c \in (3/2, 2)$, and let $\eta$ be a small positive constant satisfying $\eta < (c-1) / 2$. Then, with probability at least $1 - C_0 \exp(- C_1 d^{\min\{ 2 \eta, c/4 + \eta / 2 \}} )$, it holds that $\norm{\tilde{x}_1 - x_0}_2 \le C_2 \cdot d^{(1-c)/2 + \eta} \cdot \norm{x_0}_2$, where $C_0, C_1, C_2 > 0$ are absolute constants.
\end{prop} 
\begin{proof}
	By rotational invariance, we may assume without loss of generality that $x_0 = \sqrt{d} \cdot e_1 = (\sqrt{d}, 0, \cdots, 0)^\top$, which implies that
	\begin{equation}
		\sum_{i=1}^{k} \langle a_i, x_0 \rangle^3 a_i = \sum_{i=1}^{k}d^{3/2} a_{i1}^3 a_i = \left(d^{3/2} \sum_{i=1}^{k} a_{i1}^4, d^{3/2} \sum_{i=1}^{k} a_{i1}^3 a_{i2}, \cdots, d^{3/2}\sum_{i=1}^{k} a_{i1}^3 a_{id} \right).
	\end{equation}
	Since $a_{ij} \sim_{\iid} \normal (0, 1/d)$, applying \cref{lemma:conc_non_exp} gives the following concentration bounds:
	\begin{align}
		& \P \left( \left\vert \sum_{i=1}^{k} a_{i 1}^4 - \frac{3 k}{d^2} \right\vert \ge t \right) \le C_0 \exp \left( - C_1 \cdot \min \left\{ \frac{t^2 d^4}{k}, \ d \sqrt{t} \right\} \right), \\
		& \P \left( \left\vert \sum_{i=1}^{k} a_{i1}^3 a_{i l} \right\vert \ge t \right) \le C_0 \exp \left( - C_1 \cdot \min \left\{ \frac{t^2 d^4}{k}, \ d \sqrt{t} \right\} \right), \ \text{for \ } l = 2, \cdots, d.
	\end{align}
	As a consequence, we know that
	\begin{align}
		& \P \left( \norm{\sum_{i=1}^{k} \frac{1}{d^{3/2}} \langle a_i, x_0 \rangle^3 a_i - \frac{3k}{d^2} e_1 }_2 \ge  t \sqrt{d} \right) \le C_0 d \exp \left( - C_1 \cdot \min \left\{ \frac{t^2 d^4}{k}, \ d \sqrt{t} \right\} \right)  \\
		\implies & \P \left( \norm{ \frac{d}{3 k} \cdot \sum_{i=1}^{k} \langle a_i, x_0 \rangle^3 a_i - x_0 }_2 \ge \frac{d^{3} t}{3 k} \right) \le C_0 d \exp \left( - C_1 \cdot \min \left\{ \frac{t^2 d^4}{k}, \ d \sqrt{t} \right\} \right).
	\end{align}
	Therefore, setting $t = 3ks / d^{2.5}$, we obtain that 
	\begin{align*}
		& \P \left( \left|\|x_1\|_2 - \frac{3k}{\sqrt{d}} \right| \geq \frac{3ks}{\sqrt{d}} \right) \leq C_0 d \exp \left( - C_1 \cdot \min \left\{ \frac{ks^2}{d}, \ \frac{\sqrt{ks}}{d^{1/4}} \right\} \right), \\
		& \P \left( \left\| x_1- \frac{3k}{\sqrt{d}} e_1 \right| \geq \frac{3ks}{\sqrt{d}} \right) \leq C_0 d \exp \left( - C_1 \cdot \min \left\{ \frac{ks^2}{d}, \ \frac{\sqrt{ks}}{d^{1/4}} \right\} \right).
	\end{align*}
	Hence, for any $s > 0$, 
	\begin{align*}
		\P\left( \|\tilde{x}_1 - x_0\|_2 \geq s \sqrt{d} \right) \leq \, & \P \left( \Big\|\tilde{x}_1 - \frac{d}{3k} \cdot x_1 \Big\|_2 \geq \frac{s \sqrt{d}}{2}  \right) + \P\left( \Big\| \frac{d}{3k} \cdot x_1 - x_0 \Big\|_2 \geq \frac{s \sqrt{d}}{2}\right) \\
		= \,& \P\left( \left| \|x_1\|_2 - \frac{3k}{\sqrt{d}} \right| \geq \frac{3ks}{2\sqrt{d}}  \right) + \P\left( \Big\| \frac{d}{3k} \cdot x_1 - x_0 \Big\|_2 \geq \frac{s \sqrt{d}}{2}\right) \\
		\leq \, & C_0 d \exp \left( - C_1 \cdot \min \left\{ \frac{ks^2}{d}, \ \frac{\sqrt{ks}}{d^{1/4}} \right\} \right),
	\end{align*}
	which further implies that for any $s > 0$,
	\begin{equation}
		\P \left( \norm{\tilde{x}_1 - x_0}_2 \ge s\sqrt{d} \right) \le C_0 d \exp \left( - C_1 \cdot \min \left\{ \frac{k s^2}{d}, \frac{\sqrt{k s}}{d^{1/4}} \right\} \right).
	\end{equation}
	Recall that $k \asymp d^c$, then choosing $s = d^{(1- c) / 2 + \eta}$ yields the desired result. 
\end{proof}

The above proposition implies that $\norm{\tilde{x}_1 - x_0}_2 / \norm{x_0}_2$ is polynomially small in $d$ with high probability ($1 - \exp(- \Omega (d^\veps))$. If we can establish similar upper bounds for power iterations up to $t = \poly(d)$, then we are in good shape as it requires at least polynomially many iterates for the power method to escape the neighborhood of $x_0$ of (an arbitrarily small) constant radius and in turn converges to any of the tensor components. In the following sections, we show that this is indeed the case by leveraging the Gaussian conditioning argument and extend its analysis to polynomially many steps.

\subsection{Gaussian conditioning}
In this section, we present a Gaussian conditioning lemma, which enables us to decompose each step of the power iteration as the sum of projections onto its previous iterates and an independent component. This lemma can be viewed as a multi-step generalization of Lemma~3.1 in \cite{montanari2022adversarial}, and is proved using the properties of Gaussian conditional distribution.

Recalling the variables defined in Eq.~\eqref{eq:compact_PI}, we further denote by $F_{1:t} \in \R^{k \times t}$ the matrix whose $i$-th column is $f_i$, and $X_{0:t} \in \R^{d \times (t + 1)}$ the matrix whose $j$-th column is $x_{j - 1}$. We define $f_t^{\perp} = \Pi_{F_{1:{t - 1}}}^{\perp}(f_t)$, $x_t^{\perp} = \Pi_{X_{0:t- 1}}^{\perp} (x_t)$, and $\tx_t^{\perp} = \Pi_{X_{0:t- 1}}^{\perp} (\tx_t)$. Note that $\fp_t \in \R^k$ and $\xp_t, \txp_t \in \R^d$.  The following lemma will be used several times throughout our proof:
\begin{lem}[Gaussian conditioning]\label{lemma:conditioning}
	For $s,t \in \NN$, we define the sigma-algebra:
	\begin{equation*}
	    \cF_{s,t} := \sigma ( x_0, \cdots, x_s, y_1, \cdots, y_t ).
	\end{equation*}
	Then, we have the following decompositions: 
\begin{align}
	 x_t = & \sum_{i = 0}^{t - 1} \txp_i \cdot \frac{\langle h_{i + 1}, f_t \rangle}{\|\txp_i\|_2^2} + \Pi_{X_{0:t - 1}}^{\perp} \bA_t^{\top} \fp_t, \label{eq:cond-xt}\\
	 \begin{split}\label{eq:cond-yt}
	 y_{t + 1} = & \sum_{i = 1}^t h_i \cdot \frac{\langle \tx_{i - 1}^{\perp}, \tx_t \rangle}{\|\tx_{i - 1}^{\perp}\|_2^2} + h_{t + 1} \\
	 = & \sum_{i = 1}^{t + 1} \Pi_{F_{1:i - 1}}^{\perp} \tilde{A}_i \tx_{i - 1}^{\perp} \frac{\langle \txp_{i - 1}, \tx_t \rangle}{\|\tx_{i - 1}^{\perp}\|_2^2} + \sum_{i = 2}^{t + 1} \fp_{i - 1} \cdot \frac{\langle x_{i - 1}, \txp_{i - 1} \rangle}{\|\fp_{i - 1}\|_2^2} \cdot \frac{\langle \txp_{i - 1}, \tx_t\rangle}{\|\txp_{i - 1}\|_2^2},
	 \end{split}
\end{align}
where $\bA_t, \tA_{t + 1} \in \R^{k \times d}$ satisfy $\bA_t \overset{d}{=} \tA_{t + 1} \overset{d}{=} A$, and
\begin{equation*}
    \bA_t \perp \sigma(\cF_{t - 1, t} \cup \sigma (\bA_1, \cdots, \bA_{t - 1}, \tA_1, \dots, \tA_t) ), \ \tA_{t + 1} \perp \sigma(\cF_{t,t} \cup \sigma (\bA_1, \cdots, \bA_{t}, \tA_1, \dots, \tA_{t}) ).
\end{equation*}
Further, we have
\begin{align}\label{eq:ht+1}
	h_{t + 1} = \fp_t \cdot \frac{\langle x_t, \txp_t \rangle}{\| \fp_t \|_2^2} + \Pi_{F_{1:t}}^{\perp} \tA_{t + 1} \txp_t.
\end{align}
\end{lem}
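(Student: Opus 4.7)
The plan is to prove the lemma by induction on $t$, iteratively applying the Gaussian conditioning formula: if $A$ has i.i.d.\ $\cN(0, 1/d)$ entries and is conditioned on observations of the form $\{AV = B,\ A^\top U = C\}$ with $V, U, B, C$ measurable with respect to the conditioning sigma-algebra, then the conditional distribution of $A$ admits the representation
\begin{equation*}
A \overset{d}{=} \widehat{A} + \Pi_U^\perp \bar{A}' \Pi_V^\perp,
\end{equation*}
where $\widehat{A}$ is the minimum-Frobenius-norm matrix satisfying the constraints and $\bar{A}'$ is an independent copy of $A$. This is the multi-step generalization of the single-step conditioning identity used in \cite{montanari2022adversarial}.

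For \eqref{eq:cond-xt}, at step $t$ the observations in $\cF_{t-1, t}$ are $\{A \tx_{s-1} = y_s\}_{s=1}^{t}$ (column constraints with $V = \tx_{0:t-1}$) and $\{A^\top f_s = x_s\}_{s=1}^{t-1}$ (row constraints with $U = F_{1:t-1}$). Splitting $f_t = \Pi_{F_{1:t-1}} f_t + \fp_t$ gives $x_t = A^\top \Pi_{F_{1:t-1}} f_t + A^\top \fp_t$; the first summand is a linear combination of $x_1, \dots, x_{t-1}$, hence lies in $\spann(X_{0:t-1})$, while the conditioning formula decomposes $A^\top \fp_t = \Pi_{X_{0:t-1}} A^\top \fp_t + \Pi_{X_{0:t-1}}^\perp \bA_t^\top \fp_t$. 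Expanding both deterministic contributions in the Gram--Schmidt orthogonal basis $\{\txp_i\}_{i=0}^{t-1}$ of $\spann(X_{0:t-1})$ and using the identity $\langle \txp_i, x_j \rangle = \langle A \txp_i, f_j \rangle = \langle h_{i+1}, f_j \rangle$ collapses both deterministic pieces into $\sum_{i=0}^{t-1} \txp_i \langle h_{i+1}, f_t \rangle / \|\txp_i\|_2^2$, which is exactly \eqref{eq:cond-xt}.

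Equation \eqref{eq:ht+1} is proved by the same argument applied to $A \txp_t$ (now conditioning on $\cF_{t, t}$): the projection onto $F_{1:t}$ reduces to the single nonvanishing term $\fp_t \langle x_t, \txp_t \rangle / \|\fp_t\|_2^2$, since $\txp_t \perp X_{0:t-1}$ together with $A^\top \fp_j \in \spann(X_{0:t-1})$ for $j < t$ forces the other coefficients to vanish, and the orthogonal residual is $\Pi_{F_{1:t}}^\perp \tA_{t+1} \txp_t$. For \eqref{eq:cond-yt}, writing $y_{t+1} = A \tx_t = A \Pi_{X_{0:t-1}} \tx_t + A \txp_t$ and expanding $\Pi_{X_{0:t-1}} \tx_t$ in the basis $\{\txp_i\}$ reduces $y_{t+1}$ to $\sum_{i=0}^{t-1} A \txp_i \langle \txp_i, \tx_t \rangle / \|\txp_i\|_2^2 + A \txp_t$; identifying each $A \txp_i$ with $h_{i+1}$ via \eqref{eq:ht+1} applied at step $i$ (with base case $h_1 = \tA_1 x_0$ where $\tA_1 = A$, since $\cF_{0, 0}$ is independent of $A$) yields the first form, and substituting the explicit decomposition of each $h_i$ produces the second form.

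The principal technical obstacle is the bookkeeping of joint independence: the statement requires $\bA_t \perp \sigma(\cF_{t-1, t} \cup \sigma(\bA_1, \dots, \bA_{t-1}, \tA_1, \dots, \tA_t))$ and similarly for $\tA_{t+1}$. This is not a single conditional-distribution computation but rather an inductive construction on a suitably enlarged probability space, where at each step the ``fresh'' copy introduced by the conditioning identity must be placed jointly with all previously introduced copies so that the entire collection remains mutually independent while retaining the conditional-marginal property needed at that step. This enlargement is standard in AMP-type arguments (cf.\ \cite{bayati2011dynamics}), but requires careful step-by-step verification, especially because $\bA_t$ and $\tA_{t+1}$ are introduced at slightly different filtration levels (one more observation apart).
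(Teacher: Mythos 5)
Your approach matches the paper's: both prove the lemma via Gaussian conditioning and expansion in the Gram--Schmidt orthogonal bases $\{\txp_i\}$ and $\{\fp_j\}$, and the algebraic steps you give are correct and are essentially the same calculations the paper performs. In particular, splitting $f_t = \Pi_{F_{1:t-1}} f_t + \fp_t$, using $A\txp_i = h_{i+1}$, showing $\langle A^\top \fp_j, \txp_t\rangle = 0$ for $j < t$ because $A^\top \fp_j \in \spann(X_{0:t-1})$, and expanding $\tx_t$ in $\{\txp_i\}$ are exactly the moves the paper makes to get \eqref{eq:cond-xt}, \eqref{eq:ht+1}, and \eqref{eq:cond-yt}.

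The genuine gap is the joint-independence verification, which you correctly flag as ``the principal technical obstacle'' but do not carry out, and it is more involved than the one-step conditioning formula suggests. The identity $A \overset{d}{=} \widehat A + \Pi_U^\perp \bar A' \Pi_V^\perp$ only pins down the law of the restricted block $\Pi_U^\perp \bar A' \Pi_V^\perp$; to produce matrices $\bA_t, \tA_{t+1}$ that are full copies of $A$ and are \emph{simultaneously} independent of $\cF_{t-1,t}$ (resp. $\cF_{t,t}$) \emph{and} of all previously introduced auxiliary matrices, the paper does a two-stage construction. First it proves by induction the block decompositions \eqref{eq:cond-W1}--\eqref{eq:cond-W2} in which only the outer block $\Pi_{F_{1:t}}^\perp A \Pi_{X_{0:t-1}}^\perp$ (resp. $\Pi_{X_{0:t}}^\perp$) is replaced by the same block of an auxiliary matrix $\tW_{t+1}$ (resp. $\bar W_{t+1}$) independent of $\cF_{t,t}$ (resp. $\cF_{t,t+1}$). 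Second it \emph{pads} $\tW_{t+1}, \bar W_{t+1}$ on the complementary blocks with freshly drawn independent copies $\tilde M_{t+1}, \bar M_{t+1}$ to form $\tA_{t+1}, \bA_{t+1}$ (Eqs. \eqref{eq:8}--\eqref{eq:9}), and then runs a \emph{second} induction using the measurability facts $\tA_i \in \sigma(\cF_{i-1,i} \cup \sigma(\tilde M_i))$ and $\bA_i \in \sigma(\cF_{i,i} \cup \sigma(\bar M_i))$ to establish that the whole collection $\{\tA_t, \bA_t\}$ is i.i.d. Without this explicit augmentation and the second induction, the independence claims of the lemma do not follow from the one-step conditioning identity alone, so this part of the argument must be supplied.
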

The proof of Lemma~\ref{lemma:conditioning} is provided in Appendix~\ref{sec:pf_cond_lem}.

\section{Proof overview}\label{sec:overview}
We give in this section a formal statement of our main theorem, and provide an overview of its proof. We postpone the full version of the proof to Appendix~\ref{sec:dynamics_TPI}.
\begin{thm}\label{thm:negative}
	Assume $d,k, T \to \infty$ simultaneously satisfying $d^{3/2} \ll k \ll d^2$, and that
	\begin{equation*}
		T = T(k, d) \ll (\log k)^{-1/3} \cdot \frac{k^{2/3}}{d}.
	\end{equation*} 
    Then for any $\veps > 0$, with probability $1 - o_d(1)$, the following result holds for all $0 \leq t \leq T$:
	\begin{align}\label{eq:main}
		\frac{1}{\sqrt{d}} \max_{i \in [k]}| \langle \tx_t, a_i \rangle | \leq \veps. 
	\end{align}
	That is to say, tensor power iteration fails to identify any true component in $T(k, d)$ steps.
\end{thm}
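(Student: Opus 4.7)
The plan is to prove \cref{thm:negative} by induction on $t$, relying on \cref{lemma:conditioning} to give an exact decomposition of each new iterate. I will maintain a joint high-probability inductive hypothesis that captures the full geometric state of the iteration at time $t$: \textbf{(H1)} the target bound $\|y_{s+1}\|_\infty \leq \varepsilon\sqrt{d}$ for all $s \leq t$; \textbf{(H2)} quantitative non-degeneracy of the orthogonal complements, $\|\tilde{x}_s^\perp\|_2^2 \asymp d$ and $\|f_s^\perp\|_2^2 \asymp k$, so that the conditioning decomposition remains well-posed; and \textbf{(H3)} Gaussian-like norm estimates such as $\|y_s\|_2^2 \asymp k$, $\|x_s\|_2^2 \asymp k$, and $\|f_s\|_\infty \lesssim (\log k)^{3/2}$. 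The base case $t = 0$ is essentially free: because $\tilde{x}_0 \perp A$, the vector $y_1 = A\tilde{x}_0$ has i.i.d.\ $\mathcal{N}(0, 1)$ entries, so $\|y_1\|_\infty = O(\sqrt{\log k}) \ll \varepsilon\sqrt{d}$ in the regime $\log k \ll d$, and the auxiliary conditions follow from standard concentration for random Gaussian vectors.

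For the inductive step, \cref{lemma:conditioning} yields
\begin{equation*}
y_{t+1} \;=\; \sum_{i=1}^{t+1} c_{i,t}\, g_i \;+\; \sum_{i=2}^{t+1} c_{i,t}\, \kappa_i\, f_{i-1}^\perp,
\end{equation*}
where $c_{i,t} := \langle \tilde{x}_{i-1}^\perp, \tilde{x}_t\rangle/\|\tilde{x}_{i-1}^\perp\|_2^2$ (setting $c_{t+1,t} := 1$), $\kappa_i := \langle x_{i-1}, \tilde{x}_{i-1}^\perp\rangle/\|f_{i-1}^\perp\|_2^2 \asymp \sqrt{d/k}$ under (H2)--(H3), and $g_i := \Pi_{F_{1:i-1}}^\perp \tilde{A}_i \tilde{x}_{i-1}^\perp$ is a fresh Gaussian independent of all previous conditioning and of the earlier resampled copies $\tilde{A}_1, \ldots, \tilde{A}_{i-1}$. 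The first ``fresh-noise'' sum is straightforward to control: by conditional independence of the $g_i$'s across $i$ combined with the Parseval-type identity $\sum_i c_{i,t}^2 \|\tilde{x}_{i-1}^\perp\|_2^2 = \|\tilde{x}_t\|_2^2 = d$ (which follows from the expansion $\tilde{x}_t = \sum_i c_{i,t}\tilde{x}_{i-1}^\perp$ in the orthogonal basis $\{\tilde{x}_{i-1}^\perp\}$), standard Gaussian maxima give $\|\sum_i c_{i,t} g_i\|_\infty \lesssim \sqrt{\log k}$ with high probability, uniformly in $t$.

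The heart of the proof is the ``old-signal'' sum $\sum_i c_{i,t}\kappa_i f_{i-1}^\perp$. Individually each $\|f_{i-1}^\perp\|_\infty \lesssim (\log k)^{3/2}$ by (H3), but the $f_{i-1}^\perp$'s are tightly correlated across $i$: at every fixed coordinate $j$, the entries $(f_{i-1})_j = \langle a_j, \tilde{x}_{i-2}\rangle^3$ are deterministic functions of the same fixed Gaussian $a_j$. Consequently their weighted sum may concentrate anomalously at particular coordinates rather than cancelling as a sum of independent noises would. The approach is to further condition on the $a_j$'s and carefully bound the moments of the resulting one-dimensional sums $\sum_i c_{i,t}\kappa_i (f_{i-1}^\perp)_j$, then union bound over the $k$ coordinates. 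Requiring the cumulative old-signal contribution to remain below $\varepsilon\sqrt{d}$ across all $T$ iterations yields precisely the constraint $T \ll k^{2/3}/(d(\log k)^{1/3})$. The main obstacle is exactly this coordinatewise bookkeeping over polynomially many iterations, compounded by the need to propagate (H2)--(H3) in lockstep with (H1): any loss of control over $\|\tilde{x}_s^\perp\|_2$ or $\|f_s^\perp\|_2$ would blow up the denominators in \cref{lemma:conditioning} and derail the entire decomposition, so the auxiliary non-degeneracy conditions must be closed inductively together with the target $\ell^\infty$ bound.
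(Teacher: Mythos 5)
Your decomposition of $y_{t+1}$ via Lemma~\ref{lemma:conditioning} and your treatment of the fresh-noise sum $\sum_i c_{i,t}\,g_i$ (Parseval identity plus Gaussian maxima) line up with the paper. But your handling of the old-signal sum departs from the paper's argument in a way that misses the central mechanism, and this shows up already in your inductive hypotheses. In (H2) you posit $\|\tilde{x}_s^\perp\|_2^2 \asymp d$. This cannot hold for $s\ge 1$: since $x_0$ lies in the column span of $X_{0:s-1}$, we have $\|\tilde{x}_s^\perp\|_2 \le \|\Pi_{x_0}^\perp\tilde{x}_s\|_2$, and the entire theorem hinges on the latter being $\ll\sqrt{d}$ (the iterates are trapped near $x_0$). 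Tracking the paper's estimates, $\|f_s^\perp\|_2^2\asymp k$, $\|x_s^\perp\|_2^2\asymp k$, $\|x_s\|_2^2\asymp k^2/d$ (not $\asymp k$ as in your (H3)), hence $\|\tilde{x}_s^\perp\|_2^2 = d\,\|x_s^\perp\|_2^2/\|x_s\|_2^2 \asymp d^2/k \ll d$, and correspondingly $\kappa_i\asymp d/k$, not $\sqrt{d/k}$. Your stated hypotheses are thus quantitatively inconsistent with the regime the proof must establish, and closing the induction with them would fail.

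More fundamentally, the paper does not control the old-signal sum coordinatewise with a union bound over the $k$ coordinates. Instead, it observes that the $f_{i}^\perp$'s are mutually orthogonal and that $\|x_i^\perp\|_2/\|f_i^\perp\|_2\approx 1$, so that (Lemma~\ref{lemma:norm-of-vt}) $\|v_{t+1}\|_2^2 \approx \|\Pi_{x_0}^\perp(\tilde{x}_t)\|_2^2 =: P_t$, and then controls $\|v_{t+1}\|_4\le\|v_{t+1}\|_2$ by controlling $P_t$. The bound on $P_t$ comes from a bootstrap: using Lemma~\ref{lemma:conditioning} again (this time on $x_t$), the paper decomposes $\|x_t\|_2^2$ and $\|\Pi_{x_0}^\perp x_t\|_2^2$ into terms $\mathsf{I},\mathsf{I}',\mathsf{II},\mathsf{III}$, estimates each (Lemmas~\ref{lemma:term-II}--\ref{lemma:term-I}), and derives the recurrence $P_t/Q_t\le U_{k,d,T}(P_{t-1}/Q_{t-1})$ where $Q_t=\|\Pi_{x_0}\tilde{x}_t\|_2^2=d-P_t$. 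The constraint $T\ll (\log k)^{-1/3}k^{2/3}/d$ arises precisely from the multiplicative growth rate of $U_{k,d,T}$ near $x=0$, not from any coordinatewise moment computation. Your plan of conditioning on the $a_j$'s and union-bounding over coordinates has no mechanism for producing this recursion, and the key quantity it would need as an input---that the coefficients $c_{i,t}$ for $i\ge 2$ are small because $\tilde{x}_t$ remains aligned with $x_0$---is exactly the self-referential fact that requires the bootstrap in the first place.
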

%
%
\begin{rem}\label{rem:negative}
    The above conclusion extends to the case of arbitrary even-order tensors: For general $m \ge 2$ and $a_i \iidsim \normal (0, I_d/d)$, let $\bT = \sum_{i=1}^{k} a_i^{\otimes 2m}$. Tensor power iteration for decomposing $\bT$ is defined as the following iteration:
    \begin{equation*}
        x_t = A^\top (A \tilde{x}_{t-1})^{2m - 1}, \ \tilde{x}_t = \sqrt{d} \cdot \frac{x_t}{\norm{x_t}_2}.
    \end{equation*}
    Similar to the proof of  Theorem~\ref{thm:negative}, we obtain that under the condition $d^{(2m-1)/m} \ll k \ll d^m$, as long as tensor power iteration starts from a random initialization and 
    \begin{equation*}
        T \ll (\log k)^{-1/3} \cdot \min \left\{ k^{1/2m}, \ \frac{k^{m/(2m-1)}}{d} \right\},
    \end{equation*}
    then for all fixed $\veps > 0$, with probability $1 - o_d(1)$, Eq.~\eqref{eq:main} holds for all $0 \le t \le T$. The proof of this claim is similar to the proof of \cref{thm:negative}, and we skip it here for the sake of simplicity. 
\end{rem}

\begin{rem}
	Using a similar argument, we can show that projected gradient descent requires at least polynomially many steps to converge to any tensor component as well.
\end{rem}
%
%
We also state here a formal version of Theorem~\ref{thm:informal2}, whose proof is provided in Appendix~\ref{sec:pf_increasing}:
\begin{thm}\label{thm:increasing}
	Recall that $\cS(x) = \sum_{i = 1}^k \langle a_i, x\rangle^4$ for $x \in \S^{d-1} (\sqrt{d})$. Assume $d,k \to \infty$ simultaneously satisfying $d^{3/2} \ll k \ll d^2$. Then for any $T_c \in \NN_+$ that does not grow with $k$ and $d$, the following holds with high probability: For all $t = 0,1,\cdots, T_c$, we have
	\begin{align*}
		S(\tx_t) = 3k + 20td + o_P(d).
	\end{align*}
	As a consequence, with probability $1 - o_d(1)$ we have $\cS(\tx_{t + 1}) > \cS(\tx_t)$ for all $0 \le t \le T_c - 1$.
\end{thm}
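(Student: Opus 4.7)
The plan is by induction on $t \in \{0, 1, \ldots, T_c\}$. For the base case $t=0$: since the random initialization $\tx_0 \in \S^{d-1}(\sqrt{d})$ is independent of $A$, the vector $y_1 = A\tx_0$ has iid $\normal(0,1)$ entries, and therefore $\cS(\tx_0) = \|y_1\|_4^4$ is a sum of $k$ iid fourth powers of standard Gaussians. By standard concentration this equals $3k + O_P(\sqrt{k})$, and under $k \ll d^2$ we have $\sqrt{k} \ll d$, giving $\cS(\tx_0) = 3k + o_P(d)$, which matches $3k + 20 \cdot 0 \cdot d + o_P(d)$.

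For the inductive step, assume the expansion has been established for all $s \le t$. I apply \cref{lemma:conditioning} (combining \eqref{eq:cond-yt} with \eqref{eq:ht+1}) to decompose
\[
y_{t+2} \;=\; \sum_{i=1}^{t+1} \alpha_i\, h_i \;+\; g_{t+2},
\]
where $\alpha_i = \langle \tx_{i-1}^{\perp}, \tx_{t+1} \rangle / \|\tx_{i-1}^{\perp}\|_2^2$ are scalar coefficients measurable with respect to $\cF_{t+1, t+1}$, each $h_i$ with $i \le t+1$ is built from the past iterations, and $g_{t+2} = \Pi_{F_{1:t+1}}^{\perp} \wtilde{A}_{t+2}\, \tx_{t+1}^{\perp}$ is a fresh Gaussian vector in $\R^k$, conditionally independent of $\cF_{t+1, t+1}$, with covariance $(\|\tx_{t+1}^{\perp}\|_2^2/d)(I_k - \Pi_{F_{1:t+1}})$. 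Expanding $\cS(\tx_{t+1}) = \sum_{j=1}^k y_{t+2, j}^4$ via this decomposition and integrating out $g_{t+2}$ reduces the problem to an expression in finitely many scalar statistics of the earlier iterates — overlaps $\langle \tx_r^{\perp}, \tx_s\rangle$, $\langle x_r, \tx_s^{\perp}\rangle$ and norms $\|f_s^\perp\|_2^2$, $\|\tx_s^{\perp}\|_2^2$. Each of these scalars concentrates around a deterministic limit that I track via a finite-dimensional state evolution recursion (only a constant number of quantities, since $T_c$ is fixed); matching coefficients gives the asymptotic $3k + 20(t+1)d + o_P(d)$.

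The main obstacle is the bookkeeping in this expansion: the conditional fourth moment of $(\sum_i \alpha_i h_i + g_{t+2})_j$ generates many cross terms, and each one must either be identified as contributing to the deterministic value $3k + 20(t+1)d$ or be shown to have typical size $o_P(d)$. The regime assumption is essential in two complementary ways: $k \ll d^2$ lets us absorb Gaussian-quartic fluctuations of order $O_P(\sqrt{k})$ into the $o_P(d)$ remainder (indeed $\sqrt{k} \ll d$), while $k \gg d^{3/2}$ keeps the state evolution non-degenerate — in particular it guarantees that $\|\tx_{t+1}^{\perp}\|_2^2$ remains close to $d$, so the fresh Gaussian $g_{t+2}$ dominates the decomposition of $y_{t+2}$ and the past-dependent contributions do not obscure the leading-order expansion. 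Once the asymptotic is established for each $t \le T_c$, the increasing-objective corollary is immediate: $\cS(\tx_{t+1}) - \cS(\tx_t) = 20d + o_P(d)$, which is positive with probability $1 - o_d(1)$.
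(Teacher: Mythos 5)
Your base case is correct, but the inductive step rests on a misreading of the geometry that sends the whole argument in the wrong direction. You assert that $k \gg d^{3/2}$ "guarantees that $\|\tx_{t+1}^{\perp}\|_2^2$ remains close to $d$, so the fresh Gaussian $g_{t+2}$ dominates the decomposition of $y_{t+2}$." This is exactly backwards. For constantly many iterations the power iterates barely move from the initialization: the paper's proof of Lemma~\ref{lemma:increasing} shows $\alpha_{1,t} = \langle \tx_0, \tx_{t-1}\rangle/d = 1 + o_P(1)$, equivalently $\|\Pi_{x_0}^{\perp}\tx_{t-1}\|_2^2 = o_P(d)$. Since $\tx_s^{\perp} = \Pi_{X_{0:s-1}}^{\perp}(\tx_s)$ satisfies $\|\tx_s^{\perp}\|_2 \le \|\Pi_{x_0}^{\perp}\tx_s\|_2 = o_P(\sqrt{d})$ for every $s \ge 1$, the fresh Gaussian $g_{t+2}$ has $\ell^2$-norm $o_P(\sqrt{k})$ and is \emph{negligible} compared with $\|y_{t+2}\|_2 \asymp \sqrt{k}$. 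What actually dominates is $\alpha_{1,t+2}\,h_1 \approx z_1 \sim \normal(0,I_k)$, the Gaussian carried over from the very first step.

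This is not a cosmetic error: it erases the mechanism that produces the $20td$ term. If the fresh Gaussian dominated and you "integrated it out," you would simply get $\cS(\tx_t) = 3k + o_P(d)$ for every $t$, which would make the increasing-objective corollary vacuous. In the paper's argument (Lemma~\ref{lemma:increasing}), the $20td$ comes precisely from the past-dependent part of the decomposition, $y_t = w_t + \sum_{j=1}^{t-1}\zeta_{j,t}f_j + \veps_t$, where the key facts are $\zeta_{j,t} = \tfrac{d}{3k}(1 + o_P(1))$, $f_j \approx z_1^3$ (because $y_j \approx z_1$), and $\langle w_t^3, f_j\rangle = 15k + o_P(k)$; each of the $t-1$ cross terms $4\zeta_{j,t}\langle w_t^3, f_j\rangle$ then contributes $4 \cdot \tfrac{d}{3k}\cdot 15k = 20d$. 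Establishing these leading-order coefficients is the real work, and it cannot be replaced by the generic appeal to ``state evolution'' and ``matching coefficients'' in your sketch, because you have not identified which scalar statistics carry the $O(d)$ signal. (The actual role of $k \gg d^{3/2}$ is to force higher-order remainders like $\|\nu_t\|_2^4 = O_P(d^4/k^2)$ and $\|\nu_t\|_2^6 = O_P(d^6/k^3)$ below the $o_P(d)$ and $o_P(k)$ thresholds, not to keep $\|\tx_{t+1}^{\perp}\|_2^2$ near $d$.) As a smaller issue, your decomposition of $y_{t+2}$ also silently drops the term $f_{t+1}^{\perp}\,\langle x_{t+1}, \tx_{t+1}^{\perp}\rangle/\|f_{t+1}^{\perp}\|_2^2$, which is the other half of $h_{t+2}$ and is not part of $g_{t+2}$.
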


\subsection{Proof sketch of Theorem~\ref{thm:negative}}
By definition, we have $y_t = A \tilde{x}_{t-1}$. Therefore, in order to prove Theorem~\ref{thm:negative}, it suffices to show with probability $1 - o_d(1)$, $\norm{y_{t}}_{\infty} \le \veps \sqrt{d}$ for all $0 \leq t \leq T$. We will prove a stronger version of this result, namely $\norm{y_t}_4 \le \veps \sqrt{d}$. To this end, we use Eq.~\eqref{eq:cond-yt} in Lemma~\ref{lemma:conditioning} and decompose $y_t$ as $y_t = w_t - \eta_t + v_t$, where
\begin{align*}
    & w_t = \sum_{i = 1}^t \tA_i \txp_{i - 1} \cdot \frac{\langle \txp_{i - 1}, \tx_{t - 1} \rangle}{\|\txp_{i - 1}\|_2^2}, \\
	& \eta_t = \sum_{i = 1}^t \sum_{j = 1}^{i - 1} \fp_j \cdot \frac{\langle \fp_j, \tA_i \txp_{i - 1} \rangle}{\|\fp_j\|_2^2} \cdot \frac{\langle \txp_{i - 1}, \tx_{t - 1} \rangle}{\|\txp_{i - 1}\|_2^2}, \\
	& v_t = \sum_{i = 2}^t \fp_{i - 1} \cdot \frac{\langle x_{i - 1}, \txp_{i - 1}\rangle}{\|\fp_{i - 1}\|_2^2} \cdot \frac{\langle \txp_{i - 1}, \tx_{t - 1} \rangle}{\|\txp_{i - 1}\|_2^2}.
\end{align*}
By triangle inequality, we have
\begin{equation*}
    \norm{y_t}_4 \le \norm{w_t}_4 + \norm{\eta_t}_4 + \norm{v_t}_4 \le \norm{w_t}_4 + \norm{\eta_t}_2 + \norm{v_t}_2.
\end{equation*}
Then, it suffices to upper bound $\norm{w_t}_4$, $\norm{\eta_t}_2$, and $\norm{v_t}_2$ respectively.

\vspace{1em}
\noindent {\bf Upper bounding $\norm{w_t}_4$ and $\norm{\eta_t}_2$.} For future convenience, define
\begin{equation*}
		z_i = \tA_i \txp_{i - 1} \cdot \frac{\sqrt{d}}{\|\txp_{i - 1}\|_2}, \qquad \alpha_{i, t} = \frac{\langle \txp_{i - 1}, \tx_{t - 1} \rangle}{\sqrt{d} \|\txp_{i - 1}\|_2}. 
\end{equation*}
Then, we can write $w_t = \sum_{i=1}^{t} \alpha_{i, t} z_i$ and $\eta_t = \sum_{i=2}^{t} \alpha_{i, t} \Pi_{F_{1:i-1}} z_i$. Moreover, one can show that $z_i \iidsim \normal (0, I_k)$ and $\sum_{i=1}^{t} \alpha_{i, t}^2 = 1$. Using standard probability tools, we obtain that there exists a numerical constant $C>0$, such that the following result holds with high probability (see Lemma~\ref{lemma:wt} and~\ref{lemma:etat} for more details):
\begin{equation*}
    \norm{w_t}_4 \le C k^{1/4} \ll \sqrt{d}, \qquad \norm{\eta_t}_2 \le C T^{3/4} (\log k)^{1/4} \ll \sqrt{d} \qquad \forall t \in [T].
\end{equation*}

\vspace{1em}
\noindent {\bf Upper bounding $\norm{v_t}_2$.} First, using Lemma~\ref{lemma:norm-of-vt} in the appendix, we can show that $\norm{v_{t+1}}_2^2$ is very close to $\norm{\Pi_{x_0}^\perp (\tilde{x}_t)}_2^2$. Therefore, it suffices to prove that
\begin{equation*}
    \norm{\Pi_{x_0}^\perp (\tilde{x}_t)}_2^2 \ll d = \norm{\tilde{x}_t}_2^2, \qquad \forall t \in [T],
\end{equation*}
i.e., tensor power iteration stays close to its initialization in the first $T$ steps. To this end, we express $\norm{\Pi_{x_0}^\perp (\tilde{x}_t)}_2^2$ as $d \cdot \norm{\Pi_{x_0}^\perp (x_t)}_2^2 / \norm{x_t}_2^2$, and estimate $\norm{\Pi_{x_0}^\perp (x_t)}_2^2$ and $\norm{x_t}_2^2$, respectively. Leveraging Lemma~\ref{lemma:conditioning}, we can express $x_t$ in terms of the previous iterations, and finally establish the following recurrence relation (see Lemma~\ref{lemma:term-II}-\ref{lemma:term-I} for details):
\begin{equation*}
    \frac{\norm{\Pi_{x_0}^\perp (x_t)}_2^2}{\norm{x_t}_2^2} \le \frac{\norm{\Pi_{x_0}^\perp (x_t)}_2^2}{\norm{\Pi_{x_0} (x_t)}_2^2} \le U_{k, d, T} \left( \frac{\norm{\Pi_{x_0}^\perp (x_{t-1})}_2^2}{\norm{\Pi_{x_0} (x_{t-1})}_2^2} \right),
\end{equation*}
where $U_{k, d, T}$ is an increasing function. See the proof of Theorem~\ref{thm:negative} for its explicit definition.

\vspace{1em}
\noindent {\bf Analysis of $U_{k, d, T}$.} Note that the above inequality and the properties of $U_{k, d, T}$ imply:
\begin{equation*}
    \frac{\norm{\Pi_{x_0}^\perp (x_t)}_2^2}{\norm{\Pi_{x_0} (x_t)}_2^2} \le U_{k, d, T}^t \left( \frac{\norm{\Pi_{x_0}^\perp (x_0)}_2^2}{\norm{\Pi_{x_0} (x_0)}_2^2} \right) = U_{k, d, T}^t \left( 0 \right), \quad \text{for all $t \in [T]$}.
\end{equation*}
Hence, it suffices to show that $U_{k, d, T}^t \left( 0 \right) \ll 1$ for all $t \in [T]$. This is achieved by establishing that
\begin{equation}\label{eq:up_bd_fct_U}
    U_{k, d, T} (x) \le \left( 1 + \frac{1}{2 T} \right) x, \qquad \forall x \in [\veps_{k, d}, 2 \veps_{k, d}],
\end{equation}
where $\veps_{k, d} \ll 1$ explicitly depends on $k$ and $d$. It finally follows that
\begin{equation*}
    U_{k, d, T}^t \left( 0 \right) \le U_{k, d, T}^t \left( \veps_{k, d} \right) \le \left( 1 + \frac{1}{2T} \right)^t \cdot \veps_{k, d} \le \sqrt{e} \veps_{k, d} \le 2 \veps_{k, d},
\end{equation*}
where the last inequality guarantees that Eq.~\eqref{eq:up_bd_fct_U} can be repeatedly applied to control $U_{k, d, T}^t \left( \veps_{k, d} \right)$. This completes the proof sketch of Theorem~\ref{thm:negative}.

\section{Numerical experiments}\label{sec:numerical}

We present in this section several numerical experiments that support our theory. In the following experiments, we generate random tensors, and run tensor power method \eqref{eq:PI} starting from random initialization over the unit sphere.
If the iterates converge to one of the components (up to signs), then we call it a success. 

We investigate the success region for tensor power method in our first experiment. In this experiment, we run power method for 1000 iterations, and repeat this procedure for 1000 times independently for each pair of $(k, d)$. We record the corresponding empirical success rates, and plot these success rates for different $(k, d)$ as a heat map shown by \cref{fig:success_prob}. From the plot, we see that the success rate undergoes a sharp phase transition around the boundary $\log k = 2 \log d$. Our experiment suggests  that for $k \ll d^2$, tensor power iteration succeeds with high probability, and when $k \gg d^2$ it fails with high probability. This matches the success region of SOS-based methods, which is also the conjectured region for possible recovery with polynomial-time algorithms.
\begin{figure}[h!]
	\centering
	\includegraphics[width = 0.6\linewidth]{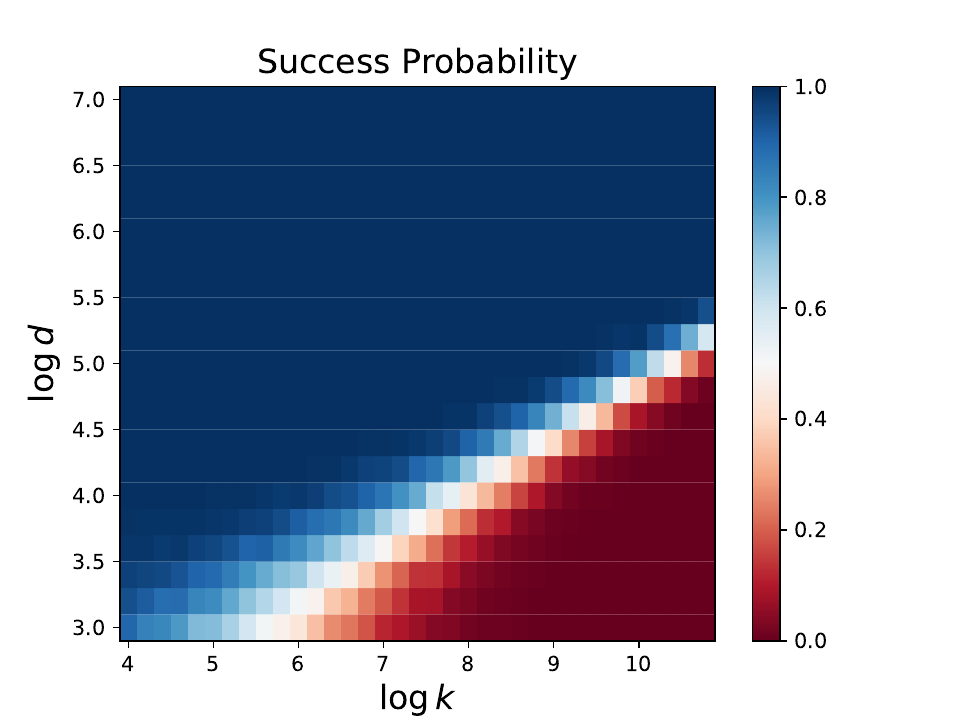}
	\caption{Success probability of tensor power iteration for varying $k$ and $d$. }
	\label{fig:success_prob}
\end{figure}

In our second experiment, we fix $\log k / \log d = 1.8$, and varies $d$ and $T$. Again we repeat the experiment 1000 times for each pair of $(d, T)$, and record the estimated success probability after $T$ steps of power iteration. We present the outcomes in \cref{fig:success_prob2}. The heatmap shows that polynomially many steps are necessary for tensor power iteration to converge, since $\log T$ scales linearly with $\log d$ around the success/failure boundary. This observation  validates  Theorem~\ref{thm:negative}.
\begin{figure}[h!]
	\centering
	\includegraphics[width = 0.6\linewidth]{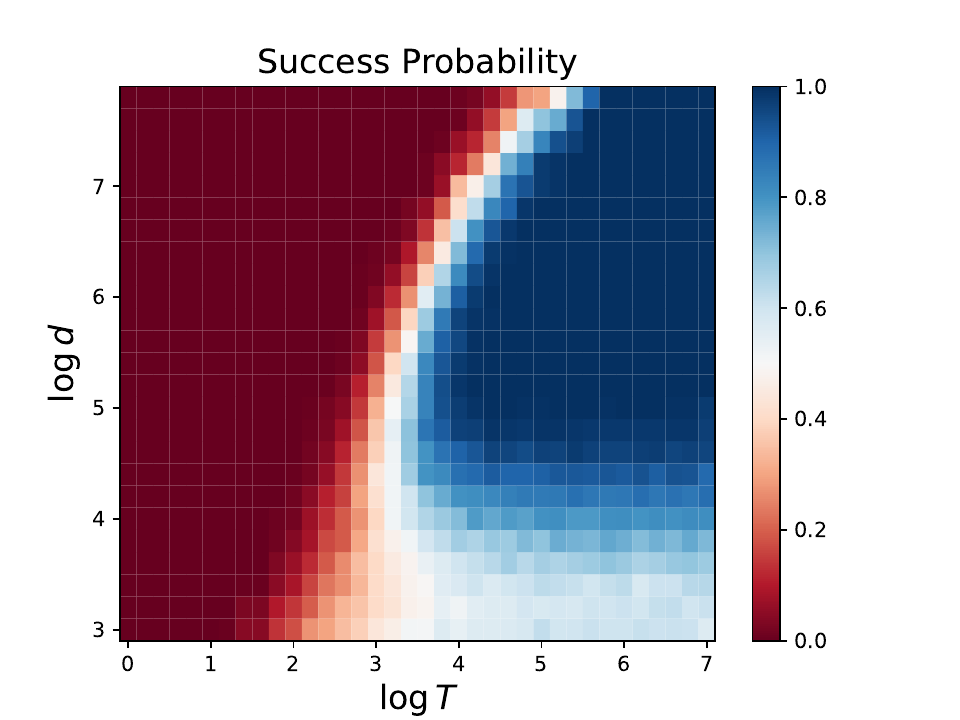}
	\caption{Success probability of tensor power iteration for varying $d$ and $T$. }
	\label{fig:success_prob2}
\end{figure}

We illustrate \cref{thm:increasing} in our third experiment. In this experiment, we record the values of the score function $\mathcal{S}$ for the first few iterations along the power iteration path, and compare them with the corresponding theoretical predictions given by \cref{thm:increasing}. We repeat such procedure for 1000 times independently, and present the outcomes in \cref{fig:increasing}, which justifies the correctness of the theorem.

\begin{figure}[h!]
\minipage{0.35\textwidth}
  \includegraphics[width=\linewidth]{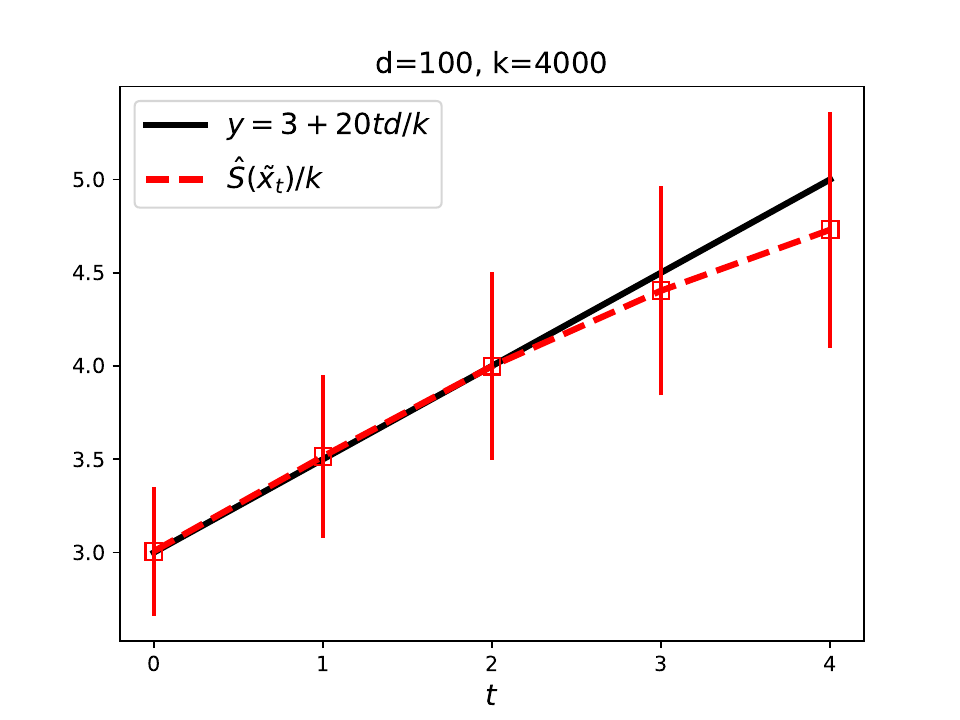}
\endminipage \hspace{-1cm} \hfill
\minipage{0.35\textwidth}
  \includegraphics[width=\linewidth]{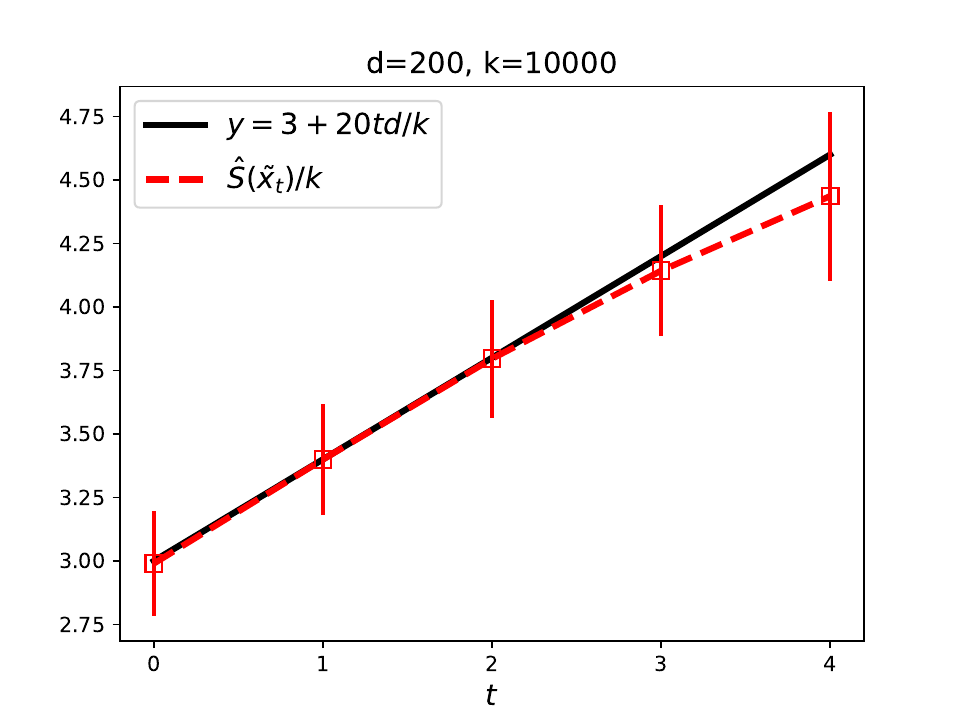}
\endminipage \hspace{-1cm} \hfill
\minipage{0.35\textwidth}%
  \includegraphics[width=\linewidth]{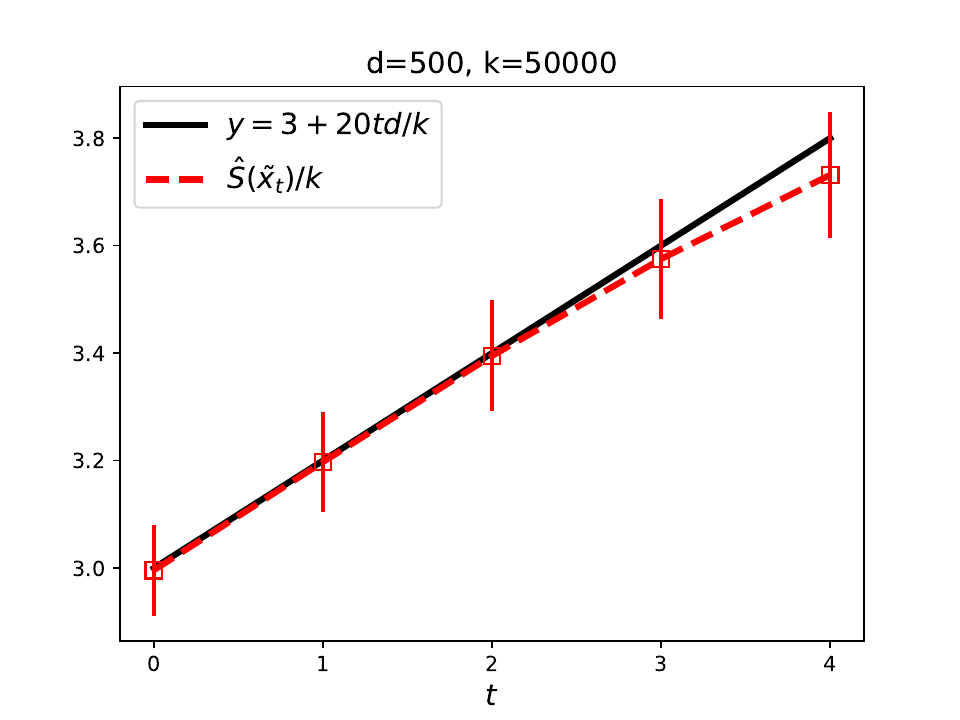}
\endminipage
\caption{Theoretical predictions of $\mathcal{S}$ along the power iteration paths versus the corresponding empirical values. Outcomes are averaged over 1000 independent experiments. The error bars reflect the intervals determined by two times the empirical standard deviation. }
\label{fig:increasing}
\end{figure}

\section{Conclusion}\label{sec:conclusion}
We analyze the dynamics of randomly initialized tensor power iteration in the overcomplete regime, using the Gaussian conditioning technique. We prove that it takes polynomially many iterates for the power method to find a true component of a random even-order tensor, thus refuting the previous conjecture that tensor power iteration converges in logarithmically many steps. We also establish that along the power iteration path, a popular polynomial objective function for tensor decomposition is strictly increasing for finitely many steps. Extensive numerical studies verify our theoretical results.

Our work leads to a number of fascinating open problems. First, it would be interesting to understand whether the power method indeed converges to a tensor component in polynomially many steps, within the regime where SOS-based methods succeed. One possible direction is to extend our analysis in Theorem~\ref{thm:increasing} to polynomially many iterations. Another appealing direction is to generalize our results to the case of odd-order tensors, for which the power iterates will no longer stay in a small neighborhood of the initialization.


\section*{Acknowledgements}
The authors would like to thank Tselil Schramm for suggesting this topic, as well as for many helpful conversations regarding the technical contents and the presentation of this work.
Y.W. and K.Z. were supported by the NSF through award DMS-2031883 and the Simons Foundation through Award 814639 for the Collaboration on the Theoretical Foundations of Deep Learning and by the NSF grant CCF-2006489.

\newpage
\bibliographystyle{alpha}
\bibliography{TPI.bib}

\newpage
\appendix

\section{Analysis of tensor power iteration: Proof of Theorem~\ref{thm:negative}}\label{sec:dynamics_TPI}

This section will de devoted to proving \cref{thm:negative}. For $t \in \NN_+$, we define
\begin{align*}
	& g_t = \sum_{i = 1}^{t} \Pi_{F_{1:i - 1}}^{\perp} \tA_i \txp_{i - 1} \cdot \frac{\langle \txp_{i - 1}, \tx_{t - 1} \rangle}{\|\txp_{i - 1}\|_2^2}, \\
	& v_t = \sum_{i = 2}^t \fp_{i - 1} \cdot \frac{\langle x_{i - 1}, \txp_{i - 1}\rangle}{\|\fp_{i - 1}\|_2^2} \cdot \frac{\langle \txp_{i - 1}, \tx_{t - 1} \rangle}{\|\txp_{i - 1}\|_2^2}.
\end{align*}
By \cref{lemma:conditioning} we have $y_t = g_t + v_t$. Applying Minkowski's inequality and power mean inequality, we deduce that $\|y_t\|_4^4 \leq 8(\|g_t\|_4^4 + \|v_t\|_4^4)$. Notice that if \cref{eq:main} does not hold, since $y_t = A\tx_{t - 1}$, then there exists $1 \leq t \leq T$ such that $\|y_t\|_4^4 \geq \veps^4 d^2$. As a result, in order to prove \cref{thm:negative}, it suffices to show with high probability, $\|y_t\|_4^4 < \veps^4 d^2$ for all $t \in [T]$. This further reduces to upper bounding $\|g_t\|_4^4$ and $\|v_t\|_4^4$. 

We first provide an upper bound for $\|g_t\|_4^4$. To this end, we define  
\begin{align}
	& w_t = \sum_{i = 1}^t \tA_i \txp_{i - 1} \cdot \frac{\langle \txp_{i - 1}, \tx_{t - 1} \rangle}{\|\txp_{i - 1}\|_2^2}, \label{eq:wt-def}\\
	& \eta_t = \sum_{i = 1}^t \sum_{j = 1}^{i - 1} \fp_j \cdot \frac{\langle \fp_j, \tA_i \txp_{i - 1} \rangle}{\|\fp_j\|_2^2} \cdot \frac{\langle \txp_{i - 1}, \tx_{t - 1} \rangle}{\|\txp_{i - 1}\|_2^2}. \label{eq:etat-def}
\end{align}
We immediately see that $g_t = w_t - \eta_t$, thus upper bounding $\|g_t\|_4^4$ can be achieved via upper bounding $\|w_t\|_4^4$ and $\|\eta_t\|_4^4$, respectively. As $\tA_i \perp \cF_{i-1,i-1}$, intuitively speaking, this suggests that $w_t$ behaves like a $k$-dimensional random vector with $\iid$ standard Gaussian entries. Therefore, with high probability $w_t$ has $\ell^4$-norm of order $k$. On the other hand, observe that $\eta_t$ is the sum of projections of random vectors onto low-dimensional subspaces, which only accounts for a small proportion of the total variation. As a result, we expect $\|\eta_t\|_4^4$ to be small.  

To make these heuristic arguments concrete, we establish the following two lemmas:
\begin{lem}\label{lemma:wt}
	Assume $T \ll k^{1/2}$. Then there exists a numerical constant $C > 0$, such that with probability $1 - o_d(1)$, for all $1 \leq t \leq T$ we have 
	\begin{align*}
		\|w_t\|_4^4 \leq Ck. 
	\end{align*}
\end{lem}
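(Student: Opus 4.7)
The plan is to use the reformulation given right below the statement, namely $w_t = \sum_{i=1}^{t} \alpha_{i,t}\, z_i$ with $z_i := \tilde{A}_i \tilde{x}_{i-1}^\perp \sqrt{d}/\|\tilde{x}_{i-1}^\perp\|_2$ and $\alpha_{i,t} := \langle \tilde{x}_{i-1}^\perp, \tilde{x}_{t-1}\rangle/(\sqrt{d}\,\|\tilde{x}_{i-1}^\perp\|_2)$. First I would unpack the two structural properties quoted in the excerpt using Lemma~\ref{lemma:conditioning}: (i) the residuals $\tilde{x}_{i-1}^\perp$ are pairwise orthogonal by construction (since $\tilde x_j^\perp \perp x_0,\dots,x_{j-1}$, and $\tilde x_i, \tilde x_i^\perp$ lie in $\mathrm{span}(x_0,\dots,x_i)$), and a short induction shows that $\tilde{x}_{t-1}$ lies in their span, so combined with $\|\tilde{x}_{t-1}\|_2=\sqrt{d}$ the Pythagorean identity gives $\sum_{i=1}^{t}\alpha_{i,t}^2 = 1$; (ii) since $\tilde{A}_i \overset{d}{=} A$ has i.i.d.\ $\normal(0, I_d/d)$ rows and is independent of the $\sigma$-algebra $\cF_{i-1,i-1}$ (which contains $\tilde x_{i-1}^\perp$ and all earlier $z_j$'s), the conditional distribution of $z_i$ given the past is $\normal(0, I_k)$, and iterating this yields joint i.i.d.\ Gaussian structure for $z_1, z_2, \dots$. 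Thus $w_t = Z_t \alpha_t$, where $Z_t := [z_1\mid\cdots\mid z_t] \in \R^{k\times t}$ has i.i.d.\ $\normal(0,1)$ entries and $\alpha_t \in \S^{t-1}$ (possibly depending on $Z_t$).

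The heart of the argument is a uniform bound $\sup_{\alpha\in\S^{t-1}} \|Z_t\alpha\|_4^4 \le Ck$ with high probability, which then applies to the specific $\alpha_t$ irrespective of the coupling. The plan is to combine a pointwise sub-Weibull concentration bound with a simple $\tfrac{1}{2}$-net argument. At any fixed $\alpha'\in\S^{t-1}$, $Z_t\alpha'\sim\normal(0,I_k)$, so $\|Z_t\alpha'\|_4^4 = \sum_{j=1}^{k} (Z_t\alpha')_j^4$ is a sum of $k$ i.i.d.\ sub-Weibull$(1/2)$ random variables with mean $3$ and bounded Orlicz norm; a Bernstein-type inequality for such sums gives $\P(\|Z_t\alpha'\|_4^4 > 4k) \le e^{-c\sqrt{k}}$. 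Taking $\mathcal{N}\subset\S^{t-1}$ to be a $\tfrac{1}{2}$-net with $|\mathcal{N}|\le 6^t$ and applying a union bound yields $\sup_{\alpha'\in\mathcal{N}}\|Z_t\alpha'\|_4^4\le 4k$ with probability at least $1 - 6^t e^{-c\sqrt{k}}$. To pass from the net to all of $\S^{t-1}$ I would use a standard self-bounding trick: writing $L := \sup_{\alpha\in\S^{t-1}}\|Z_t\alpha\|_4$ and choosing, for each $\alpha$, a net point $\alpha'\in\mathcal{N}$ within distance $\tfrac{1}{2}$, linearity of $Z_t$ gives $\|Z_t\alpha\|_4 \le \|Z_t\alpha'\|_4 + \tfrac{1}{2}L$; taking the supremum over $\alpha$ yields $L\le 2\sup_{\alpha'\in\mathcal{N}}\|Z_t\alpha'\|_4$ and hence $L^4 \le 16\cdot 4k = 64k$ on the good event.

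Finally I would take a union bound over $t\in [T]$. Under the hypothesis $T\ll k^{1/2}$ one has $T\log 6 = o(\sqrt{k})$, so for $k$ large each $t$-slice fails with probability at most $6^T e^{-c\sqrt{k}} \le e^{-c\sqrt{k}/2}$, and multiplying by $T\le k^{1/2}$ still gives failure probability $o_d(1)$. The main obstacle worth flagging is making the joint i.i.d.\ $\normal(0,I_k)$ structure of $(z_i)_{i\ge 1}$ fully rigorous, which requires a careful unwinding of the nested sigma-algebras produced by Lemma~\ref{lemma:conditioning}; beyond this, the proof is a fairly standard net-plus-concentration calculation in which the hypothesis $T\ll k^{1/2}$ enters precisely through the comparison between the $6^t$ net cost and the $e^{-c\sqrt{k}}$ pointwise tail.
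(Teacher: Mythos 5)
Your proof is correct and is essentially the same argument as the paper's: decompose $w_t = \sum_{i=1}^{t}\alpha_{i,t}z_i$, observe that the $z_i$ are i.i.d.\ $\normal(0,I_k)$ via the conditioning lemma and that $\sum_i\alpha_{i,t}^2=1$, then run a $\tfrac12$-net argument with a sub-Weibull tail bound for $\|\cdot\|_4^4$ of a standard Gaussian vector, with the hypothesis $T\ll k^{1/2}$ entering exactly where you say (the $6^T$ net cost against the $e^{-c\sqrt{k}}$ pointwise tail). The only cosmetic difference is that the paper takes the sup over $\alpha\in\S^{T-1}$ once (each $\alpha_t$ padded with zeros is a point of $\S^{T-1}$), which absorbs the final union bound over $t\in[T]$ that you perform separately; this does not change the argument.
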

\begin{lem}\label{lemma:etat}
	Assume $T \ll k^{1/2}$. Then with probability $1 - o_d(1)$, for all $1 \leq t \leq T$ we have
	\begin{align*}
		\|\eta_t\|_2 \leq C T^{3/4} (\log k)^{1/4} \frac{\norm{\Pi_{x_0}^{\perp} \tx_{t - 1}}_2}{\sqrt{d}}.
	\end{align*}
\end{lem}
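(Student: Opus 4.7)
My plan is to use Lemma~\ref{lemma:conditioning} to rewrite $\eta_t$ as a weighted sum of projections of conditionally standard Gaussian vectors, then control it via $\chi^2$-concentration. Since $\{\fp_j\}_{j<i}$ is an orthogonal basis of the column space of $F_{1:i-1}$, the inner sum in the definition of $\eta_t$ collapses to $\Pi_{F_{1:i-1}}(\tA_i \txp_{i-1})$. Introducing the auxiliary variables $z_i := \tA_i\txp_{i-1}\sqrt{d}/\|\txp_{i-1}\|_2$ and $\alpha_{i,t} := \langle \txp_{i-1}, \tx_{t-1}\rangle/(\sqrt{d}\|\txp_{i-1}\|_2)$ (as in the proof sketch of Theorem~\ref{thm:negative}), one obtains the compact form
$$
\eta_t \;=\; \sum_{i=2}^t \alpha_{i,t}\,\Pi_{F_{1:i-1}}(z_i),
$$
where the $i=1$ term drops out because $\Pi_{F_{1:0}}=0$.

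Two structural facts then drive the estimate. First, a Parseval identity: expanding $\tx_{t-1}$ in the orthonormal basis $\{\txp_{i-1}/\|\txp_{i-1}\|_2\}_{i=1}^t$ of $\spann\{\tx_0,\dots,\tx_{t-1}\}$ and isolating the $i=1$ contribution $\|\Pi_{x_0}\tx_{t-1}\|_2^2/d$ (using $\txp_0=x_0$) yields $\sum_{i=2}^t\alpha_{i,t}^2 = \|\Pi_{x_0}^\perp \tx_{t-1}\|_2^2/d$; additionally, since $\txp_{i-1}\perp x_0$ for $i\ge 2$, one gets the pointwise bound $|\alpha_{i,t}|\le \|\Pi_{x_0}^\perp \tx_{t-1}\|_2/\sqrt{d}$. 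Second, by Lemma~\ref{lemma:conditioning}, $z_i\mid\cF_{i-1,i-1}\sim\normal(0,I_k)$ while $\Pi_{F_{1:i-1}}$ is $\cF_{i-1,i-1}$-measurable of rank $\le i-1$, so $\|\Pi_{F_{1:i-1}}(z_i)\|_2^2$ is conditionally dominated by a $\chi^2_{i-1}$ variable. A standard $\chi^2$-tail bound plus a union bound over $i,t\le T$ (using $T\ll k^{1/2}$) then yield $\|\Pi_{F_{1:i-1}}(z_i)\|_2^4 \le C(i^2+\log^2 k)$ uniformly with probability $1-o_d(1)$, and hence $\sum_{i\le T}\|\Pi_{F_{1:i-1}}(z_i)\|_2^4 \le C(T^3+T\log^2 k)$.

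I would combine these via H\"older's inequality with conjugate exponents $(4/3,4)$:
$$
\|\eta_t\|_2 \;\le\; \Bigl(\sum_{i=2}^t |\alpha_{i,t}|^{4/3}\Bigr)^{3/4}\Bigl(\sum_{i=2}^t \|\Pi_{F_{1:i-1}}(z_i)\|_2^4\Bigr)^{1/4}.
$$
On the $1-o_d(1)$ event above, the second factor is $\le C\,T^{3/4}(\log k)^{1/4}$. The first factor is controlled by interpolating between the $\ell^2$ identity $\sum\alpha_{i,t}^2=\|\Pi_{x_0}^\perp\tx_{t-1}\|_2^2/d$ and the pointwise bound $|\alpha_{i,t}|\le\|\Pi_{x_0}^\perp\tx_{t-1}\|_2/\sqrt{d}$, which isolates the factor $\|\Pi_{x_0}^\perp\tx_{t-1}\|_2/\sqrt{d}$ advertised in the lemma. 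A final union bound over $t\in[T]$ then delivers the claim.

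The main obstacle is the coupling between the weights $\alpha_{i,t}$ (which, through $\tx_{t-1}$, depend on the later matrices $\tA_{i+1},\dots,\tA_{t-1}$) and the conditionally Gaussian pieces $z_i$: the Gaussian conditioning lemma only decouples each $\tA_i$ step-by-step, so one cannot naively treat $\alpha_{i,t}$ as deterministic when invoking the $\chi^2$ tail for $z_i$. The clean fix is to first establish, via step-by-step conditioning and a union bound, a single high-probability event that simultaneously controls all projection norms $\|\Pi_{F_{1:i-1}}(z_i)\|_2$ for $i\le T$, and then on that event apply the purely deterministic H\"older estimate together with the deterministic identities and bounds for $\alpha_{i,t}$.
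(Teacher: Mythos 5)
Your reduction to the compact form $\eta_t = \sum_{i=2}^{t} \alpha_{i,t}\,\Pi_{F_{1:i-1}}(z_i)$, the identity $\sum_{i=2}^{t}\alpha_{i,t}^2 = \|\Pi_{x_0}^{\perp}\tx_{t-1}\|_2^2/d$, the pointwise bound $|\alpha_{i,t}|\le\|\Pi_{x_0}^{\perp}\tx_{t-1}\|_2/\sqrt{d}$, and the $\chi^2$-concentration for $\|\Pi_{F_{1:i-1}}z_i\|_2$ (handled on a single high-probability event to decouple from the $\alpha_{i,t}$) all match the paper's setup and are correct. The problem is the central estimate: applying the triangle inequality to $\eta_t$ and then H\"older with exponents $(4/3,4)$ is too lossy, and no choice of conjugate exponents can rescue it.

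Write $\beta := \|\Pi_{x_0}^{\perp}\tx_{t-1}\|_2/\sqrt{d}$. For $p=4/3<2$, interpolation between the $\ell^2$ identity and the pointwise bound does \emph{not} give $(\sum_{i=2}^t|\alpha_{i,t}|^{4/3})^{3/4}\lesssim\beta$; the sharp H\"older/power-mean bound is
\begin{equation*}
\Bigl(\sum_{i=2}^t|\alpha_{i,t}|^{4/3}\Bigr)^{3/4}\le (t-1)^{1/4}\Bigl(\sum_{i=2}^t\alpha_{i,t}^2\Bigr)^{1/2}=(t-1)^{1/4}\beta,
\end{equation*}
with equality when the $|\alpha_{i,t}|$ are all equal, so the first factor carries an extra $T^{1/4}$. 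Even granting the most favorable second factor $CT^{3/4}$ (from $\|\Pi_{F_{1:i-1}}z_i\|_2^2\le CT$; note also that your stated event gives $(T^3+T\log^2 k)^{1/4}$, which is not always $\le CT^{3/4}(\log k)^{1/4}$), the product is only $CT\beta$. More generally, with conjugate exponents $(p,q)$: if $p\ge 2$ the first factor is $\le\beta$, but then $q\le 2$ and the second factor is $\gtrsim \max(T,\sqrt{T\log k})$; by AM--GM this is always $\ge T^{3/4}(\log k)^{1/4}$, with equality only at $T\asymp\log k$. Thus triangle inequality plus H\"older yields at best $\|\eta_t\|_2\le C\max(T,\sqrt{T\log k})\beta$, which exceeds the claimed $CT^{3/4}(\log k)^{1/4}\beta$ by the factor $(T/\log k)^{1/4}$ --- not negligible in the regime $T\gg\log k$ relevant to Theorem~\ref{thm:negative}.

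The missing ingredient is the near-orthogonality of the summands, which your triangle inequality throws away. The paper instead expands $\|\eta_t\|_2^2$ into diagonal and off-diagonal terms. The diagonal part is $\sum_i\alpha_{i,t}^2\|\Pi_{F_{1:i-1}}z_i\|_2^2\le CT\beta^2$. The gain is in the cross-terms: for $i<j$,
\begin{equation*}
\langle\Pi_{F_{1:i-1}}z_i,\;\Pi_{F_{1:j-1}}z_j\rangle = z_j^{\top}\Pi_{F_{1:i-1}}z_i
\end{equation*}
(using $\Pi_{F_{1:j-1}}\Pi_{F_{1:i-1}}=\Pi_{F_{1:i-1}}$), and conditionally on $(\Pi_{F_{1:i-1}},z_i)$ this is a centered Gaussian of standard deviation $\|\Pi_{F_{1:i-1}}z_i\|_2\approx\sqrt{T}$, hence $O(\sqrt{T\log k})$ w.h.p.\ --- a full factor $\sqrt{T/\log k}$ smaller than the Cauchy--Schwarz bound $\|\Pi_{F_{1:i-1}}z_i\|_2\|\Pi_{F_{1:j-1}}z_j\|_2\approx T$ that your argument implicitly pays. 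Summing the off-diagonal contribution with $\sum_{i<j}|\alpha_{i,t}\alpha_{j,t}|\le t\sum_i\alpha_{i,t}^2/2 = T\beta^2/2$ gives $\lesssim T^{3/2}(\log k)^{1/2}\beta^2$, and taking square roots recovers the lemma. You need this quadratic expansion with the Gaussian bilinear-form concentration; the linear (triangle-inequality) route cannot reach the stated exponent.
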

Note that $\|\eta_t\|_4 \leq \|\eta_t\|_2$. Invoking \cref{lemma:wt}, \cref{lemma:etat}, power mean inequality and Minkowski's inequality, we obtain that with high probability, for all $1 \leq t \leq T$,
\begin{align}\label{eq:gt4}
	\|g_t\|_4^4 \leq 8(\|w_t\|_4^4 + \|\eta_t\|_2^4) \leq C \left( k + \frac{\norm{\Pi_{x_0}^{\perp} \tx_{t - 1}}_2^2}{d^2} \cdot T^3 \log k \right) \le C (k + T^3 \log k) \ll d^2
\end{align}
for some numerical constant $C > 0$, since under the conditions of \cref{thm:negative} we have $T \ll d^{1/2}$. Therefore, in order to prove \cref{thm:negative}, it remains to upper bound $\|v_{t + 1}\|_4^4$. In what follows, we perform a crude analysis which uses the $\ell^2$-norm to control the $\ell^4$-norm. We comment that a more careful analysis might lead to an improved estimate. 

The next lemma establishes that the $\ell^2$-norm of $v_{t + 1}$ is close to the $\ell^2$-norm of $\Pi_{x_0}^{\perp}(\tx_t)$.
\begin{lem}\label{lemma:norm-of-vt}
	Under the condition of \cref{thm:negative}, with probability $1 - o_d(1)$, the following result holds for all $0 \leq t \leq T - 1$:
	\begin{align*}
		\left( 1 - \frac{C \log k}{\sqrt{d}} \right) \|\Pi_{x_0}^{\perp}(\tx_t)\|_2^2 \leq \|v_{t + 1}\|_2^2 \leq \left( 1 + \frac{C \log k}{\sqrt{d}} \right) \|\Pi_{x_0}^{\perp}(\tx_t)\|_2^2.
	\end{align*}
\end{lem}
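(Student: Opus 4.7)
The plan is to compute $\|v_{t+1}\|_2^2$ exactly via the orthogonality of the Gram--Schmidt residuals $\fp_j = \Pi_{F_{1:j-1}}^\perp f_j$, show that the resulting expression matches $\|\Pi_{x_0}^\perp(\tx_t)\|_2^2$ up to the factors $\|\xp_j\|_2^2/\|\fp_j\|_2^2$, and then use Lemma~\ref{lemma:conditioning} to verify that each such factor is $1 + O(\log k/\sqrt d)$ uniformly in $j \in [T]$.

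The first step is algebraic. After reindexing $j = i - 1$, we have $v_{t+1} = \sum_{j=1}^{t} c_j\, \fp_j$ with $c_j = \langle x_j, \txp_j\rangle \langle \txp_j, \tx_t\rangle / (\|\fp_j\|_2^2 \|\txp_j\|_2^2)$. Since $\{\fp_j\}_{j=1}^{t}$ are mutually orthogonal by construction, $\|v_{t+1}\|_2^2 = \sum_{j=1}^{t} c_j^2 \|\fp_j\|_2^2$. The normalization $\tx_j = \sqrt d\, x_j/\|x_j\|_2$ implies $\xp_j = (\|x_j\|_2/\sqrt d)\,\txp_j$, so $\langle x_j, \txp_j\rangle = \langle \xp_j, \txp_j\rangle = \|\xp_j\|_2 \|\txp_j\|_2$. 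Substituting gives the clean identity
\begin{equation*}
\|v_{t+1}\|_2^2 \;=\; \sum_{j=1}^{t} \frac{\|\xp_j\|_2^2}{\|\fp_j\|_2^2}\cdot \frac{\langle \txp_j, \tx_t\rangle^2}{\|\txp_j\|_2^2}.
\end{equation*}
On the other hand, $\{\tx_0, \txp_1, \ldots, \txp_t\}$ is an orthogonal basis for $\spann(x_0, x_1, \ldots, x_t)$, a subspace containing $\tx_t$, so expanding $\tx_t$ in this basis and discarding the $\tx_0$ component yields $\|\Pi_{x_0}^\perp(\tx_t)\|_2^2 = \sum_{j=1}^{t} \langle \txp_j, \tx_t\rangle^2/\|\txp_j\|_2^2$. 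Consequently, the lemma reduces to showing
\begin{equation*}
\max_{1 \le j \le T} \left| \frac{\|\xp_j\|_2^2}{\|\fp_j\|_2^2} - 1 \right| \;\le\; \frac{C \log k}{\sqrt d}
\end{equation*}
with probability $1 - o_d(1)$, since the error in $\|v_{t+1}\|_2^2$ will then be bounded by the same relative factor times $\|\Pi_{x_0}^\perp(\tx_t)\|_2^2$.

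For this final step I would invoke Eq.~\eqref{eq:cond-xt}: the first sum there lies entirely in $X_{0:j-1}$, so $\xp_j = \Pi_{X_{0:j-1}}^\perp \bA_j^\top \fp_j$. Conditioning on $\cF_{j-1, j}$, the quantities $\fp_j$ and $X_{0:j-1}$ are measurable while $\bA_j \overset{d}{=} A$ is independent, hence $\bA_j^\top \fp_j \sim \normal(0, \|\fp_j\|_2^2 I_d / d)$, and projecting onto the $(d-j)$-dimensional subspace $(X_{0:j-1})^{\perp}$ gives $\|\xp_j\|_2^2 \mid \cF_{j-1,j} \overset{d}{=} (\|\fp_j\|_2^2/d)\, \chi^2_{d-j}$. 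Laurent--Massart chi-squared concentration at level $x = C\log k$ yields $\big|\|\xp_j\|_2^2/\|\fp_j\|_2^2 - (d-j)/d\big| \le C' \sqrt{\log k/d}$ with probability $\ge 1 - k^{-20}$; combined with the deterministic estimate $j/d \le T/d \ll \log k/\sqrt d$ (which holds under $T \ll k^{2/3}/d$ and $k \ll d^2$ since $T/d \ll d^{-2/3} \ll d^{-1/2}$) and a union bound over $j \in [T]$, we obtain the required uniform bound and conclude. The main obstacle I anticipate is the uniform-in-$j$ bookkeeping: one must verify that on a $1-o_d(1)$ event the iterates $x_0, \ldots, x_{j-1}$ are linearly independent so that $\rank(X_{0:j-1}) = j$ (ensuring the projection is genuinely onto a $(d-j)$-dimensional subspace), and one must apply the Gaussian conditioning with exactly the right $\sigma$-algebra at each step so that the $\bA_j$'s across different $j$ can be treated as mutually independent copies of $A$ for the union bound.
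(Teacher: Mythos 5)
Your proposal follows essentially the same route as the paper's proof: you exploit the orthogonality of $\{\fp_j\}$ and the scaling identity $\langle x_j, \txp_j\rangle = \|\xp_j\|_2\|\txp_j\|_2$ to obtain $\|v_{t+1}\|_2^2 = \sum_{j=1}^t \frac{\|\xp_j\|_2^2}{\|\fp_j\|_2^2}\cdot\frac{\langle \txp_j, \tx_t\rangle^2}{\|\txp_j\|_2^2}$, identify the second factor's sum with $\|\Pi_{x_0}^\perp(\tx_t)\|_2^2$, and use the conditional $\chi^2$ distribution of $\|\xp_j\|_2^2/\|\fp_j\|_2^2$ from \cref{lemma:conditioning} together with concentration and a union bound over $j \in [T]$. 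The only cosmetic difference is that you name Laurent--Massart explicitly and flag the rank bookkeeping, whereas the paper appeals to ``standard concentration arguments'' and writes the resulting deviation bound as $C(T/d + \log k/\sqrt{d})$; both resolve to $C\log k/\sqrt{d}$ under $T \ll \sqrt{d}$.
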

The rest of the analysis is devoted to upper bounding $\|\Pi_{x_0}^{\perp}(\tx_t)\|_2^2$. By definition, we have
\begin{align}\label{eq:next}
	\|\Pi_{x_0}^{\perp}(\tx_t)\|_2^2 = \frac{d}{\|x_t\|_2^2} \cdot \| \Pi_{x_0}^{\perp}(x_t)\|_2^2.
\end{align}
Using \cref{eq:cond-xt} we see that
\begin{align*}
	& x_t =  \sum_{i = 0}^{t - 1} \txp_i \cdot \frac{\langle h_{i + 1}, f_t \rangle}{\|\txp_i\|_2^2} + \Pi_{X_{0:t - 1}}^{\perp} \bA_t^{\top} \fp_t,\\
	& \Pi_{x_0}^{\perp}(x_t) = \sum_{i = 1}^{t - 1} \txp_i \cdot \frac{\langle h_{i + 1}, f_t \rangle}{\|\txp_i\|_2^2} + \Pi_{X_{0:t - 1}}^{\perp} \bA_t^{\top} \fp_t.
\end{align*}
According to Pythagorean theorem,
\begin{align}
	& \|x_t\|_2^2 = \sum_{i = 0}^{t - 1} \frac{\langle h_{i + 1}, f_t \rangle^2}{\|\txp_{i}\|_2^2} + \| \Pi_{X_{0:t - 1}}^{\perp} \bA_t^{\top} \fp_t \|_2^2,  \label{eq:p1} \\
	& \|\Pi_{x_0}^{\perp}(x_t)\|_2^2 = \sum_{i = 1}^{t - 1} \frac{\langle h_{i + 1}, f_t \rangle^2}{\|\txp_{i}\|_2^2} + \| \Pi_{X_{0:t - 1}}^{\perp} \bA_t^{\top} \fp_t \|_2^2.  \label{eq:p2}
\end{align}
Using the definition of $h_{i + 1}$ given in \cref{eq:ht+1}, we deduce that
\begin{align}\label{eq:hi+1-decomp}
	\frac{\langle h_{i + 1}, f_t \rangle}{\|\txp_i\|_2} = \frac{\langle \fp_i, f_t \rangle}{\|\txp_i\|_2} \cdot \frac{\langle x_i, \txp_i \rangle}{\|\fp_i\|_2^2} + \frac{\langle \Pi_{F_{1:i}}^{\perp} \tA_{i + 1} \txp_i, f_t \rangle}{\|\txp_i\|_2}.
\end{align}
Combining \cref{eq:p1,eq:p2,eq:hi+1-decomp}, we obtain the following decomposition:
\begin{align}\label{eq:123}
\begin{split}
	& \|x_t\|_2^2 = \mathsf{I} + \mathsf{II} + \mathsf{III}, \\
	& \|\Pi_{x_0}^{\perp}(x_t)\|_2^2 = \mathsf{I}' + \mathsf{II} + \mathsf{III},
\end{split}
\end{align}
where
\begin{align}
	& \mathsf{I} = \sum_{i = 0}^{t - 1} \frac{\langle \Pi_{F_{1:i}}^{\perp} \tA_{i + 1} \txp_i, f_t \rangle^2}{\|\txp_i\|_2^2}, \qquad \mathsf{I}' = \sum_{i = 1}^{t - 1} \frac{\langle \Pi_{F_{1:i}}^{\perp} \tA_{i + 1} \txp_i, f_t \rangle^2}{\|\txp_i\|_2^2}, \label{eq:I-and-I'} \\
	& \mathsf{II} = \sum_{i = 1}^{t - 1} \frac{\langle \fp_i, f_t \rangle^2}{\|\fp_i\|_2^2} \cdot \frac{\langle x_i, \xp_i \rangle}{\|\fp_i\|_2^2} + \| \Pi_{X_{0:t - 1}}^{\perp} \bA_t^{\top} \fp_t \|_2^2, \label{eq:II} \\
	& \mathsf{III} = \sum_{i = 1}^{t - 1} \frac{2 \langle x_i, \txp_i \rangle \langle \fp_i, f_t \rangle \langle \Pi_{F_{1:i}}^{\perp} \tA_{i + 1} \txp_i, f_t \rangle}{\|\txp_i\|_2^2 \|\fp_i\|_2^2}. \label{eq:III}
\end{align}
Next, we will show that terms $\mathsf{II}$ and $\mathsf{III}$ are negligible comparing to terms $\mathsf{I}$ and $\mathsf{I}'$. Note that by Cauchy-Schwarz inequality, $|\mathsf{III}| \leq 2\sqrt{\mathsf{I}' \times \mathsf{II}}$, namely the term $\mathsf{III}$ is controlled by $\mathsf{II}$. This motivates us to first provide an upper bound for term $\mathsf{II}$. 
\begin{lem}\label{lemma:term-II}
	Under the condition of \cref{thm:negative}, with probability $1 - o_d(1)$,  for all $1 \leq t \leq T$ we have (note that $\mathsf{II}$ depends on $t$ as well)
	\begin{align*}
		\left( 1 - \frac{C \log k}{\sqrt{d}} \right)\|f_t\|_2^2 \leq \mathsf{II} \leq \left( 1 + \frac{C \log k}{\sqrt{d}} \right)\|f_t\|_2^2.
	\end{align*}
\end{lem}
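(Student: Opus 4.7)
My plan is first to simplify $\mathsf{II}$ using the fact that $\xp_i = \Pi_{X_{0:i-1}}^{\perp}(x_i)$ is an orthogonal projection of $x_i$, so by self-adjointness and idempotence of the projector $\langle x_i, \xp_i\rangle = \|\xp_i\|_2^2$. Substituting this into \eqref{eq:II} rewrites the sum as
$$\mathsf{II} \;=\; \sum_{i=1}^{t-1}\frac{\langle \fp_i, f_t\rangle^2}{\|\fp_i\|_2^2}\cdot\frac{\|\xp_i\|_2^2}{\|\fp_i\|_2^2} \;+\; \|\Pi_{X_{0:t-1}}^{\perp}\bA_t^{\top}\fp_t\|_2^2.$$
Since $\fp_1,\dots,\fp_{t-1}$ are mutually orthogonal and span $\spann(f_1,\dots,f_{t-1})$ while $\fp_t\perp F_{1:t-1}$, the Pythagorean identity gives $\|f_t\|_2^2 = \sum_{i=1}^{t-1}\langle \fp_i, f_t\rangle^2/\|\fp_i\|_2^2 + \|\fp_t\|_2^2$. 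Therefore the lemma reduces to showing that, uniformly over $i,t\in[T]$ and with probability $1-o_d(1)$, both ratios $\|\xp_i\|_2^2/\|\fp_i\|_2^2$ and $\|\Pi_{X_{0:t-1}}^{\perp}\bA_t^{\top}\fp_t\|_2^2/\|\fp_t\|_2^2$ equal $1+O(\log k/\sqrt d)$.

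\textbf{Applying the conditioning lemma.} Next, I would invoke \cref{lemma:conditioning} at index $i$ and project \eqref{eq:cond-xt} by $\Pi_{X_{0:i-1}}^{\perp}$: the sum $\sum_{j=0}^{i-1}\txp_j(\cdots)$ lies in $\spann(X_{0:i-1})$ and is therefore annihilated, leaving $\xp_i = \Pi_{X_{0:i-1}}^{\perp}\bA_i^{\top}\fp_i$. Both quantities to be controlled then share the common form $\|\Pi_{X_{0:i-1}}^{\perp}\bA_i^{\top}\fp_i\|_2^2/\|\fp_i\|_2^2$. The crucial feature of \cref{lemma:conditioning} is that $\bA_i$ is independent of $\sigma(\cF_{i-1,i}\cup \sigma(\bA_1,\dots,\bA_{i-1},\tA_1,\dots,\tA_i))$, while $\fp_i$ and $\Pi_{X_{0:i-1}}^{\perp}$ are $\cF_{i-1,i}$-measurable. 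Conditionally on this history $\bA_i^{\top}\fp_i \sim \normal(0,(\|\fp_i\|_2^2/d)\,I_d)$, and hence
$$\frac{\|\Pi_{X_{0:i-1}}^{\perp}\bA_i^{\top}\fp_i\|_2^2}{\|\fp_i\|_2^2} \;\overset{d}{=}\; \frac{\chi^2_{d-r_i}}{d},\qquad r_i := \dim\spann(X_{0:i-1})\le i.$$

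\textbf{Concentration, union bound, and main obstacle.} Finally, I would apply a Bernstein-type tail bound $\P(|\chi^2_m-m|>C\sqrt m\,\log k)\le 2\exp(-C^2(\log k)^2/8)$, which is super-polynomially small in $d$ for any fixed $C>0$. Under the hypothesis $T\ll (\log k)^{-1/3}k^{2/3}/d$ with $k\asymp d^c$, $c<2$, one verifies $r_i/d\le T/d \ll \log k/\sqrt d$, so the deterministic shift of order $r_i/d$ is absorbed into the target error. A union bound over the $O(T)$ indices $i,t\in[T]$ yields the two uniform estimates from the first paragraph; substituting them into the simplified expression for $\mathsf{II}$ and invoking the Pythagorean decomposition of $\|f_t\|_2^2$ then delivers the claim. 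The delicate part is not the chi-squared concentration itself, but rather maintaining the correct conditional-independence structure across all $T$ iterations --- this is exactly what the multi-step formulation of \cref{lemma:conditioning} is designed for, and it ensures that $\bA_i$ may be treated as a ``fresh'' Gaussian at each step regardless of the preceding dynamics.
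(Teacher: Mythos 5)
Your proposal is correct and follows essentially the same route as the paper: rewrite $\langle x_i, \xp_i\rangle = \|\xp_i\|_2^2$, fold the $\|\Pi_{X_{0:t-1}}^{\perp}\bA_t^{\top}\fp_t\|_2^2 = \|\xp_t\|_2^2$ term into the sum so that $\mathsf{II} = \sum_{i=1}^{t}\frac{\langle \fp_i, f_t\rangle^2}{\|\fp_i\|_2^2}\cdot\frac{\|\xp_i\|_2^2}{\|\fp_i\|_2^2}$, use the conditioning lemma to identify $\|\xp_i\|_2^2/\|\fp_i\|_2^2 \overset{d}{=} \chi^2_{d-r_i}/d$, then concentrate, union bound, and apply the Pythagorean identity $\sum_{i=1}^t \langle \fp_i, f_t\rangle^2 / \|\fp_i\|_2^2 = \|f_t\|_2^2$. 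The paper compresses this by citing the identical chi-squared computation already carried out inside the proof of Lemma~\ref{lemma:norm-of-vt}, but the argument is the same.
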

The following lemma establishes an upper bound on $\|f_t\|_2^2$:
\begin{lem}\label{lemma:bound-ft}
	Under the condition of \cref{thm:negative}, there exists a numerical constant $C > 0$, such that with probability $1 - o_d(1)$, for all $1 \leq t \leq T$ we have
	\begin{align*}
		\norm{f_t}_2^2 \le C \left( k + \|\Pi_{x_0}^{\perp}(\tx_{t - 1})\|_2^6 \right).
	\end{align*}
\end{lem}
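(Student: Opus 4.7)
The plan is to reduce the bound on $\|f_t\|_2^2$ to a bound on an $\ell^6$-norm of $y_t$. Since $f_t = y_t^3$ is the entrywise cube, one has
\[
\|f_t\|_2^2 = \sum_{j=1}^k y_t(j)^6 = \|y_t\|_6^6.
\]
I would then recycle the conditioning decomposition already established at the start of this appendix, $y_t = w_t - \eta_t + v_t$, where $w_t$ and $\eta_t$ are defined in \eqref{eq:wt-def} and \eqref{eq:etat-def}, and bound the three $\ell^6$-norms separately. By Minkowski's inequality in $\ell^6$ followed by the power-mean inequality,
\[
\|y_t\|_6^6 \leq C\bigl(\|w_t\|_6^6 + \|\eta_t\|_6^6 + \|v_t\|_6^6\bigr)
\]
for a numerical constant $C$.

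For the first term I would mimic the proof of Lemma~\ref{lemma:wt}. Writing $w_t = \sum_{i=1}^{t} \alpha_{i,t} z_i$ with $z_i \iidsim \normal(0, I_k)$ and $\sum_{i=1}^{t} \alpha_{i,t}^2 = 1$, each coordinate $w_t(j)$ is standard Gaussian and the coordinates are independent across $j$. Hence $\E[w_t(j)^6] = 15$ and $w_t(j)^6$ is sub-exponential, so a Bernstein-type tail bound for the sum of $k$ independent copies, combined with a union bound over $t \in [T]$, gives $\|w_t\|_6^6 \leq Ck$ uniformly in $t \in [T]$ with probability $1 - o_d(1)$.

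For the other two terms I would use the crude dominance $\|\cdot\|_6 \leq \|\cdot\|_2$. Lemma~\ref{lemma:etat} then yields
\[
\|\eta_t\|_6^6 \leq \|\eta_t\|_2^6 \leq C T^{9/2} (\log k)^{3/2}\, \frac{\|\Pi_{x_0}^{\perp} \tx_{t-1}\|_2^6}{d^3} \leq C T^{9/2} (\log k)^{3/2},
\]
where the last inequality uses $\|\Pi_{x_0}^{\perp} \tx_{t-1}\|_2 \leq \sqrt{d}$. Under the hypotheses $T \ll (\log k)^{-1/3} k^{2/3}/d$ and $k \ll d^2$, this quantity is $\ll k$, hence absorbed into the $k$ term. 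Lemma~\ref{lemma:norm-of-vt} likewise gives
\[
\|v_t\|_6^6 \leq \|v_t\|_2^6 \leq C \|\Pi_{x_0}^{\perp} \tx_{t-1}\|_2^6.
\]
Summing the three bounds produces $\|f_t\|_2^2 \leq C\bigl(k + \|\Pi_{x_0}^{\perp} \tx_{t-1}\|_2^6\bigr)$, as desired.

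The main obstacle is establishing the uniform-in-$t$ high-probability bound $\|w_t\|_6^6 \leq Ck$. Pointwise in $t$ this is a routine sub-exponential concentration, but since the union bound runs over $t \in [T]$ with $T$ polynomial in $d$, one must confirm that the tail probability decays fast enough to survive the union bound, exactly as in the proof of Lemma~\ref{lemma:wt}. I expect this to follow by simply replacing the exponent $4$ by $6$ throughout that argument, with the remaining steps being essentially mechanical.
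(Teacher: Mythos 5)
Your decomposition $\|f_t\|_2^2 = \|y_t\|_6^6 \le C(\|w_t\|_6^6 + \|\eta_t\|_6^6 + \|v_t\|_6^6)$ and the handling of $\eta_t$ and $v_t$ via $\|\cdot\|_6 \le \|\cdot\|_2$ together with Lemmas~\ref{lemma:etat} and~\ref{lemma:norm-of-vt} are exactly what the paper does (the paper absorbs the $\eta_t$ contribution into the $\|\Pi_{x_0}^\perp\tx_{t-1}\|_2^6$ term rather than into $k$, but this is cosmetic).

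The gap is in the $\|w_t\|_6^6$ bound. You assert that ``each coordinate $w_t(j)$ is standard Gaussian and the coordinates are independent across $j$,'' treating $w_t = \sum_{i=1}^t \alpha_{i,t} z_i$ as an $\normal(0,I_k)$ vector because $\sum_i \alpha_{i,t}^2 = 1$. This would be correct if the coefficients $\alpha_{i,t}$ were deterministic, but they are not: by definition $\alpha_{i,t}$ is a functional of $\tx_0^\perp,\ldots,\tx_{t-1}^\perp$ and $\tx_{t-1}$, hence $\cF_{t-1,t-1}$-measurable and correlated with $z_1,\ldots,z_{t-1}$. Conditionally on $\cF_{t-1,t-1}$, the vector $w_t$ is $\normal(\sum_{j<t}\alpha_{j,t}z_j,\ \alpha_{t,t}^2 I_k)$, so its marginal is a Gaussian mixture, not $\normal(0,I_k)$, and a fixed-$t$ Bernstein bound does not directly apply. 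Consequently a union bound over the $T$ time indices alone does not close the argument; one also has to decouple the random coefficient vector from the Gaussian directions. The paper does this by reducing to a supremum over all unit vectors $\alpha \in \S^{T-1}$ and controlling it with an $\varepsilon$-net, yielding a tail of the form $C^T \exp(-Ck^{1/3})$; this is where the hypothesis $T \ll k^{1/3}$ (implied by the theorem's conditions) becomes essential, so that the net cardinality $C^T$ is dominated by the tail $\exp(-Ck^{1/3})$. Your closing remark that one should ``replace the exponent $4$ by $6$'' in the proof of Lemma~\ref{lemma:wt} is indeed the right instinct, but it contradicts, rather than completes, the fixed-distribution concentration argument you sketched: you should run the $\varepsilon$-net argument over $\S^{T-1}$ exactly as in Lemma~\ref{lemma:wt} (with the sub-Weibull concentration Lemma~\ref{lemma:conc_non_exp} for sixth powers, giving $\exp(-Ck^{1/3})$ tails), not a pointwise-in-$t$ Bernstein bound.
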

With the aid of \cref{lemma:term-II} and \cref{lemma:bound-ft}, we obtain that with high probability
\begin{align*}
	\mathsf{II} \leq C \left( k + \|\Pi_{x_0}^{\perp}(\tx_{t - 1})\|_2^6 \right)
\end{align*}
for all $1 \leq t \leq T$ and some positive numerical constant $C$. Next, we analyze terms $\mathsf{I}$ and $\mathsf{I}'$. To achieve this goal, we find the following lemma useful:
\begin{lem}\label{lemma:term-I}
	Under the condition of \cref{thm:negative}, there exists a numerical constant $C > 0$, such that with probability $1 - o_d(1)$, for all $0 \leq i < t \leq T$, we have
	\begin{align*}
		& \left| \frac{\langle \Pi_{F_{1:i}}^{\perp} \tA_{i + 1} \txp_i, f_t \rangle}{\|\txp_i\|_2} - \frac{3k}{d} \cdot \frac{\langle \txp_i, \tx_{t - 1} \rangle}{\|\txp_i\|_2} \right| \\
		 \leq\, & C \left( \sqrt{\frac{k T \log k}{d}} + \sqrt{\frac{T}{d}} \, \|\Pi_{x_0}^{\perp} (\tx_{t - 1})\|_2^3 + \sqrt{\frac{k \log k}{d}}\,  \|\Pi_{x_0}^{\perp} \tx_{t - 1}\|_2\right).
	\end{align*}
\end{lem}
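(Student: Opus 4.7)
The plan is to use Lemma~\ref{lemma:conditioning} to reveal the Gaussian randomness introduced at iteration $i+1$ independently of the previous history, and then expand the cubic nonlinearity $f_t = y_t^{\odot 3}$ in order to locate the summand producing the claimed leading term $(3k/d)\langle\txp_i,\tx_{t-1}\rangle/\|\txp_i\|_2$. Concretely, I would first introduce $z_{i+1}:=\sqrt{d}\,\tA_{i+1}\txp_i/\|\txp_i\|_2$, which by Lemma~\ref{lemma:conditioning} is a standard Gaussian vector in $\R^k$ independent of $\cF_{i,i}$ and all earlier $\tA,\bA$. Since $\Pi_{F_{1:i}}^\perp \tA_{i+1}\txp_i = (\|\txp_i\|_2/\sqrt{d})\,\Pi_{F_{1:i}}^\perp z_{i+1}$, the left-hand side of the claim equals $(1/\sqrt{d})\langle \Pi_{F_{1:i}}^\perp z_{i+1}, f_t\rangle$. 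Then, applying \eqref{eq:cond-yt}--\eqref{eq:ht+1} at iteration $t$, I would split $y_t = \gamma\,u_{i+1} + r_t$ with $\gamma := \langle\txp_i,\tx_{t-1}\rangle/\|\txp_i\|_2^2$ and $u_{i+1}:=\Pi_{F_{1:i}}^\perp \tA_{i+1}\txp_i$, where $r_t$ gathers the contributions of $h_j$ for $j\ne i+1$ and all $\fp$-correction terms; only the piece $\gamma u_{i+1}$ is linear in $z_{i+1}$. Cubing coordinatewise yields
\[
\langle u_{i+1},f_t\rangle = \gamma^{3}\|u_{i+1}\|_4^{4} + 3\gamma^{2}\sum_m u_{i+1,m}^{3} r_{t,m} + 3\gamma\sum_m u_{i+1,m}^{2} r_{t,m}^{2} + \sum_m u_{i+1,m} r_{t,m}^{3}.
\]

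Treating $r_t$ momentarily as independent of $z_{i+1}$, I would compute the conditional mean using the Gaussian moment identities $\E[z^{2}]=1$, $\E[z^{4}]=3$ entrywise on $z_{i+1}$. The odd-power summands vanish, while the first and third combine: using $\E[\|u_{i+1}\|_4^{4}]\approx 3k\|\txp_i\|_2^{4}/d^{2}$, $\E[u_{i+1,m}^{2}]\approx\|\txp_i\|_2^{2}/d$, and $\|r_t\|_2^{2}\approx k-\gamma^{2}\|u_{i+1}\|_2^{2}$, the algebra cancels the cubic pieces and leaves $3k\gamma\|\txp_i\|_2^{2}/d = 3k\langle\txp_i,\tx_{t-1}\rangle/d$, which upon dividing by $\|\txp_i\|_2$ exactly recovers the target. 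For the fluctuation, each of the four summands is a polynomial of total degree at most $4$ in the isotropic Gaussian $z_{i+1}$, so I would apply Hanson--Wright / Gaussian hypercontractivity to the quadratic and quartic parts and standard Gaussian tail bounds to the linear and cubic parts, obtaining per-pair deviations of order $\sqrt{k}$ times the appropriate coefficient. A union bound over $0\le i<t\le T$ contributes the $\sqrt{\log k}$ factor; combined with the a~priori estimate $|\gamma|\lesssim \|\Pi_{x_0}^\perp \tx_{t-1}\|_2/\|\txp_i\|_2$ (for $i\ge 1$) and $\|f_t\|_2^{2}\lesssim k + \|\Pi_{x_0}^\perp \tx_{t-1}\|_2^{6}$ from Lemma~\ref{lemma:bound-ft}, this produces exactly the three error terms in the claim.

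The main obstacle will be that $r_t$ is not genuinely independent of $z_{i+1}$: for $j>i+1$, the iterates $\txp_{j-1},\fp_{j-1}$ appearing in $r_t$ are determined by the dynamics starting from step $i+1$ and hence feel $z_{i+1}$, so the conditional-mean computation above is only approximate. The plan to handle this is to exploit that under Lemma~\ref{lemma:conditioning} the fresh Gaussian matrices $\tA_j,\bA_j$ for $j>i+1$ remain \emph{independent} of $z_{i+1}$, so the $z_{i+1}$-dependence in $r_t$ enters only through \emph{deterministic} coefficients that, by Lemmas~\ref{lemma:norm-of-vt} and~\ref{lemma:bound-ft}, are Lipschitz in $z_{i+1}$ with small constant. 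Equivalently, one may apply Gaussian integration by parts (Stein's lemma) to $\E[\langle z_{i+1},f_t\rangle\mid \cF_{i,i}] = \E[\mathrm{tr}(\nabla_{z_{i+1}} f_t)\mid \cF_{i,i}]$ and show that the $\nabla_{z_{i+1}} r_t$-contribution is negligible. The induced bias is then absorbed into the two correction summands $\sqrt{T/d}\,\|\Pi_{x_0}^\perp\tx_{t-1}\|_2^{3}$ (cubic contamination through $\|r_t\|_\infty^{3}$) and $\sqrt{k\log k/d}\,\|\Pi_{x_0}^\perp\tx_{t-1}\|_2$ (linear contamination through the $\fp$-terms), completing the stated bound.
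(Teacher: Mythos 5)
Your decomposition of $y_t$ into a fresh piece $\gamma\, u_{i+1}$ and a remainder $r_t$, followed by the binomial expansion of the cube and the moment computation, targets the right quantity and does recover the leading term $(3k/d)\,\langle\txp_i,\tx_{t-1}\rangle/\|\txp_i\|_2$ after the cancellation between the $\gamma^3\|u_{i+1}\|_4^4$ and $3\gamma\sum_m u_{i+1,m}^2 r_{t,m}^2$ pieces; that algebra is correct. You also correctly flag the central obstacle: $r_t$ is not independent of $z_{i+1}$, since the iterates $\txp_{j-1},\fp_{j-1}$ for $j>i+1$ lie downstream of $z_{i+1}$ in the dynamics.

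The gap is in your proposed resolution. You assert that the $z_{i+1}$-dependence in $r_t$ enters only through coefficients that are ``Lipschitz in $z_{i+1}$ with small constant'' and suggest Gaussian integration by parts (Stein's lemma) to kill the residual bias. Neither step is supported by what you have. The relevant coefficients (e.g.\ $\alpha_{j,t}$ for $j>i+1$, equivalently the iterates $\txp_{j-1}$) are non-trivial random functions of $z_{i+1}$, and quantifying their Lipschitz constants as $z_{i+1}$ varies requires a quantitative stability analysis of $t-i-1$ steps of the normalized, nonlinear power iteration --- precisely the sort of control that the lemma is meant to establish, not assume. Likewise $\nabla_{z_{i+1}} f_t$ requires chain-ruling through all those normalizations. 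Lemmas~\ref{lemma:norm-of-vt} and~\ref{lemma:bound-ft} give a~posteriori norm bounds on the iterates but say nothing about their sensitivity to perturbations in $z_{i+1}$. So as written the argument has a genuine gap at exactly the hard step. There is also a secondary omission: the contribution of $\Pi_{F_{1:i}}\tA_{i+1}\txp_i$ (the component of the fresh Gaussian \emph{in} the span of the past $f$'s) must be split off and bounded separately, which contributes one of the three error terms; your sketch folds this in implicitly.

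The paper's resolution is structurally different and avoids the dependence issue entirely. Instead of peeling off only the $z_{i+1}$ direction, it keeps the \emph{entire} Gaussian part $w_t=\sum_{j=1}^t\alpha_{j,t}z_j$ together and separately subtracts the non-Gaussian corrections $\eta_t,v_t$ (so $y_t=w_t-\eta_t+v_t$). For a \emph{fixed} coefficient vector $\alpha\in\S^{t-1}$, the pair $(z_{i+1},\sum_j\alpha_j z_j)$ is exactly jointly Gaussian with correlation $\alpha_{i+1}$, so the mean is computed cleanly; the fact that the true $\alpha_{j,t}$ are random and $z_{i+1}$-dependent is then handled by a uniform bound over an $\veps$-net of the sphere $\S^{t-1}$, which replaces any sensitivity analysis by a $\sqrt{T\log k}$ union-bound cost. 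It then isolates the $\Pi_{F_{1:i}}$ part via Cauchy--Schwarz and a chi-square tail. If you want to salvage your plan, the cleanest fix is to adopt this $y_t=w_t-\eta_t+v_t$ packaging and run the uniform-over-$\alpha$ argument rather than Stein's lemma.
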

Now we are in position to finish the proof of \cref{thm:negative}. For future convenience, we hereby establish a general framework for the analysis of tensor power iteration dynamics, based on \cref{lemma:wt} to \cref{lemma:term-I}. To begin with, let us denote
\begin{equation*}
	P_t = \norm{\Pi_{x_0}^{\perp} (\tx_t)}_2^2, \quad Q_t = \norm{\Pi_{x_0} (\tx_t)}_2^2.
\end{equation*}
Then, we know that $P_t + Q_t = \norm{\tx_t}_2^2 = d$, and that
\begin{equation*}
	\frac{P_t}{Q_t} = \frac{\norm{\Pi_{x_0}^{\perp} (x_t)}_2^2}{\norm{\Pi_{x_0} (x_t)}_2^2} = \frac{\mathsf{I}' + \mathsf{II} + \mathsf{III}}{\mathsf{I} - \mathsf{I}'}.
\end{equation*}
Recall that our aim is to show that $\norm{v_t}_2^4 \ll d^2$ for all $1 \le t \le T$. According to \cref{lemma:norm-of-vt}, this amounts to proving that $P_t \ll Q_t$ for all $t \in [T]$. Define the stopping time
\begin{equation*}
	T_{k} = \inf \{ t \in \NN_+ : P_t \ge k^{1/3} \}.
\end{equation*}
Since $k^{1/3} \ll d$, it then suffices to show that $T_{k} \ge T(k, d)$ with high probability, where $T(k, d)$ is defined in the statement of Theorem~\ref{thm:negative}.
\vspace{0.5em}

\noindent {\bf Step 1. A lower bound for $\mathsf{I} - \mathsf{I}'$.} By definition, we know that
\begin{equation*}
	\mathsf{I} - \mathsf{I}' = \frac{\langle \tA_1 \tx_0, f_t \rangle^2}{\norm{\tx_0}_2^2}.
\end{equation*}
Note that the proof of \cref{lemma:term-I} also implies that
\begin{align*}
	\left| \frac{\langle \tA_{1} \tx_0, f_t \rangle}{\|\tx_0\|_2} - \frac{3k}{d} \sqrt{Q_{t - 1}} \right| \le\, & C \left( \sqrt{\frac{k T \log k}{d}} + \sqrt{\frac{\log k}{d}} \, \|\Pi_{x_0}^{\perp} (\tx_{t - 1})\|_2^3 + \sqrt{\frac{k \log k}{d}}\,  \|\Pi_{x_0}^{\perp} \tx_{t - 1}\|_2\right) \\
	=\, & C \left( \sqrt{\frac{k T \log k}{d}} + \sqrt{\frac{\log k}{d}} \, P_{t-1}^{3/2} + \sqrt{\frac{k \log k}{d}}\,  P_{t-1}^{1/2} \right) \\
	=\, & C (\log k)^{1/2} \left( \sqrt{\frac{k T}{d}} + \frac{1}{\sqrt{d}} \, P_{t-1}^{3/2} + \sqrt{\frac{k}{d}}\,  P_{t-1}^{1/2} \right).
\end{align*}
For $t \le T_{k}$, we have $Q_{t - 1} \asymp d$, thus leading to the estimate:
\begin{align*}
	\mathsf{I} - \mathsf{I}' = \frac{\langle \tA_1 \tx_0, f_t \rangle^2}{\norm{\tx_0}_2^2} \ge\, & \frac{9k^2}{d^2} Q_{t-1} \cdot \left( 1 - \frac{C (\log k)^{1/2} \left( \sqrt{dk T} + \sqrt{d} \, P_{t-1}^{3/2} + \sqrt{dk}\,  P_{t-1}^{1/2} \right)}{3k \sqrt{Q_{t - 1}}} \right)^2 \\
	\ge\, & \frac{9k^2}{d^2} Q_{t-1} \cdot \left( 1 - C (\log k)^{1/2} \left( k^{-1/2}\, T^{1/2} + k^{-1} P_{t-1}^{3/2} + k^{-1/2}\,  P_{t-1}^{1/2} \right) \right)^2.
\end{align*}
Hence, it follows that
\begin{equation*}
	\frac{1}{\mathsf{I} - \mathsf{I}'} \le \frac{d^2}{9k^2 Q_{t-1}} \cdot \left( 1 + C (\log k)^{1/2} \left( k^{-1/2}\, T^{1/2} + k^{-1} P_{t-1}^{3/2} + k^{-1/2}\,  P_{t-1}^{1/2} \right) \right),
\end{equation*}
where the last line is due to the fact that 
\begin{align*}
	(\log k)^{1/2} \left( k^{-1/2}\, T^{1/2} + k^{-1} P_{t-1}^{3/2} + k^{-1/2}\,  P_{t-1}^{1/2} \right) = o(1).
\end{align*}
Since by our assumption, $T \ll k^{1/3}$, and $P_{t - 1} \leq k^{1/3}$.

\vspace{0.5em}

\noindent {\bf Step 2. An upper bound for $\mathsf{I}' + \mathsf{II} + \mathsf{III}$.}
Using Cauchy-Schwarz inequality, we get
\begin{equation*}
	\mathsf{I}' + \mathsf{II} + \mathsf{III} \le \mathsf{I}' + \mathsf{II} + 2 \sqrt{\mathsf{I}' \times \mathsf{II}} = \left( \sqrt{\mathsf{I}'} + \sqrt{\mathsf{II}} \right)^2.
\end{equation*}
To upper bound $\sqrt{\mathsf{I}'}$, we note that
\begin{align*}
	& \left\vert \sqrt{\mathsf{I}'} - \frac{3k}{d} \sqrt{P_{t-1}} \right\vert = \frac{\vert \mathsf{I}' - 9k^2 P_{t-1} / d^2 \vert}{\sqrt{\mathsf{I}'} + 3k \sqrt{P_{t-1}} / d} \\
	\le\, & \frac{1}{\sqrt{\mathsf{I}'} + 3k \sqrt{P_{t-1}} / d} \cdot \left\vert \sum_{i = 1}^{t - 1} \frac{\langle \Pi_{F_{1:i}}^{\perp} \tA_{i + 1} \txp_i, f_t \rangle^2}{\|\txp_i\|_2^2} - \frac{9k^2}{d^2} \sum_{i=1}^{t-1} \frac{\langle \txp_i, \tx_{t - 1} \rangle^2}{\|\txp_i\|_2^2} \right\vert \\
	=\, & \frac{1}{\sqrt{\mathsf{I}'} + 3k \sqrt{P_{t-1}} / d} \cdot \left\vert \sum_{i = 1}^{t - 1} \left( \frac{\langle \Pi_{F_{1:i}}^{\perp} \tA_{i + 1} \txp_i, f_t \rangle}{\|\txp_i\|_2} - \frac{3k}{d} \frac{\langle \txp_i, \tx_{t - 1} \rangle}{\|\txp_i\|_2} \right) \left( \frac{\langle \Pi_{F_{1:i}}^{\perp} \tA_{i + 1} \txp_i, f_t \rangle}{\|\txp_i\|_2} + \frac{3k}{d} \frac{\langle \txp_i, \tx_{t - 1} \rangle}{\|\txp_i\|_2} \right) \right\vert \\
	\le\, & \frac{1}{\sqrt{\mathsf{I}'} + 3k \sqrt{P_{t-1}} / d} \cdot \sqrt{\sum_{i = 1}^{t - 1} \left( \frac{\langle \Pi_{F_{1:i}}^{\perp} \tA_{i + 1} \txp_i, f_t \rangle}{\|\txp_i\|_2} - \frac{3k}{d} \frac{\langle \txp_i, \tx_{t - 1} \rangle}{\|\txp_i\|_2} \right)^2} \\
	& \times \sqrt{\sum_{i = 1}^{t - 1} \left( \frac{\langle \Pi_{F_{1:i}}^{\perp} \tA_{i + 1} \txp_i, f_t \rangle}{\|\txp_i\|_2} + \frac{3k}{d} \frac{\langle \txp_i, \tx_{t - 1} \rangle}{\|\txp_i\|_2} \right)^2} \\
	\stackrel{(i)}{\le}\, & C  \sqrt{\sum_{i = 1}^{t - 1} \left( \frac{\langle \Pi_{F_{1:i}}^{\perp} \tA_{i + 1} \txp_i, f_t \rangle}{\|\txp_i\|_2} - \frac{3k}{d} \frac{\langle \txp_i, \tx_{t - 1} \rangle}{\|\txp_i\|_2} \right)^2} \\
	\stackrel{(ii)}{\le}\, & C \left( \sqrt{\frac{k \log k}{d}}\, T + \sqrt{\frac{T^2}{d}} \, \|\Pi_{x_0}^{\perp} (\tx_{t - 1})\|_2^3 + \sqrt{\frac{T k \log k}{d}}\,  \|\Pi_{x_0}^{\perp} \tx_{t - 1}\|_2\right) \\
	\le\, & C (\log k)^{1/2} \left( \sqrt{\frac{k}{d}} \, T + \frac{T}{\sqrt{d}} \, P_{t-1}^{3/2} + \sqrt{\frac{T k}{d}}\, P_{t-1}^{1/2} \right),
\end{align*}
where $(i)$ follows from Minkowski's inequality, and $(ii)$ follows from \cref{lemma:term-I}. Using \cref{lemma:term-II} and \cref{lemma:bound-ft}, we obtain that with high probability,
\begin{equation*}
	\sqrt{\mathsf{II}} \le C \norm{f_t}_2 \le C \left( \sqrt{k} + \|\Pi_{x_0}^{\perp} (\tx_{t - 1})\|_2^3 \right) = C \left( \sqrt{k} + P_{t-1}^{3/2} \right),
\end{equation*}
thus leading to the estimate:
\begin{equation*}
	\sqrt{\mathsf{I}'} + \sqrt{\mathsf{II}} \le \frac{3k}{d} \sqrt{P_{t-1}} + C (\log k)^{1/2} \left( \sqrt{\frac{k}{d}}\, T + \sqrt{\frac{T k}{d}}\, P_{t-1}^{1/2} \right) + C \left( \sqrt{k} + P_{t-1}^{3/2} \right),
\end{equation*}
since $T \ll d^{1/3}$ by our assumption. We finally obtain that
\begin{align*}
	\mathsf{I}' + \mathsf{II} + \mathsf{III} \le\, & \left( \frac{3k}{d} \sqrt{P_{t-1}} + C (\log k)^{1/2} \left( \sqrt{\frac{k}{d}}\, T + \sqrt{\frac{T k}{d}}\, P_{t-1}^{1/2} \right) + C \left( \sqrt{k} + P_{t-1}^{3/2} \right) \right)^2 \\
	\le\, & \left( \frac{3k}{d} \sqrt{P_{t-1}} + C (\log k)^{1/2} \sqrt{\frac{T k}{d}}\, P_{t-1}^{1/2} + C \left( \sqrt{k} + P_{t-1}^{3/2} \right) \right)^2.
\end{align*}
\vspace{0.5em}

\noindent {\bf Step 3. Write a recurrence inequality for $P_t / Q_t$.}
Combining our results from the previous steps gives the following recurrence relationship:
\begin{align*}
	\frac{P_t}{Q_t} = \frac{\mathsf{I}' + \mathsf{II} + \mathsf{III}}{\mathsf{I} - \mathsf{I}'} \le\, & \frac{d^2}{9 k^2 Q_{t-1}} \cdot \left( \frac{3k}{d} \sqrt{P_{t-1}} + C (\log k)^{1/2} \sqrt{\frac{T k}{d}}\, P_{t-1}^{1/2} + C \left( \sqrt{k} + P_{t-1}^{3/2} \right) \right)^2 \\
	& \times \left( 1 + C (\log k)^{1/2} \left( k^{-1/2}\, T^{1/2} + k^{-1} P_{t-1}^{3/2} + k^{-1/2}\,  P_{t-1}^{1/2} \right) \right).
\end{align*}
Denote the right hand side of the above inequality as $U_{k, d, T} (P_{t - 1}/Q_{t-1})$, then we know that $U_{k, d, T}$ is an increasing function: As $P_{t-1}/Q_{t-1}$ increases, $P_{t-1}$ increases and $Q_{t-1}$ decreases. Hence, the right hand side of the above inequality will also increase. As a consequence, we deduce that
\begin{equation*}
	\frac{P_T}{Q_T} \le U_{k, d, T}^T \left( \frac{P_0}{Q_0} \right) = U_{k, d, T}^T \left( 0 \right) \le U_{k, d, T}^T \left( \frac{k^{1/3}/2}{d - k^{1/3}/2} \right).
\end{equation*}
By definition of $U_{k, d, T}$, whenever
\begin{equation*}
	\frac{k^{1/3}/2}{d - k^{1/3}/2} \le x \le \frac{k^{1/3}}{d - k^{1/3}},
\end{equation*}
we have the following estimate:
\begin{align*}
	U_{k, d, T} \left( x \right) \le\, & x \times \left( 1 + 2 C \cdot \frac{d}{k^{2/3}} + C (\log k)^{1/2} \cdot \frac{d^{1/2} T^{1/2}}{k^{1/2}} \right)^2 \\
	& \times \left( 1 + C (\log k)^{1/2} \left( \frac{T^{1/2}}{k^{1/2}} + 2 k^{-1/3} \right) \right) \\
	\stackrel{(i)}{\le}\, & \left( 1 + \frac{C d}{k^{2/3}} + C (\log k)^{1/2} \left( \frac{d^{1/2} T^{1/2}}{k^{1/2}} + k^{-1/3} \right) \right) \cdot x \\
	\stackrel{(ii)}{\le}\, & \left( 1 + \frac{C d}{k^{2/3}} + C (\log k)^{1/2} \cdot \frac{d^{1/2} T^{1/2}}{k^{1/2}} \right) \cdot x,
\end{align*}
where $(i)$ and $(ii)$ both follow from our assumption: $d^{3/2} \ll k \ll d^2$. Since $k^{1/3} \ll d$, it suffices to show that for $T = T(k, d) \ll (\log k)^{-1/3} \cdot (k^{2/3} / d)$,
\begin{equation*}
	U_{k, d, T}^T \left( \frac{k^{1/3}/2}{d - k^{1/3}/2} \right) \le \frac{k^{1/3}}{d - k^{1/3}}.
\end{equation*}
Let $T'$ be the maximum integer such that
\begin{equation*}
	U_{k, d, T}^{T'} \left( \frac{k^{1/3}/2}{d - k^{1/3}/2} \right) \le \frac{k^{1/3}}{d - k^{1/3}},
\end{equation*}
then by monotonicity of $U_{k, d, T}$ and maximality of $T'$ we know that
\begin{align*}
	\frac{k^{1/3}}{d - k^{1/3}} < U_{k, d, T}^{T' + 1} \left( \frac{k^{1/3}/2}{d - k^{1/3}/2} \right) \le\, & \left( 1 + \frac{C d}{k^{2/3}} + C (\log k)^{1/2} \cdot \frac{d^{1/2} T^{1/2}}{k^{1/2}} \right)^{T' + 1} \times \frac{k^{1/3}/2}{d - k^{1/3}/2} \\
	\le\, & \exp \left( C (T' + 1) \left( \frac{d}{k^{2/3}} + (\log k)^{1/2} \cdot \frac{d^{1/2} T^{1/2}}{k^{1/2}} \right) \right) \times \frac{k^{1/3}/2}{d - k^{1/3}/2},
\end{align*}
which further implies that
\begin{align*}
	T' + 1 \ge\, & (\log 2 / C) \cdot \left( \frac{d}{k^{2/3}} + (\log k)^{1/2} \cdot \frac{d^{1/2} T^{1/2}}{k^{1/2}} \right)^{-1} \ge \frac{\log 2}{2 C} \cdot \min \left\{ \frac{k^{2/3}}{d}, \ (\log k)^{-1/3} k^{1/6} \right\} \\
	\ge\, & \frac{\log 2}{2 C} \cdot (\log k)^{-1/3} \cdot \frac{k^{2/3}}{d} \implies T' \gtrsim (\log k)^{-1/3} \cdot \frac{k^{2/3}}{d}.
\end{align*}
This completes the proof of \cref{thm:negative}.

\subsection{Proof of Lemma~\ref{lemma:wt}}
For $i, t \in [T]$, we define 
	\begin{align}\label{eq:z-and-alpha}
		z_i = \tA_i \txp_{i - 1} \cdot \frac{\sqrt{d}}{\|\txp_{i - 1}\|_2}, \qquad \alpha_{i, t} = \frac{\langle \txp_{i - 1}, \tx_{t - 1} \rangle}{\sqrt{d} \|\txp_{i - 1}\|_2}. 
	\end{align}
	We immediately see that for all $t \in [T]$, we have $\sum_{i = 1}^t \alpha_{i,t}^2 = 1$, and that $w_t = \sum_{i=1}^{t} \alpha_{i, t} z_i$. According to \cref{lemma:conditioning}, for all $i \in [T]$ we have $\tA_i \perp \sigma(\cF_{i-1, i-1} \cup \sigma (\bA_1, \cdots, \bA_{i-1}, \tA_1, \cdots, \tA_{i - 1}))$,  we then obtain that for any $i \in [T]$ and $\{z_1, \cdots, z_{i - 1}\}$, $z_i \mid z_1, \cdots, z_{i - 1} \overset{d}{=} \normal(0, I_k)$. As a result, we see that $z_1, \cdots, z_T$ are independent and identically distributed random vectors with marginal distribution $\normal(0, I_k)$. Hence, it suffices to prove that
	\begin{equation*}
		\sup_{\alpha \in \S^{T - 1}} \norm{\sum_{i = 1}^{T} \alpha_i z_i}_4^4 \le C k 
	\end{equation*}
	with high probability. To this end, we use a covering argument. Fix $\veps \in (0, 1)$ (to be determined later), let $N_\veps (\S^{T - 1})$ be an $\veps$-covering of $\S^{T - 1}$. Then for any $\alpha \in \S^{T - 1}$, there exists $\alpha' \in N_\veps (\S^{T - 1})$ such that $\norm{\alpha - \alpha'}_2 \le \veps$, thus leading to
	\begin{equation*}
		\norm{\sum_{i = 1}^{T} \alpha_i z_i}_4 \le \norm{\sum_{i = 1}^{T} \alpha'_i z_i}_4 + \norm{\sum_{i = 1}^{T} ( \alpha_i - \alpha'_i) z_i}_4 \le \sup_{\alpha \in N_{\veps} (\S^{T - 1})} \norm{\sum_{i = 1}^{T} \alpha_i z_i}_4 + \veps \cdot \sup_{\alpha \in \S^{T - 1}} \norm{\sum_{i = 1}^{T} \alpha_i z_i}_4,
	\end{equation*}
    which further implies that
    \begin{align*}
    	& \sup_{\alpha \in \S^{T - 1}} \norm{\sum_{i = 1}^{T} \alpha_i z_i}_4 \le \sup_{\alpha \in N_{\veps} (\S^{T - 1})} \norm{\sum_{i = 1}^{T} \alpha_i z_i}_4 + \veps \cdot \sup_{\alpha \in \S^{T - 1}} \norm{\sum_{i = 1}^{T} \alpha_i z_i}_4 \\
    	\implies\, & \sup_{\alpha \in \S^{T - 1}} \norm{\sum_{i = 1}^{T} \alpha_i z_i}_4 \le \frac{1}{1 - \veps} \cdot \sup_{\alpha \in N_{\veps} (\S^{T - 1})} \norm{\sum_{i = 1}^{T} \alpha_i z_i}_4.
    \end{align*}
    Now, for any fixed $\alpha \in N_{\veps} (\S^{T - 1})$, we know that $\sum_{i = 1}^{T} \alpha_i z_i \sim \sN (0, I_k)$. According to Lemma~\ref{lemma:conc_non_exp}, we know that there exists a constant $C > 0$ such that
    \begin{equation*}
    	\P \left( \norm{\sum_{i = 1}^{T} \alpha_i z_i}_4^4 \ge Ck \right) \le C \exp (- C k^{1/2}).
    \end{equation*}
    Applying a union bound then gives
    \begin{equation*}
    	\P \left( \sup_{\alpha \in N_{\veps} (\S^{T - 1})} \norm{\sum_{i = 1}^{T} \alpha_i z_i}_4^4 \ge C k \right) \le C \left( \frac{C}{\veps} \right)^T \exp (- C k^{1/2}).
    \end{equation*}
    By our assumption, $T \ll k^{1/2}$. Now we choose $\veps = 1/2$, it follows that
    \begin{equation*}
    	\sup_{\alpha \in \S^{T - 1}} \norm{\sum_{i=1}^{T} \alpha_i z_i}_4^4 \le 2^4 \cdot \sup_{\alpha \in N_{\veps} (\S^{T - 1})} \norm{\sum_{i = 1}^{T} \alpha_i z_i}_4^4 \le C k
    \end{equation*}
     with high probability. This completes the proof of Lemma~\ref{lemma:wt}.
     
\subsection{Proof of Lemma~\ref{lemma:etat}}
Recall the definition of $\eta_t$ from \cref{eq:etat-def}:
	\begin{equation*}
		\eta_t = \sum_{i = 1}^t \sum_{j = 1}^{i - 1} \fp_j \cdot \frac{\langle \fp_j, \tA_i \txp_{i - 1} \rangle}{\|\fp_j\|_2^2} \cdot \frac{\langle \txp_{i - 1}, \tx_{t - 1} \rangle}{\|\txp_{i - 1}\|_2^2} = \sum_{i=1}^{t} \alpha_{i, t} \Pi_{F_{1:i-1}} z_i = \sum_{i=2}^{t} \alpha_{i, t} \Pi_{F_{1:i-1}} z_i,
	\end{equation*}
    where the last equality follows from the fact that $\Pi_{F_{1:0}} = 0$, and the $\alpha_{i, t}$'s and $z_i$'s are defined in the proof of \cref{lemma:wt}. We thus obtain that
    \begin{align*}
    	& \norm{\eta_t}_2^2 = \sum_{i=2}^{t} \sum_{j=2}^{t} \alpha_{i, t} \alpha_{j, t} \left\langle \Pi_{F_{1:i-1}} z_i, \Pi_{F_{1:j-1}} z_j \right\rangle \\
    	\le\, & \sum_{i=2}^{t} \alpha_{i, t}^2 \norm{\Pi_{F_{1:i-1}} z_i}_2^2 + 2 \sum_{2 \le i < j \le t} \left\vert \alpha_{i, t} \alpha_{j, t} \left\langle \Pi_{F_{1:i-1}} z_i, \Pi_{F_{1:j-1}} z_j \right\rangle \right\vert,
    \end{align*}
    where for $i < j$, we know that
    \begin{equation*}
    	\left\langle \Pi_{F_{1:i-1}} z_i, \Pi_{F_{1:j-1}} z_j \right\rangle = z_j^\top \Pi_{F_{1:j-1}} \Pi_{F_{1:i-1}} z_i = z_j^\top \Pi_{F_{1:i-1}} z_i = \Tr \left( \Pi_{F_{1:i-1}} z_i z_j^\top \right).
    \end{equation*}
    Using the same argument as in the proof of \cref{lemma:wt}, we see that for $i < j$, $\Pi_{F_{1:i-1}}$, $z_i \sim \sN (0, I_k)$, and $z_j \sim \sN (0, I_k)$ are mutually independent. In fact, given $(\Pi_{F_{1:i-1}}, z_i)$, the conditional distribution of $z_j$ is always $\sN (0, I_k)$, and given $\Pi_{F_{1:i-1}}$, the conditional distribution of $z_i$ is always $\sN (0, I_k)$. This further implies that
    \begin{align*}
    	& \P \left( \left\vert z_j^\top \Pi_{F_{1:i-1}} z_i \right\vert \ge \sqrt{C \log k} \norm{\Pi_{F_{1:i-1}} z_i}_2 \big\vert (\Pi_{F_{1:i-1}}, z_i) \right) \le\, k^{- C}, \\
    	& \P \left( \norm{\Pi_{F_{1:i-1}} z_i}_2 \ge \sqrt{C T} \right) \le\,  C \exp (- C T),
    \end{align*}
    where $C > 0$ is an absolute constant. Therefore, we conclude that with probability $1 - o_d (1)$, for all $2 \le i < j \le T$, one has
    \begin{equation*}
    	\left\vert \left\langle \Pi_{F_{1:i-1}} z_i, \Pi_{F_{1:j-1}} z_j \right\rangle \right\vert \le \sqrt{C T \log k}, \ \norm{\Pi_{F_{1:i-1}} z_i}_2^2 \le CT,
    \end{equation*}
    thus leading to the following estimate:
    \begin{align*}
    	\norm{\eta_t}_2^2 \le\, & CT \sum_{i=2}^{t} \alpha_{i, t}^2 + 2 \sqrt{CT \log k} \sum_{2 \le i < j \le t} \left\vert \alpha_{i, t} \alpha_{j, t} \right\vert \le CT \left( 1 - \alpha_{1, t}^2 \right) + \sqrt{CT \log k} \left( \sum_{i=2}^{t} \left\vert \alpha_{i, t} \right\vert \right)^2 \\
    	\le\, & CT \left( 1 - \alpha_{1, t}^2 \right) + t \sqrt{C T \log k} \cdot \sum_{i=2}^{t} \alpha_{i, t}^2 \le C T^{3/2} (\log k)^{1/2} \left( 1 - \alpha_{1, t}^2 \right).
    \end{align*}
    Note that by definition, we have
    \begin{equation*}
    	1 - \alpha_{1, t}^2 = 1 - \frac{\langle \tx_{0}^{\perp}, \tx_{t-1} \rangle^2}{d \norm{\tx_{0}^{\perp}}_2^2} = 1 - \frac{1}{d} \norm{\Pi_{x_0} \tx_{t-1}}_2^2 = \frac{1}{d} \norm{\Pi_{x_0}^{\perp} \tx_{t - 1}}_2^2.
    \end{equation*}
    Hence, we finally deduce that with high probability,
    \begin{equation*}
    	\norm{\eta_t}_2 \le C T^{3/4} (\log k)^{1/4} \frac{\norm{\Pi_{x_0}^{\perp} \tx_{t - 1}}_2}{\sqrt{d}}
    \end{equation*}
    for all $t \in [T]$, as desired. This concludes the proof.
    
\subsection{Proof of Lemma~\ref{lemma:norm-of-vt}}
Recall the definition of $v_{t+1}$:
    \begin{equation*}
    	v_{t+1} = \sum_{i = 1}^{t} f_i^{\perp} \cdot \frac{\langle x_i,  \tilde{x}_i^{\perp} \rangle}{\norm{f_i^{\perp}}_2^2} \cdot \frac{\langle \tilde{x}_i^{\perp}, \tilde{x}_t \rangle}{\norm{\tilde{x}_i^{\perp}}_2^2}.
    \end{equation*}
    Since $\{ f_i^{\perp} \}_{1 \le i \le t}$ is an orthogonal set, we readily see that
    \begin{align*}
    	\norm{v_{t+1}}_2^2 =\, & \sum_{i = 1}^{t} \frac{\langle x_i,  \tilde{x}_i^{\perp} \rangle^2}{\norm{f_i^{\perp}}_2^2} \cdot \frac{\langle \tilde{x}_i^{\perp}, \tilde{x}_t \rangle^2}{\norm{\tilde{x}_i^{\perp}}_2^4} = \sum_{i = 1}^{t} \frac{\norm{x_i}_2^2}{d} \cdot \frac{\langle \tilde{x}_i,  \tilde{x}_i^{\perp} \rangle^2}{\norm{f_i^{\perp}}_2^2} \cdot \frac{\langle \tilde{x}_i^{\perp}, \tilde{x}_t \rangle^2}{\norm{\tilde{x}_i^{\perp}}_2^4} \\
    	=\, & \sum_{i = 1}^{t} \frac{\norm{x_i}_2^2}{d} \cdot \frac{\langle \tilde{x}_i^{\perp},  \tilde{x}_i^{\perp} \rangle^2}{\norm{f_i^{\perp}}_2^2} \cdot \frac{\langle \tilde{x}_i^{\perp}, \tilde{x}_t \rangle^2}{\norm{\tilde{x}_i^{\perp}}_2^4} = \sum_{i = 1}^{t} \frac{\norm{x_i}_2^2}{d} \cdot \frac{\langle \tilde{x}_i^{\perp}, \tilde{x}_t \rangle^2}{\norm{f_i^{\perp}}_2^2} = \sum_{i = 1}^{t} \frac{\norm{x_i^{\perp}}_2^2}{\norm{f_i^{\perp}}_2^2} \cdot \frac{\langle \tilde{x}_i^{\perp}, \tilde{x}_t \rangle^2}{\norm{\tilde{x}_i^{\perp}}_2^2}.
    \end{align*}
    According to \cref{lemma:conditioning}, we have $x_i^{\perp} = \Pi_{X_{0: i-1}}^{\perp} \bar{A}_i^{\top} f_i^{\perp}$ where $\bar{A}_i \perp (\Pi_{X_{0: i-1}}^{\perp}, f_i^{\perp})$. Therefore, given $(\Pi_{X_{0: i-1}}^{\perp}, f_i^{\perp})$, the conditional distribution of $x_i^{\perp}$ is specified as
    \begin{equation*}
    	x_i^{\perp} \stackrel{d}{=} \frac{\norm{f_i^{\perp}}_2}{\sqrt{d}} \cdot \sN \left( 0, \Pi_{X_{0: i-1}}^{\perp} \right) \Big\vert \left( \Pi_{X_{0: i-1}}^{\perp}, f_i^{\perp} \right),
    \end{equation*}
    which further implies that $\norm{x_i^{\perp}}_2^2 / \norm{f_i^{\perp}}_2^2 \sim \chi^2 (d - i) / d$. Using standard concentration arguments, we know that with high probability for all $i \in [T]$:
    \begin{equation*}
    	\left\vert \frac{\norm{x_i^{\perp}}_2^2}{\norm{f_i^{\perp}}_2^2} - 1 \right\vert \le C \left( \frac{T}{d} + \frac{\log k}{\sqrt{d}} \right) \le \frac{C \log k}{\sqrt{d}},
    \end{equation*}
    where the last inequality follows from the condition of \cref{thm:negative}: $T \ll \sqrt{d}$. With the aid of the above estimation, we deduce that
    \begin{align*}
    	\left\vert \|v_{t + 1}\|_2^2 - \|\Pi_{x_0}^{\perp}(\tx_t)\|_2^2 \right\vert =\, & \left\vert \sum_{i = 1}^{t} \left( \frac{\norm{x_i^{\perp}}_2^2}{\norm{f_i^{\perp}}_2^2} - 1 \right) \frac{\langle \tilde{x}_i^{\perp}, \tilde{x}_t \rangle^2}{\norm{\tilde{x}_i^{\perp}}_2^2} \right\vert \le \sum_{i = 1}^{t} \left\vert \frac{\norm{x_i^{\perp}}_2^2}{\norm{f_i^{\perp}}_2^2} - 1 \right\vert \cdot \frac{\langle \tilde{x}_i^{\perp}, \tilde{x}_t \rangle^2}{\norm{\tilde{x}_i^{\perp}}_2^2} \\
    	\le\, & \frac{C \log k}{\sqrt{d}} \sum_{i = 1}^{t} \frac{\langle \tilde{x}_i^{\perp}, \tilde{x}_t \rangle^2}{\norm{\tilde{x}_i^{\perp}}_2^2} = \frac{C \log k}{\sqrt{d}} \|\Pi_{x_0}^{\perp}(\tx_t)\|_2^2,
    \end{align*}
    which completes the proof of this lemma.
    
\subsection{Proof of Lemma~\ref{lemma:term-II}}
The proof is similar to that of \cref{lemma:norm-of-vt}. By definition, $x_t^{\perp} = \Pi_{X_{0:t - 1}}^{\perp} \bA_t^{\top} \fp_t$, we get that
	\begin{equation*}
		\mathsf{II} = \sum_{i=1}^{t} \frac{\langle \fp_i, f_t \rangle^2}{\|\fp_i\|_2^2} \cdot \frac{\langle x_i, \xp_i \rangle}{\|\fp_i\|_2^2} = \sum_{i=1}^{t} \frac{\langle \fp_i, f_t \rangle^2}{\|\fp_i\|_2^2} \cdot \frac{\norm{\xp_i}_2^2}{\|\fp_i\|_2^2}.
	\end{equation*}
    From the proof of \cref{lemma:norm-of-vt} we know that, with probability $1 - o_d (1)$, for all $i \in [T]$ one has
    \begin{equation*}
    	\frac{\norm{\xp_i}_2^2}{\|\fp_i\|_2^2} \in \left[ 1 - \frac{C \log k}{\sqrt{d}}, 1 + \frac{C \log k}{\sqrt{d}} \right],
    \end{equation*}
    which immediately implies that
    \begin{align*}
    	\mathsf{II} \le\, & \left( 1 + \frac{C \log k}{\sqrt{d}} \right) \sum_{i=1}^{t} \frac{\langle \fp_i, f_t \rangle^2}{\|\fp_i\|_2^2} = \left( 1 + \frac{C \log k}{\sqrt{d}} \right) \norm{f_t}_2^2, \\
    	\mathsf{II} \ge\, & \left( 1 - \frac{C \log k}{\sqrt{d}} \right) \sum_{i=1}^{t} \frac{\langle \fp_i, f_t \rangle^2}{\|\fp_i\|_2^2} = \left( 1 - \frac{C \log k}{\sqrt{d}} \right) \norm{f_t}_2^2.
    \end{align*}
    This completes the proof.

\subsection{Proof of Lemma~\ref{lemma:bound-ft}}
Note that $y_t = w_t - \eta_t + v_t$, then by power mean inequality and Minkowski's inequality,
	\begin{align*}
		\|f_t\|_2^2 = \norm{y_t}_6^6 = \|w_t - \eta_t + v_t\|_6^6 \leq 243 \cdot (\|w_t\|_6^6 + \|\eta_t\|_6^6 + \|v_t\|_6^6). 
	\end{align*}
	It follows from \cref{lemma:etat} that with high probability,
	\begin{equation*}
		\|\eta_t\|_6^6 \leq \|\eta_t\|_2^6 \leq C T^{9/2} (\log k)^{3/2} \frac{\norm{\Pi_{x_0}^{\perp} \tx_{t-1}}_2^6}{d^3}
	\end{equation*}
	for all $1 \leq t \leq T$. Leveraging \cref{lemma:norm-of-vt}, we know that with high probability, for all $1 \leq t \leq T$ we have $\|v_t\|_6^6 \leq \|v_t\|_2^6 \leq 2\|\Pi_{x_0}^{\perp}(\tx_{t - 1})\|_2^6$. Finally, we upper bound $\|w_t\|_6^6$. Recall that $z_i$ and $\alpha_{i,t}$ are defined in \cref{eq:z-and-alpha} in the proof of \cref{lemma:wt}. Furthermore, the following properties are satisfied: (i) $z_1, z_2, \cdots, z_T \iidsim \normal(0, I_k)$; (ii) For all $1 \leq t \leq T$ we have $\sum_{i = 1}^t \alpha_{i, t}^2 = 1$; (iii) $w_t = \sum_{i = 1}^t \alpha_{i, t} z_i$. Then, we can use the same covering argument as in the proof of \cref{lemma:wt} to show that
	\begin{equation*}
		\P \left( \sup_{\alpha \in \S^{T - 1}} \norm{\sum_{i=1}^{T} \alpha_i z_i}_6^6 \ge C k \right) \le C^T \exp (- C k^{1/3}).
	\end{equation*}
    By our assumption, $T \ll k^{1/3}$. Therefore, $\norm{w_t}_6^6 \le C k$ with high probability. As a consequence, we deduce that
    \begin{equation*}
    	\norm{f_t}_2^2 \le C \left( k + T^{9/2} (\log k)^{3/2} \frac{\norm{\Pi_{x_0}^{\perp} \tx_{t-1}}_2^6}{d^3} + \|\Pi_{x_0}^{\perp}(\tx_{t - 1})\|_2^6 \right) \le C \left( k + \|\Pi_{x_0}^{\perp}(\tx_{t - 1})\|_2^6 \right),
    \end{equation*}
    since $T \ll d^{1/2}$. This completes the proof.

\subsection{Proof of Lemma~\ref{lemma:term-I}}
Notice that 
	\begin{align}\label{eq:dcmp}
		\frac{\langle \Pi_{F_{1:i}}^{\perp} \tA_{i + 1} \txp_i, f_t \rangle}{\|\txp_i\|_2} = \frac{\langle  \tA_{i + 1} \txp_i, f_t \rangle}{\|\txp_i\|_2} - \frac{\langle \Pi_{F_{1:i}} \tA_{i + 1} \txp_i, f_t \rangle}{\|\txp_i\|_2}.
	\end{align}
	We first upper bound the second term on the right hand side of \cref{eq:dcmp}. Applying Cauchy-Schwarz inequality implies that
	\begin{align}\label{eq:parallel}
		\frac{\vert \langle \Pi_{F_{1:i}} \tA_{i + 1} \txp_i, f_t \rangle \vert}{\|\txp_i\|_2} \leq \frac{\|\Pi_{F_{1:i}} \tA_{i + 1} \txp_i\|_2}{\|\txp_i\|_2} \cdot \|f_t\|_2.
	\end{align}
	Since $\tA_{i + 1} \perp \cF_{i, i}$, we can deduce that $\sqrt{d}\|\Pi_{F_{1:i}} \tA_{i + 1} \txp_i\|_2 / \|\txp_i\|_2 \overset{d}{=} \sqrt{X_D}$, where $1 \leq D \leq i$ is the rank of $F_{1:i}$ and $X_D$ is a chi-squared random variable with $D$ degrees of freedom. By Bernstein's inequality (\cref{lemma:bernstein}), we obtain that with high probability for all $0 \leq i \leq T$, $\|\Pi_{F_{1:i}} \tA_{i + 1} \txp_i\|_2 / \|\txp_i\|_2 \leq C \sqrt{T / d}$ for some absolute constant $C > 0$. Applying this result and \cref{lemma:bound-ft}, we conclude from \cref{eq:parallel} that, there exists a positive absolute constant $C$, such that with high probability for all $0 \leq i < t \leq T$:
	\begin{align*}
		\frac{\vert \langle \Pi_{F_{1:i}} \tA_{i + 1} \txp_i, f_t \rangle \vert}{\|\txp_i\|_2} \leq C \sqrt{\frac{T}{d}} \cdot \left( \sqrt{k} + \|\Pi_{x_0}^{\perp}(\tx_{t - 1})\|_2^3 \right). 
	\end{align*}
	Next, we consider ${\langle  \tA_{i + 1} \txp_i, f_t \rangle} / {\|\txp_i\|_2}$. Direct computation implies that
	\begin{align*}
		\frac{\langle  \tA_{i + 1} \txp_i, f_t \rangle}{\|\txp_i\|_2} =& \frac{\langle \tA_{i + 1} \txp_i, (w_t - \eta_t + v_t)^3 \rangle}{\|\txp_i\|_2} \\
		= & \frac{\langle \tA_{i + 1} \txp_i, w_t^3 \rangle}{\|\txp_i\|_2}  + \frac{3\langle \tA_{i + 1} \txp_i, w_t^2(v_t - \eta_t)\rangle}{\|\txp_i\|_2} + \frac{3\langle \tA_{i + 1} \txp_i, w_t(v_t - \eta_t)^2\rangle}{\|\txp_i\|_2} + \frac{\langle \tA_{i + 1} \txp_i, (v_t - \eta_t)^3 \rangle}{\|\txp_i\|_2}. 
	\end{align*}
	In what follows, we analyze each of the terms above, separately. Recall that we have defined $z_i$ and $\alpha_{i,t}$ in \cref{eq:z-and-alpha}. Using the representation $w_t = \sum_{i = 1}^t \alpha_{i,t} z_i$, we can then reformulate the first summand above as follow:
	\begin{equation*}
		\frac{\langle \tA_{i + 1} \txp_i, w_t^3 \rangle}{\|\txp_i\|_2} = \frac{1}{\sqrt{d}} \langle z_{i + 1}, ( \mbox{$\sum_{j = 1}^t \alpha_{j,t} z_j$} )^3 \rangle.
	\end{equation*}
	We then show that the above quantity concentrates around its expectation uniformly for $\alpha_t \in \S^{t - 1}$ and $t \in [T]$, via a covering argument similar to that in the proof of \cref{lemma:wt}. First, note that for any fixed $\alpha_t \in \S^{t - 1}$, one has
	\begin{equation*}
		\left( z_{i + 1}, \sum_{j = 1}^t \alpha_{j,t} z_j \right) \stackrel{d}{=} \left( \alpha_{i+1, t} z + \sqrt{1 - \alpha_{i+1, t}^2} g, z \right),
	\end{equation*}
    where $z, g \sim \sN (0, I_k)$ are mutually independent. This further implies that
    \begin{equation*}
    	\E \left[ \frac{\langle \tA_{i + 1} \txp_i, w_t^3 \rangle}{\|\txp_i\|_2} \right] = \frac{1}{\sqrt{d}} \E \left[ \langle \alpha_{i+1, t} z, z^3 \rangle \right] = \frac{3 k}{\sqrt{d}} \alpha_{i+1, t} = \frac{3k}{{d}} \cdot \frac{\langle \txp_i, \tx_{t - 1} \rangle}{\|\txp_i\|_2}.
    \end{equation*}
	Moreover, using \cref{lemma:conc_non_exp}, we deduce that there exist constants $C_0, C_1, C_2 > 0$, such that
	\begin{align}\label{eq:1}
		\P \left( \left| \frac{1}{\sqrt{d}} \langle z_{i + 1}, ( \mbox{$\sum_{j = 1}^t \alpha_{j,t} z_j$} )^3 \rangle - \frac{3 k}{\sqrt{d}} \alpha_{i+1, t} \right| \ge C_0 \sqrt{\frac{k}{d}} \cdot \sqrt{T \log k} \right) \le C_1 \exp (- C_2 T \log k),
	\end{align}
	where $C_1$ and $C_2$ depend on $C_0$, and $C_2 \to \infty$ as $C_0 \to \infty$. Let $\veps > 0$ be a small constant (to be determined later), for $\alpha_t, \alpha_t' \in \S^{t - 1}$ satisfying $\norm{\alpha_t - \alpha_t'}_2 \le \veps$, we have
	\begin{align*}
		& \left\vert \left( \frac{1}{\sqrt{d}} \langle z_{i + 1}, ( \mbox{$\sum_{j = 1}^t \alpha_{j,t} z_j$} )^3 \rangle - \frac{3 k}{\sqrt{d}} \alpha_{i+1, t} \right) - \left( \frac{1}{\sqrt{d}} \langle z_{i + 1}, ( \mbox{$\sum_{j = 1}^t \alpha_{j,t}' z_j$} )^3 \rangle - \frac{3 k}{\sqrt{d}} \alpha_{i+1, t}' \right) \right\vert \\
		\le\, & \frac{3k \veps}{\sqrt{d}} + \frac{1}{\sqrt{d}} \left\vert \left\langle z_{i + 1}, ( \mbox{$\sum_{j = 1}^t \alpha_{j,t} z_j$} )^3 - ( \mbox{$\sum_{j = 1}^t \alpha_{j,t}' z_j$} )^3 \right\rangle \right\vert \\
		\le\, & \frac{3k \veps}{\sqrt{d}} + \frac{1}{\sqrt{d}} \norm{z_{i+1}}_{\infty} \cdot \norm{( \mbox{$\sum_{j = 1}^t \alpha_{j,t} z_j$} )^3 - ( \mbox{$\sum_{j = 1}^t \alpha_{j,t}' z_j$} )^3}_1 \\
		\le\, & \frac{3k \veps}{\sqrt{d}} + \frac{1}{\sqrt{d}} \norm{z_{i+1}}_{\infty} \cdot \frac{3}{2} \left( \norm{\mbox{$\sum_{j = 1}^t \alpha_{j,t} z_j$}}_2^2 + \norm{\mbox{$\sum_{j = 1}^t \alpha_{j,t}' z_j$}}_2^2 \right) \cdot \norm{\mbox{$\sum_{j = 1}^t \alpha_{j,t} z_j$} - \mbox{$\sum_{j = 1}^t \alpha_{j,t}' z_j$}}_{\infty} \\
		\le\, & \frac{3k \veps}{\sqrt{d}} + \frac{3 \sqrt{t} \veps}{\sqrt{d}} \sup_{1 \le j \le t} \norm{z_{j}}_{\infty}^2 \cdot \sup_{\alpha_t \in \S^{t-1}} \norm{\mbox{$\sum_{j = 1}^t \alpha_{j,t} z_j$}}_2^2,
	\end{align*} 
    where the last line follows from Cauchy-Schwarz inequality. According to \cref{lemma:gaussian-tail}, we know that there exists a numerical constant $C > 0$, such that with probability $1 - o_d(1)$, we have $\|z_i\|_{\infty} \leq C\sqrt{\log k}$ for all $i \in [T]$. Moreover, using a covering argument similar to the proof of Lemma~\ref{lemma:wt}, we deduce that with high probability,
    \begin{equation*}
    	\sup_{\alpha_t \in \S^{t-1}} \norm{\mbox{$\sum_{j = 1}^t \alpha_{j,t} z_j$}}_2^2 \le C k \ \text{for all} \ t \in [T],
    \end{equation*}
    thus leading to the following estimate:
    \begin{equation*}
    	\left\vert \left( \frac{1}{\sqrt{d}} \langle z_{i + 1}, ( \mbox{$\sum_{j = 1}^t \alpha_{j,t} z_j$} )^3 \rangle - \frac{3 k}{\sqrt{d}} \alpha_{i+1, t} \right) - \left( \frac{1}{\sqrt{d}} \langle z_{i + 1}, ( \mbox{$\sum_{j = 1}^t \alpha_{j,t}' z_j$} )^3 \rangle - \frac{3 k}{\sqrt{d}} \alpha_{i+1, t}' \right) \right\vert \le C k \veps \sqrt{\frac{T}{d}} \log k.
    \end{equation*}
	Therefore, we can apply an $\veps$-net covering argument with $\veps = 1/k$ on $\S^{t-1}$, and choose $C_0$ to be large enough so that $C_2 > 1$. This finally implies that with high probability, for all $t \in [T]$ we have
	\begin{equation*}
		\sup_{\alpha_t \in \S^{t - 1}} \left| \frac{1}{\sqrt{d}} \langle z_{i + 1}, ( \mbox{$\sum_{j = 1}^t \alpha_{j,t} z_j$} )^3 \rangle - \frac{3 k}{\sqrt{d}} \alpha_{i+1, t} \right| \le C \sqrt{\frac{k}{d}} \cdot \sqrt{T \log k},
	\end{equation*}
    which further implies that
    \begin{equation*}
    	\left\vert \frac{\langle \tA_{i + 1} \txp_i, w_t^3 \rangle}{\|\txp_i\|_2} - \frac{3k}{{d}} \cdot \frac{\langle \txp_i, \tx_{t - 1} \rangle}{\|\txp_i\|_2} \right\vert \le C \sqrt{\frac{k}{d}} \cdot \sqrt{T \log k}.
    \end{equation*}
	
	Now we try to upper bound the remainders. We already know that there exists a numerical constant $C > 0$, such that with probability $1 - o_d(1)$, we have $\|z_i\|_{\infty} \leq C\sqrt{\log k}$ for all $i \in [T]$. Therefore, with probability $1 - o_d(1)$, the following holds for all $t \in [T]$:
	\begin{align*}
		\|w_t\|_{\infty} \leq\, & \sum_{i = 1}^t |\alpha_{i,t}| \cdot \|z_i\|_{\infty} \leq C \sqrt{\log k} \left( \vert \alpha_{1, t} \vert + \sum_{i=2}^{t} \vert \alpha_{i, t} \vert \right) \le C \sqrt{\log k} \left(\vert \alpha_{1, t} \vert + \sqrt{t \sum_{i=2}^{t} \alpha_{i, t}^2 } \right) \\
		=\, & C \sqrt{\log k} \left( \frac{1}{\sqrt{d}} \|\Pi_{x_0} (\tx_{t - 1})\|_2 + \sqrt{\frac{t}{d}} \|\Pi_{x_0}^{\perp}(\tx_{t - 1})\|_2 \right) \le C \sqrt{\log k} \left( 1 + \sqrt{\frac{T}{d}} \|\Pi_{x_0}^{\perp}(\tx_{t - 1})\|_2 \right).
	\end{align*}
	According to power mean inequality, there exists a numerical constant $C > 0$, such that with probability $1 - o_d(1)$, for all $1 \leq i+1 \leq t \leq T$ we have
	\begin{align}\label{eq:2}
	\begin{split}
		\frac{3|\langle \tA_{i + 1} \txp_i, w_t(v_t - \eta_t)^2\rangle|}{\|\txp_i\|_2} \leq & \frac{3}{\sqrt{d}} \|z_{i + 1}\|_{\infty}\cdot \|w_t\|_{\infty} \cdot \|v_t - \eta_t\|_2^2 \\
		 \leq & \frac{C \log k}{\sqrt{d}} \cdot \left( 1 + \sqrt{\frac{T}{d}} \|\Pi_{x_0}^{\perp}(\tx_{t - 1})\|_2 \right) \big(\|\eta_t\|_2^2 + \|v_t\|_2^2\big) \\
		\overset{(i)}{\leq} & \frac{C \log k}{\sqrt{d}} \cdot \left( 1 + \sqrt{\frac{T}{d}} \|\Pi_{x_0}^{\perp}(\tx_{t - 1})\|_2 \right) \|\Pi_{x_0}^{\perp}(\tx_{t - 1})\|_2^2.
	\end{split}
	\end{align}
	In the above equation, \emph{(i)} follows from \cref{lemma:etat} and \cref{lemma:norm-of-vt}. Similarly, we can conclude that there exists a numerical constant $C > 0$, such that with probability $1 - o_d(1)$, for all $1 \leq i + 1 \leq t \leq T$ the following holds:
	\begin{align}\label{eq:3}
	\begin{split}
		\frac{3|\langle \tA_{i + 1} \txp_i, w_t^2(v_t - \eta_t)\rangle|}{\|\txp_i\|_2} \leq & \frac{3}{\sqrt{d}}\|z_{i + 1}\|_{\infty} \cdot \|w_t^2 (v_t - \eta_t) \|_1 \\
		\overset{(ii)}{\leq} & \frac{C \sqrt{\log k}}{\sqrt{d}} \norm{w_t^2}_2 \cdot \|v_t - \eta_t\|_2 \\
		\leq & \frac{C \sqrt{\log k}}{\sqrt{d}} \norm{w_t}_4^2 \cdot \big(\|\eta_t \|_2 + \|v_t\|_2 \big) \\
		\overset{(iii)}{\leq} & \frac{C \sqrt{k \log k}}{\sqrt{d}} \cdot \norm{\Pi_{x_0}^{\perp}(\tx_{t - 1})}_2.
	\end{split}
	\end{align}
	In the above inequalities, \emph{(ii)} is due to \Holder's inequality and $\norm{z_{i+1}}_{\infty} \le C \sqrt{\log k}$, and \emph{(iii)} is due to \cref{lemma:wt}, \cref{lemma:etat}, and \cref{lemma:norm-of-vt}. Finally, according to the power mean inequality, we obtain that with probability $1 - o_d(1)$,
	\begin{align}\label{eq:4}
	\begin{split}
		\frac{|\langle \tA_{i + 1} \txp_i, (v_t - \eta_t)^3 \rangle |}{\|\txp_i\|_2} \leq\, & \frac{4}{\sqrt{d}}\|z_{i + 1}\|_{\infty} \cdot \left( \|v_t\|_3^3 + \|\eta_t\|_3^3 \right) \\
		\leq\, & C\sqrt{\frac{ {\log k}}{{d}}} \cdot \big( \|v_t\|_2^3 + \|\eta_t\|_2^3 \big) \\
		\leq\, & 2C\sqrt{\frac{ {\log k}}{{d}}} \cdot \norm{\Pi_{x_0}^{\perp}(\tx_{t - 1})}_2^3.
	\end{split}
	\end{align}
	\cref{lemma:term-I} then follows from \cref{eq:parallel,eq:1,eq:2,eq:3,eq:4} and our assumptions.

\section{Increasing objective function: Proof of Theorem~\ref{thm:increasing}}\label{sec:pf_increasing}

This section will be devoted to proving \cref{thm:increasing}. The main technical lemma employed to prove the theorem can be stated as follows:
\begin{lem}\label{lemma:increasing}
	Under the condition of \cref{thm:increasing}, for any $t \in [T_c]$ we have 
	\begin{align}
		& y_t = \sum_{i = 1}^t \alpha_{i,t}z_i + \sum_{j = 1}^{t - 1} \zeta_{j,t} f_j + \veps_t, \label{eq:increasing1} \\
		& \frac{d}{9k^2}\|x_t\|_2^2 = \frac{\langle x_0, x_t \rangle^2}{9k^2} + o_P(1) = 1 + o_P(1), \label{eq:increasing2}
	\end{align}
	where as in the proof of \cref{lemma:wt} we have
	\begin{align*}
		z_i = \tA_i \txp_{i - 1} \cdot \frac{\sqrt{d}}{\|\txp_{i - 1}\|_2}, \qquad \alpha_{i, t} = \frac{\langle \txp_{i - 1}, \tx_{t - 1} \rangle}{\sqrt{d} \|\txp_{i - 1}\|_2}.
	\end{align*}
	Furthermore, the above quantities satisfy:
\begin{align}
    & \alpha_{1,t} = 1 + o_P(1), \label{eq:increasing4} \\
    & \frac{3k}{d}\zeta_{j,t}= 1 + o_P(1),  \label{eq:increasing5}\\
    & \|\veps_t\|_2 = o_P(d / \sqrt{k}). \label{eq:increasing6}
\end{align}
In addition, with probability $1 - o_d(1)$, for all $t \in [T_c]$ we have
	\begin{align}\label{eq:increasing3}
		\frac{1}{k}\|f_t\|_2^2 \leq 20. 
	\end{align}
\end{lem}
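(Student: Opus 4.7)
The plan is to establish all six claims by induction on $t \in [T_c]$, using the Gaussian conditioning of \cref{lemma:conditioning} at every step. Because $T_c$ is a fixed integer not growing with $d$, it suffices to prove each statement with probability $1 - o_d(1)$ at each $t$ and then take a union bound for \eqref{eq:increasing3}.

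For the base case $t = 1$, since $\tx_0 = x_0 \in \S^{d-1}(\sqrt d)$ is independent of $A$, we have $y_1 = A \tx_0 = z_1 \sim \sN(0, I_k)$, giving $\alpha_{1,1} = 1$, $\veps_1 = 0$, and an empty sum over $f_j$. Then $\|f_1\|_2^2 = \|z_1\|_6^6$ concentrates at its mean $15k$, yielding \eqref{eq:increasing3}. For \eqref{eq:increasing2}, Gaussian conditioning writes $A = z_1 \tx_0^\top / d + \bA_1 \Pi_{\tx_0}^\perp$, so $x_1 = (\langle z_1, z_1^3\rangle / d)\, x_0 + \Pi_{x_0}^\perp \bA_1^\top z_1^3$; the first term contributes $9k^2/d$ to $\|x_1\|_2^2$ up to $o_P$, while the orthogonal piece has squared norm $O_P(k) = o_P(k^2/d)$ because $k \gg d^{3/2} \gg d$, and $\langle x_0, x_1\rangle = 3k(1 + o_P(1))$.

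For the inductive step, assume the claims hold for all $s \le t$ and apply \cref{lemma:conditioning} to $y_{t+1}$. Expanding $\Pi_{F_{1:i-1}}^\perp = I - \Pi_{F_{1:i-1}}$ recasts the first sum as $\sum_{i=1}^{t+1} \alpha_{i, t+1} z_i$ minus a correction $\Pi_{F_{1:i-1}} \tA_i \txp_{i-1} \cdot \langle \txp_{i-1}, \tx_t\rangle / \|\txp_{i-1}\|_2^2$; expanding $\fp_{j} = f_j - \Pi_{F_{1:j-1}} f_j$ recasts the second sum as $\sum_{j=1}^t \zeta_{j, t+1} f_j$ plus residual cross terms. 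All remainders are collected into $\veps_{t+1}$. Each correction factor $\Pi_{F_{1:i-1}} \tA_i \txp_{i-1}/\|\txp_{i-1}\|_2$ is a projection of a conditionally isotropic Gaussian onto an $(i-1)$-dimensional subspace, hence of norm $O_P(\sqrt{T_c}) = O_P(1)$; multiplied by $|\langle \txp_{i-1}, \tx_t\rangle|/\|\txp_{i-1}\|_2 \le \sqrt d$, it contributes $O_P(\sqrt d) = o_P(d/\sqrt k)$. For the leading coefficients, the inductive hypothesis \eqref{eq:increasing2} gives
\begin{equation*}
    \|\Pi_{x_0}^\perp(\tx_s)\|_2^2 = d - \frac{\langle x_0, x_s\rangle^2}{\|x_s\|_2^2} = d \cdot o_P(1),
\end{equation*}
so $\tx_s$ is close to $\tx_0$ for all $s \le t$; this immediately yields $\alpha_{1, t+1} = \langle \tx_0, \tx_t\rangle/d = 1 + o_P(1)$, and (after collecting $\langle x_j, \txp_j\rangle \approx 3k$, $\|\fp_j\|_2^2 \approx \|f_j\|_2^2 \approx 15k$, and telescoping the projections in $\langle \txp_j, \tx_t\rangle/\|\txp_j\|_2^2$) also $\zeta_{j, t+1} = d/(3k) \cdot (1 + o_P(1))$. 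Claim \eqref{eq:increasing2} at $t+1$ follows by applying \cref{lemma:conditioning} to $x_{t+1}$ exactly as in the base case: the $i = 0$ summand $\tx_0 \cdot \langle z_1, f_{t+1}\rangle/d \approx (3k/d) x_0$ dominates. Finally, \eqref{eq:increasing3} at $t+1$ reduces to bounding $\|y_{t+1}\|_6^6$, which is dominated by $\|\alpha_{1, t+1} z_1\|_6^6 \approx 15k$, while $\|\sum_j \zeta_{j, t+1} f_j\|_6^6 = O(d^6/k^5) = o(k)$ since $k \gg d$.

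The main obstacle is the bookkeeping for $\zeta_{j, t+1}$: because $\{\fp_j\}$ arises from a Gram--Schmidt process, isolating the $f_j$-coefficient in the second sum of \cref{lemma:conditioning} requires telescoping through all higher-index projections while verifying that each resulting cross term lands in $\veps_{t+1}$ with $\ell^2$-norm $o_P(d/\sqrt k)$. Once this is done, the remaining asymptotic identities reduce to Gaussian moment and concentration estimates of the type already developed in \cref{lemma:wt}--\cref{lemma:term-I}.
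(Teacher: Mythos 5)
Your overall plan---induction on $t$, applying \cref{lemma:conditioning} at each step, peeling off the Gaussian part $\sum_i \alpha_{i,t}z_i$, collecting $f_j$-coefficients, and dumping residuals into $\veps_t$---is the same strategy the paper uses. However, two of your quantitative claims are wrong in a way that would break the argument if taken at face value.

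First, the bound on the $\Pi_{F_{1:i-1}}$-correction term. You write that $\Pi_{F_{1:i-1}}\tA_i\txp_{i-1}/\|\txp_{i-1}\|_2$ has norm $O_P(\sqrt{T_c}) = O_P(1)$, then multiply by the factor $|\langle\txp_{i-1},\tx_t\rangle|/\|\txp_{i-1}\|_2 \le \sqrt d$ to get $O_P(\sqrt d) = o_P(d/\sqrt k)$. This is off by $\sqrt d$ and the conclusion is actually false as stated: $\sqrt d = o(d/\sqrt k)$ would require $k \ll d$, whereas the regime is $k \gg d^{3/2}$. The correct accounting is that $\tA_i\txp_{i-1}/\|\txp_{i-1}\|_2 \sim \sN(0, I_k/d)$, so the projection onto a $T_c$-dimensional subspace has norm $O_P(\sqrt{T_c/d})$, giving a total contribution of $O_P(1)$, which is indeed $o_P(d/\sqrt k)$ since $k \ll d^2$. (The paper's analog is the bound $\|b_{i,s+1}\fp_{i-1}\|_2 = O_P(1)$, proved via \cref{eq:46,eq:47}.)

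Second, your treatment of \eqref{eq:increasing3} is too coarse. Writing $y_{t+1} = g_{t+1} + \nu_{t+1}$ and claiming $\|y_{t+1}\|_6^6$ is ``dominated by'' $\|g_{t+1}\|_6^6 \approx 15k$ ignores the cross terms. A naive power-mean or triangle inequality bound introduces a multiplicative constant that pushes you past $20k$ (e.g., $\|a+b\|_6^6 \le 32(\|a\|_6^6+\|b\|_6^6)$). To get $15k + o_P(k)$, one has to expand the sixth power and use $\|g_{t+1}\|_\infty = O_P(\log k)$ together with $\|\nu_{t+1}\|_2 = O_P(d/\sqrt k)$ to show each cross term is $o_P(k)$; this is exactly what the paper does in \cref{eq:fs+1}--\cref{eq:fs+1-upper}, and your sketch skips it.

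A smaller issue: your stated approximation $\langle x_j,\txp_j\rangle \approx 3k$ is off; what is actually used is the ratio $\langle x_j,\txp_j\rangle/\|\fp_j\|_2^2 \approx d/(3k)$ (the paper's \cref{eq:47.5} combined with $\|x_j\|_2 \approx 3k/\sqrt d$). Similarly, the step you describe as ``telescoping the projections'' is where the real work lies in the paper (the $a_{i,s+1}$, $b_{i,s+1}$, $p_{i,s+1}$, $q_{i,s+1}$, $c_{i,s+1}$, $e_{i,s+1}$ decomposition and the identity $\sum_i \langle\fp_{i-1},f_s\rangle\fp_{i-1}/\|\fp_{i-1}\|_2^2 = f_s$); as written your proposal does not actually carry this out or verify that the residuals land in $\veps_{t+1}$ with the required $o_P(d/\sqrt k)$ norm.
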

We then proceed to prove \cref{thm:increasing}. For the sake of simplicity, define
\begin{equation*}
    w_t = \sum_{i = 1}^t \alpha_{i,t}z_i, \quad \nu_t = \sum_{j = 1}^{t - 1} \zeta_{j, t} f_j + \veps_t.
\end{equation*}
Then, \cref{lemma:increasing} implies that
\begin{align*}
    \|\nu_t\|_2 \le \sum_{j=1}^{t-1} \vert \zeta_{j, t} \vert \cdot \norm{f_j}_2 + \norm{\veps_t}_2 \le C \cdot \sum_{j=1}^{t-1} O_P \left( \frac{d}{\sqrt{k}} \right) + o_P \left( \frac{d}{\sqrt{k}} \right) = O_P \left( \frac{d}{\sqrt{k}} \right),
\end{align*}
i.e., $\|\nu_t\|_2^2 = O_P(d^2 / k)$. Recall that in the proof of Lemma~\ref{lemma:wt}, we have shown that $\sum_{i=1}^{t} \alpha_{i, t}^2 = 1$ and $z_i \iidsim \normal(0, I_k)$. Since $\alpha_{1, t} = 1 + o_P(1)$ and $T_c$ is a constant, we can use standard concentration arguments for Gaussian random variables to deduce that
\begin{equation*}
    \|w_t^3\|_2^2 = \norm{w_t}_6^6 = 15k + O_P(\sqrt{k}), \quad \|w_t\|_{\infty} = O_P(\log k).
\end{equation*}
Using \cref{eq:increasing1} from \cref{lemma:increasing}, we see that 
\begin{align*}
    \mathcal{S}(\tx_{t - 1}) & = \norm{y_t}_4^4 =  \langle w_t^4, 1 \rangle + 4 \langle w_t^3, \nu_t \rangle + 6 \langle w_t^2, \nu_t^2 \rangle + 4 \langle w_t, \nu_t^3 \rangle + \langle \nu_t^4, 1 \rangle \\
    & = \langle w_t^4, 1 \rangle + 4 \sum_{j = 1}^{t - 1} \zeta_{j, t} \langle w_t^3, f_j \rangle + 4 \langle w_t^3, \veps_t \rangle + 6 \langle w_t^2, \nu_t^2 \rangle + 4 \langle w_t, \nu_t^3 \rangle + \langle \nu_t^4, 1 \rangle. 
\end{align*}
Next, we analyze the terms above, respectively. Similar to the previous argument, we can show that $\langle w_t^4, 1 \rangle = \norm{w_t}_4^4 = 3k + O_P(\sqrt{k})$. Leveraging \cref{eq:increasing4,eq:increasing5}, we obtain that
\begin{align*}
    4 \sum_{j = 1}^{t - 1} \zeta_{j, t} \langle w_t^3, f_j \rangle = 4 \sum_{j = 1}^{t - 1} \zeta_{j, t} \langle w_t^3, (y_j)^3 \rangle = 4 \sum_{j = 1}^{t - 1} \zeta_{j, t} \langle w_t^3, (w_j + \nu_j)^3 \rangle.
\end{align*}
For any $1 \le j \le t-1$, using again standard concentration arguments, we obtain that $\left\langle w_t^3, (w_j + \nu_j)^3 \right\rangle = 15 k + o_P(k)$. Note that $\zeta_{j, t} = d / ((3 + o_P(1)) k)$, it follows that
\begin{equation*}
    4 \sum_{j = 1}^{t - 1} \zeta_{j, t} \langle w_t^3, f_j \rangle = 4 \sum_{j=1}^{t-1} (5 + o_P(1)) d = 20(t-1) d + o_P(d).
\end{equation*}
Applying Cauchy–Schwarz inequality we see that
\begin{equation*}
    4|\langle w_t^3, \veps_t \rangle| \leq 4\|w_t^3\|_2 \cdot \|\veps_t\|_2 = O_P (\sqrt{k}) \cdot o_P \left( \frac{d}{\sqrt{k}} \right) = o_P(d).
\end{equation*}
Furthermore, the following results hold: 
\begin{align*}
    6|\langle w_t^2, \nu_t^2 \rangle| & \leq 6 \|w_t\|_{\infty}^2 \cdot \|\nu_t\|_2^2 = O_P(d^2 (\log k)^2 / k), \\
    4 |\langle w_t, \nu_t^3 \rangle| & \leq 4\|w_t\|_{\infty} \cdot \|\nu_t\|_3^3 \leq 4\|w_t\|_{\infty} \cdot \|\nu_t\|_2^3 = O_P(d^3 \log k / k^{3/2}), \\
    |\langle \nu_t^4, 1 \rangle| & = \norm{\nu_t}_4^4 \leq \|\nu_t\|_2^4 = O_P(d^4 / k^2).
\end{align*}
Combining these estimates and using the assumption that $d^{3/2} \ll k \ll d^2$, we conclude that $S(\tx_{t - 1}) = 3k + 20(t - 1)d + o_P(d)$, thus completing the proof of the theorem. 

\subsection{Proof of \cref{lemma:increasing}}

We prove the lemma via induction over $t$. 
\vspace{-0.3cm}
\subsubsection*{Base case}
For the base case $t = 1$, from \cref{lemma:conditioning} we immediately see that $y_1 = \alpha_{1,1} z_1$, thus proves \cref{eq:increasing1}.   Furthermore, by definition $\alpha_{1,1} = 1$, which justifies \cref{eq:increasing4} for the base case. \cref{eq:increasing3} follows immediately from the law of large numbers. Again by \cref{lemma:conditioning}, we have 
	\begin{align}\label{eq:decomp-x1}
		x_1 = \tx_0 \cdot \frac{\langle y_1, f_1 \rangle}{d} + \Pi_{x_0}^{\perp} \bar{A}_1^{\top} f_1. 
	\end{align}
	Using the law of large numbers, we obtain that $\langle y_1, f_1\rangle / d = 3k / d + o_P(k / d)$, $\|\Pi_{x_0}^{\perp}\bA_1^{\top} f_1\|_2^2 = \|f_1\|_2^2 + o_P(k) = 15k + o_P(k)$. We then discover that \cref{eq:increasing2} for the base case follows, since by \cref{eq:decomp-x1} we have $\|x_1\|_2^2 = \langle y_1, f_1\rangle^2 / d + \|\Pi_{x_0}^{\perp}\bA_1^{\top} f_1\|_2^2$ and $\langle x_0, x_1 \rangle^2 / 9k^2 = \langle y_1, f_1 \rangle^2 / 9k^2$. This completes the proof of \cref{eq:increasing2} for the base case. We note that \cref{eq:increasing5} does not apply for the base case. 

\vspace{-0.3cm}
\subsubsection*{Proof of Eq.~\eqref{eq:increasing1}, \eqref{eq:increasing4}, \eqref{eq:increasing5}, \eqref{eq:increasing6} for $t = s + 1$}
	
Suppose the claims hold for $t = s$. Next, we prove that they also hold for $t = s + 1$ via induction. Leveraging \cref{eq:cond-yt} in \cref{lemma:conditioning}, we obtain that 
	\begin{align}\label{eq:44}
		y_{s + 1} = \sum_{i = 1}^{s + 1} \alpha_{i,s + 1} z_i  + \sum_{i = 2}^{s + 1} \eta_{i, s + 1} \fp_{i - 1}, 
	\end{align}
	where for $2 \leq i \leq s + 1$
	\begin{align*}
		\eta_{i, s + 1} = \underbrace{\frac{\langle x_{i - 1}, \txp_{i - 1} \rangle}{\|\fp_{i - 1}\|_2^2} \cdot \frac{\langle \txp_{i - 1}, \tx_s \rangle}{\| \txp_{i - 1} \|_2^2}}_{a_{i,s + 1}} - \underbrace{\sum_{j = i}^{s  + 1} \frac{\langle \fp_{i - 1}, \tA_j \txp_{j - 1} \rangle}{\|\fp_{i - 1}\|_2^2} \cdot \frac{\langle \txp_{j - 1}, \tx_s \rangle}{\|\txp_{j - 1}\|_2^2}}_{b_{i, s + 1}}.
	\end{align*}
	We then proceed to prove that $\|b_{i,s + 1} \fp_{i - 1}\|_2 = O_P(1)$. To this end, it suffices to show for every $j \in \{i, i + 1, \cdots, s + 1\}$,
	\begin{align}\label{eq:45}
		\frac{|\langle \fp_{i - 1}, \tA_j \txp_{j - 1} \rangle|}{\|\fp_{i - 1}\|_2^2} \cdot \frac{|\langle \txp_{j - 1}, \tx_s \rangle|}{\|\txp_{j - 1}\|_2^2} \cdot \|\fp_{i - 1}\|_2 = O_P(1).
	\end{align}
	Note that for $j \geq i$ we have $\tA_j \perp \cF_{j -1,j-1}$, thus 
	\begin{align}\label{eq:46}
		\frac{\langle \fp_{i - 1}, \tA_j \txp_{j - 1} \rangle}{\|\fp_{i - 1}\|_2 \| \txp_{j - 1} \|_2} \overset{d}{=} \normal(0, d^{-1}). 
	\end{align}
	Applying Cauchy-Schwarz inequality, we see that
	\begin{align}\label{eq:47}
		\frac{|\langle \txp_{j - 1}, \tx_s \rangle|}{\|\txp_{j - 1}\|_2} \leq \|\tx_s\|_2 = \sqrt{d}. 
	\end{align}
	Combining \cref{eq:46,eq:47}, we deduce that \cref{eq:45} holds for all $j \in \{i, i + 1, \cdots, s + 1\}$. Then we switch to consider $a_{i, s + 1}$. Using \cref{eq:cond-xt} and \cref{eq:ht+1}, we have the following decomposition:
	\begin{align*}
		a_{i, s + 1} = & \frac{\langle x_{i - 1}, \txp_{i - 1} \rangle}{\|\fp_{i - 1}\|_2^2} \cdot \frac{\langle h_i, f_s \rangle}{\| \txp_{i - 1} \|_2^2} \cdot \frac{\sqrt{d}}{\|x_s\|_2} \\
		= & \underbrace{\frac{\langle x_{i - 1}, \txp_{i - 1} \rangle}{\|\fp_{i - 1}\|_2^2} \cdot \frac{\langle \fp_{i - 1}, f_s \rangle}{\| \txp_{i - 1} \|_2^2} \cdot \frac{\sqrt{d}}{\|x_s\|_2} \cdot \frac{\langle x_{i - 1}, \txp_{i - 1} \rangle}{\|\fp_{i - 1}\|_2^2}}_{p_{i, s + 1}} + \underbrace{\frac{\langle x_{i - 1}, \txp_{i - 1} \rangle}{\|\fp_{i - 1}\|_2^2} \cdot \frac{\langle \Pi_{F_{1:i-1}}^{\perp} \tA_i \txp_{i - 1}, f_s \rangle}{\| \txp_{i - 1} \|_2^2} \cdot \frac{\sqrt{d}}{\|x_s\|_2}}_{q_{i, s + 1}}.  
	\end{align*}
	We then analyze $p_{i, s + 1}$ and $q_{i, s + 1}$, respectively. Combining \cref{eq:cond-xt}, the law of large numbers and the fact that $\bA_t \perp \cF_{t - 1, t}$, we see that for all $t \in \NN_+$
	\begin{align}\label{eq:47.5}
		\frac{\langle x_t, \xp_t \rangle}{\|\fp_t\|_2^2} = 1 + O_P(d^{-1/2}).
	\end{align}
	Combining the above analysis with \cref{eq:increasing2} from previous induction steps, we conclude that
	\begin{align}\label{eq:pis+1}
		p_{i, s + 1} = \left(1 + o_P(1)) \right) \times \frac{\langle \fp_{i - 1}, f_s \rangle}{\|\fp_{i - 1}\|_2^2} \times \frac{d}{3k}.
	\end{align}
	Next, we consider $q_{i, s + 1}$. We further decompose $q_{i, s + 1}$ as the difference of the following two terms:
	\begin{align*}
		q_{i, s + 1} = \underbrace{\frac{\langle x_{i - 1}, \txp_{i - 1} \rangle}{\|\fp_{i - 1}\|_2^2} \cdot \frac{\langle  \tA_i \txp_{i - 1}, f_s \rangle}{\| \txp_{i - 1} \|_2^2} \cdot \frac{\sqrt{d}}{\|x_s\|_2}}_{c_{i,s + 1}} - \underbrace{\sum_{j = 1}^{i - 1}\frac{\langle x_{i - 1}, \txp_{i - 1} \rangle}{\|\fp_{i - 1}\|_2^2} \cdot \frac{\langle \fp_j, \tA_i \txp_{i - 1} \rangle}{\|\fp_j\|_2^2} \cdot \frac{\langle \fp_j, f_s \rangle}{\| \txp_{i - 1} \|_2^2} \cdot \frac{\sqrt{d}}{\|x_s\|_2}}_{e_{i,s + 1}}.
	\end{align*} 
	Note that for all $1 \leq j \leq i - 1 \leq s$, leveraging \cref{eq:46}, \eqref{eq:47.5} and \cref{eq:increasing2} from previous induction steps, we  obtain that
	\begin{align*}
		& \|\fp_{i - 1}\|_2 \cdot\frac{|\langle x_{i - 1}, \txp_{i - 1} \rangle|}{\|\fp_{i - 1}\|_2^2} \cdot \frac{|\langle \fp_j, \tA_i \txp_{i - 1} \rangle |}{\|\fp_j\|_2^2} \cdot \frac{|\langle \fp_j, f_s \rangle|}{\| \txp_{i - 1} \|_2^2} \cdot \frac{\sqrt{d}}{\|x_s\|_2} \\
		= & \frac{|\langle \fp_j, f_s \rangle|}{\|\fp_j\|_2} \cdot \frac{\sqrt{d}}{\|x_s\|_2} \cdot \frac{|\langle \fp_j, \tA_i \txp_{i - 1} \rangle |}{\|\fp_j\|_2 \|\txp_{i - 1}\|_2} \cdot \left(1 + o_P(1) \right) \\
		\leq &\frac{\sqrt{d}}{k} \|f_s\|_2 \cdot O_P(1).
	\end{align*}
	Leveraging \cref{eq:increasing3} from previous induction steps, we obtain that with probability $1 - o_d(1)$, the above quantity is no larger than $O_P(\sqrt{d / k} )  $. This further implies that $\|e_{i,s + 1} \fp_{i - 1}\|_2 =O_P(\sqrt{d / k} ) $. 
 
 Finally, we analyze $c_{i, s + 1}$. By \cref{eq:44} from previous induction steps, we see that 
	\begin{align}\label{eq:51}
	\begin{split}
		\frac{\langle  \tA_i \txp_{i - 1}, f_s \rangle}{\| \txp_{i - 1} \|_2} = & \frac{1}{\sqrt{d}} \Big \langle z_i, \big(\sum_{j = 1}^s \alpha_{j,s}z_j + \sum_{j = 1}^{s - 1} \zeta_{j,s} f_j + \veps_s \big)^3 \Big \rangle \\
		= & \frac{1}{\sqrt{d}} \Big \langle z_i, \big(\sum_{j = 1}^s \alpha_{j,s}z_j  \big)^3 \Big \rangle + \frac{3}{\sqrt{d}} \Big \langle z_i(\sum_{j = 1}^s \alpha_{j,s}z_j)^2, \sum_{j = 1}^{s - 1} \zeta_{j,s} f_j + \veps_s \Big \rangle + \\
  &\frac{3}{\sqrt{d}} \Big \langle z_i(\sum_{j = 1}^s \alpha_{j,s}z_j), \big(\sum_{j = 1}^{s - 1} \zeta_{j,s} f_j + \veps_s \big)^2\Big \rangle  + \frac{1}{\sqrt{d}} \Big \langle z_i, \big(\sum_{j = 1}^{s - 1} \zeta_{j,s} f_j + \veps_s \big)^3 \Big \rangle. 
	\end{split}
	\end{align}
	Below we analyze terms in \cref{eq:51}, separately. 
	\begin{align}
		 \frac{1}{\sqrt{d}} \Big \langle z_i, \big(\sum_{j = 1}^s \alpha_{j,s}z_j  \big)^3 \Big \rangle =& \frac{3k\langle \txp_{i - 1}, \tx_{s - 1} \rangle}{{d} \|\txp_{i - 1}\|_2} + O_P\Big( \sqrt{\frac{k}{d}} \Big), \label{eq:52} \\
		  \frac{3}{\sqrt{d}} \Big| \Big \langle z_i(\sum_{j = 1}^s \alpha_{j,s}z_j)^2, \sum_{j = 1}^{s - 1} \zeta_{j,s} f_j + \veps_s \Big \rangle \Big| \leq & \frac{3}{\sqrt{d}}\big\|z_i^2(\sum_{j = 1}^s \alpha_{j,s}z_j)^2 \big\|_2 \times \left( \sum_{j = 1}^{s - 1} |\zeta_{j,s}| \cdot \|f_j\|_2 + \|\veps_s\|_2 \right) \nonumber \\
  =&  O_P(\sqrt{d}), \label{eq:53}\\
		 \frac{3}{\sqrt{d}} \Big| \Big \langle z_i(\sum_{j = 1}^s \alpha_{j,s}z_j), \big(\sum_{j = 1}^{s - 1} \zeta_{j,s} f_j + \veps_s \big)^2\Big \rangle \Big| \leq & \frac{3 T_c \|z_i\|_{\infty} \max_{1 \leq j \leq T_c}\|z_j\|_{\infty} }{\sqrt{d}} \cdot \Big\| \big(\sum_{j = 1}^{s - 1} \zeta_{j,s} f_j + \veps_s \big)^2 \Big\|_1  \nonumber \\
   = &  \frac{3 T_c \|z_i\|_{\infty} \max_{1 \leq j \leq T_c}\|z_j\|_{\infty} }{\sqrt{d}} \cdot \Big\| \sum_{j = 1}^{s - 1} \zeta_{j,s} f_j + \veps_s  \Big\|_2^2 \nonumber \\
		  \leq & \frac{3 T_c^2 \|z_i\|_{\infty} \max_{1 \leq j \leq T_c}\|z_j\|_{\infty} }{\sqrt{d}} \cdot \Big( \sum_{j = 1}^{s - 1} \zeta_{j,s}^2 \|f_j\|_2^2 + \|\veps_s\|_2^2  \Big) \nonumber \\
		 \qquad \qquad \qquad \qquad \qquad \,\,\,\,\,\,\,\, \, \overset{w.h.p.}{\leq} & \frac{100 T_c^3 (\log k)^2}{\sqrt{d}} \times \frac{d^2}{k} = O_P(d^{1.5001} / k).  \label{eq:54}
	\end{align}
\cref{eq:52} is by the law of large numbers. In \cref{eq:53}, we employ \cref{eq:increasing5,eq:increasing6,eq:increasing3} from previous induction steps.  \cref{eq:54} is by \cref{eq:increasing5,eq:increasing6,eq:increasing3} from  induction  and the fact that with high probability, $\|z_i\|_{\infty} \leq \log k$ for all $i \in [T_c]$.

	Combining \cref{eq:51,eq:52,eq:53,eq:54} and \cref{eq:increasing2} from induction, we obtain that 
	\begin{align}\label{eq:cis+1}
		\Big\|c_{i,s + 1}\fp_{i - 1} - \frac{\langle \txp_{i - 1}, \tx_{s - 1} \rangle}{\|\txp_{i - 1}\|_2 \|\fp_{i - 1}\|_2}\fp_{i - 1}\Big\|_2 \leq O_P(d^{3/2}/k). 
	\end{align}
	Plugging the definitions of $\{a_{i, s + 1}, b_{i, s + 1}, p_{i, s + 1}, q_{i, s + 1}, c_{i, s + 1}, e_{i, s + 1}\}$ into \cref{eq:44}, we have
	\begin{align*}
		y_{s + 1} - \sum_{i = 1}^{s + 1} \alpha_{i, s + 1} z_i = & \sum_{i = 2}^{s + 1} (a_{i, s + 1} - b_{i, s + 1}) \fp_{i - 1} \\
		= & \sum_{i = 2}^{s + 1}(p_{i, s + 1} + q_{i, s + 1} - b_{i, s + 1} )\fp_{i - 1} \\
		= & \sum_{i = 2}^{s + 1}(p_{i, s + 1} + c_{i, s + 1} - e_{i, s + 1} - b_{i, s + 1} )\fp_{i - 1}.
	\end{align*}
	Recall that we have proved $\|b_{i,s + 1} \fp_{i - 1}\|_2 = O_P(1)$ and  $\|e_{i,s + 1} \fp_{i - 1}\|_2 \leq O_P(\sqrt{d / k})$. Using these results, together with  \cref{eq:pis+1}, \eqref{eq:cis+1} and \cref{eq:increasing3} with $t = s$, we have
%
	%
	\begin{align}\label{eq:57}
		\Big\|y_{s + 1} - \sum_{i = 1}^{s + 1} \alpha_{i, s + 1} z_i - \sum_{i = 2}^{s + 1} \frac{\langle \txp_{i - 1}, \tx_{s - 1} \rangle}{\|\txp_{i - 1}\|_2 \|\fp_{i - 1}\|_2}\fp_{i - 1} - \sum_{i = 2}^{s + 1} \frac{d\langle \fp_{i - 1}, f_s \rangle}{3k\|\fp_{i - 1}\|_2^2} \fp_{i - 1} \Big\|_2 = o_P(d / \sqrt{k}).  
	\end{align}
	Notice that 
 \begin{align*}
     \sum_{i = 2}^{s + 1} \frac{d\langle \fp_{i - 1}, f_s \rangle}{3k\|\fp_{i - 1}\|_2^2} \fp_{i - 1} = \frac{d}{3k} f_s.
 \end{align*}
 Using triangle inequality, we see that 
	\begin{align}\label{eq:58}
		& \Big\|\sum_{i = 2}^{s + 1} \left(  \frac{\langle \txp_{i - 1}, \tx_{s - 1} \rangle}{\|\txp_{i - 1}\|_2 \|\fp_{i - 1}\|_2} - \frac{\langle x_{i - 1}, \txp_{i - 1} \rangle}{\|\fp_{i - 1}\|_2^2} \cdot \frac{\langle \txp_{i - 1}, \tx_{s - 1} \rangle}{\| \txp_{i - 1} \|_2^2} \right)\fp_{i - 1} \Big\|_2  \nonumber \\
		 \leq  & \sum_{i = 2}^{s + 1} \Big| \frac{\langle \txp_{i - 1}, \tx_{s - 1} \rangle}{\|\txp_{i - 1}\|_2 } \cdot \left(1 - \frac{\|\xp_{i - 1}\|_2}{\|\fp_{i - 1}\|_2} \right) \Big|,
	\end{align}
	which by \cref{eq:47.5} is $O_P(1)$. \cref{eq:57,eq:58} and the condition $d \ll k \ll d^2$ together imply that 
	\begin{align}\label{eq:59}
		\Big\|y_{s + 1} - \sum_{i = 1}^{s + 1} \alpha_{i, s + 1} z_i - \sum_{i = 2}^{s + 1} \frac{\langle x_{i - 1}, \txp_{i - 1} \rangle}{\|\fp_{i - 1}\|_2^2} \cdot \frac{\langle \txp_{i - 1}, \tx_{s - 1} \rangle}{\| \txp_{i - 1} \|_2^2}\fp_{i - 1} - \frac{d}{3k} f_s \Big\|_2 = o_P(d / \sqrt{k}). 
	\end{align}
	Using the definitions of $a_{i,s}, b_{i,s}$, we obtain that 
	\begin{align*}
		\sum_{i = 2}^{s + 1} \frac{\langle x_{i - 1}, \txp_{i - 1} \rangle}{\|\fp_{i - 1}\|_2^2} \cdot \frac{\langle \txp_{i - 1}, \tx_{s - 1} \rangle}{\| \txp_{i - 1} \|_2^2}\fp_{i - 1} = & \sum_{i = 2}^s a_{i,s} \fp_{i - 1} \\
		= & y_{s} - \sum_{i = 1}^{s} \alpha_{i,s} z_i + \sum_{i = 2}^s b_{i,s} \fp_{i - 1} \\
		= & \sum_{j = 1}^{s - 1} \zeta_{j,s} f_j + \veps_s + \sum_{i = 2}^s b_{i,s} \fp_{i - 1}.
	\end{align*}
	Since we have proved $\|\sum_{i = 2}^s b_{i,s} \fp_{i - 1} \|_2 = O_P(1)$ and by induction $\|\veps_s\|_2 = o_P(d / \sqrt{k})$, we can set 
	\begin{align*}
		& \veps_{s + 1} = y_{s + 1} - \sum_{i = 1}^{s + 1} \alpha_{i, s + 1} z_i - \sum_{j = 1}^{s - 1} \zeta_{j,s} f_j - \frac{d}{3k} f_s, \\
		& \zeta_{j, s + 1} = \zeta_{j, s} \,\,\,\mbox{ for }j = 1,2, \cdots, s - 1, \\
		& \zeta_{s, s + 1} = \frac{d}{3k}. 
	\end{align*}
        Combining the above analysis, we find that $\|\veps_{s + 1}\|_2 = o_P(d / \sqrt{k})$, thus proves \cref{eq:increasing6}. \cref{eq:increasing5} for $t = s + 1$ is a direct consequence of our induction hypothesis. Thus, we have completed the proof of \cref{eq:increasing1} for $t = s + 1$. Furthermore, using \cref{eq:increasing2} for $t = s$, which holds by induction, we see that 
	\begin{align}\label{eq:alpha1s+1}
		\alpha_{1, s + 1} = \frac{\langle \tx_0, \tx_s \rangle}{d} = 1 + o_P(1).
	\end{align} 
\vspace{-0.3cm}
\subsubsection*{Proof of Eq.~\eqref{eq:increasing3} for $t = s + 1$}
Next, we prove \cref{eq:increasing3} for $t = s + 1$. 
	We have showed that $y_{s + 1} = \sum_{i = 1}^{s + 1} \alpha_{i, s + 1} z_i + v_{s + 1}$ with $\|v_{s + 1}\|_2 = O_P(d / \sqrt{k})$. For the sake of simplicity, we let $g_{s + 1}  = \sum_{i = 1}^{s + 1} \alpha_{i, s + 1} z_i$. We claim without proof that 
	\begin{align}
		\|f_{s + 1}\|_2^2 \leq & \|g_{s + 1}\|_6^6 + 6\|g_{s + 1}\|_{\infty}^5 \|v_{s + 1}\|_1 + 15 \|g_{s + 1}\|_{\infty}^4 \|v_{s + 1}\|_2^2 + 20 \|g_{s + 1}\|_{\infty}^3 \|v_{s + 1}\|_3^3 \nonumber\\
		& + 15\|g_{s + 1}\|_{\infty}^2 \|v_{s + 1}\|_4^4 + 6 \|g_{s + 1}\|_{\infty} \|v_{s + 1}\|_5^5 + \|v_{s + 1}\|_6^6. \label{eq:fs+1}
	\end{align}
	 Standard application of Gaussian concentration reveals that with high probability, $\|g_{s + 1}\|_{\infty} \leq \log k$. Furthermore, for all $j \in \{2,3,4,5,6\}$ we have $\|v_{s + 1}\|_j \leq \|v_{s + 1}\|_2$ and $\|v_{s + 1}\|_1 \leq \sqrt{k}\|v_{s + 1}\|_2$. Using the law of large numbers, we see that $\|g_{s + 1}\|_6^6 = 15k + o_P(k)$ and $\|g_{s + 1}\|_2^2 = k + o_P(k)$. Plugging the above analysis into \cref{eq:fs+1}, we see that 
	 \begin{align}\label{eq:fs+1-upper}
	 	\|f_{s + 1}\|_2^2 \leq 15k + o_P(k) + O_P(d^6 / k^3) \overset{(i)}{=} 15k + o_P(k),
	 \end{align}
	 where in \emph{(i)} we use the assumption that $d^{3/2} \ll k$. This concludes the proof of \cref{eq:increasing3} for $t = s + 1$. 

\vspace{-0.3cm}
\subsubsection*{Proof of Eq.~\eqref{eq:increasing2} for $t = s + 1$}
	 Finally, we prove \cref{eq:increasing2} for $t = s + 1$. Leveraging \cref{eq:cond-xt} in \cref{lemma:conditioning}, we have
	 \begin{align*}
	 	& \frac{\langle x_0, x_{s + 1} \rangle}{3k} = \frac{\langle h_1, f_{s + 1} \rangle}{3k} = \frac{\langle \tA_1 \tx_0, f_{s + 1} \rangle}{3k} = \frac{1}{3k} \langle z_1, (\sum_{i = 1}^{s + 1} \alpha_{i, s + 1} z_i + v_{s + 1})^3  \rangle \\
	 	= & \frac{\alpha_{1, s + 1}^3}{3k} \langle z_1^4, {1} \rangle + \frac{1}{3k} \Big( \langle z_1, (\sum_{i = 1}^{s + 1} \alpha_{i, s + 1} z_i )^3 \rangle - \alpha_{1, s + 1}^3\langle z_1^4, 1 \rangle \Big) + \frac{1}{k} \langle z_1 g_{s + 1}^2 v_{s + 1}, 1 \rangle \\
	 	&+ \frac{1}{k} \langle z_1 g_{s + 1} v_{s + 1}^2, 1 \rangle + \frac{1}{3k} \langle z_1 v_{s + 1}^3, 1\rangle.
	 \end{align*}
	 Using \cref{eq:alpha1s+1} and the fact that $\sum_{i = 1}^{s + 1} \alpha_{i, s + 1}^2 = 1$, we obtain that $\alpha_{i, s + 1} = o_P(1)$ for all $2 \leq i \leq s + 1$. Therefore, straightforward computation reveals that 
	 \begin{align*}
	 	\frac{1}{3k} \Big( \langle z_1, (\sum_{i = 1}^{s + 1} \alpha_{i, s + 1} z_i )^3 \rangle - \langle z_1^4, 1 \rangle \Big) = o_P(1).
	 \end{align*}
	 Furthermore, 
	 \begin{align}\label{eq:infinity-norm-upper-bound}
	 \begin{split}
	 	 & \frac{1}{k} | \langle z_1 g_{s + 1}^2 v_{s + 1}, 1 \rangle | + | \frac{1}{k} \langle z_1 g_{s + 1} v_{s + 1}^2, 1 \rangle | + | \frac{1}{3k} \langle z_1 v_{s + 1}^3, 1\rangle| \\
	 	 \leq & \frac{1}{\sqrt{k}}\|z_1\|_{\infty} \|g_{s + 1}\|_{\infty}^2 \|v_{s + 1}\|_2 + \frac{1}{k} \|z_1\|_{\infty} \|g_{s + 1}\|_{\infty} \|v_{s + 1}\|_2^2 + \frac{1}{3k}\|z_1\|_{\infty} \|v_{s + 1}\|_2^3 = o_P(1). 
	 \end{split}
	 \end{align}
	 As a result, we conclude that 
	 \begin{align*}
	 	\frac{\langle x_0, x_{s + 1} \rangle}{3k}  = \frac{\alpha_{1, s + 1}^3}{3k} \langle z_1^4, {1} \rangle  + o_P(1) = 1 + o_P(1). 
	 \end{align*}
	 This proves the second part of \cref{eq:increasing2} for $s = t + 1$. 
  
  Again by \cref{eq:cond-xt} in \cref{lemma:conditioning}, we see that 
	 \begin{align*}
	 	\frac{d}{9k^2}\|x_{s + 1}\|_2^2 = \frac{d}{9k^2}\sum_{i = 0}^s \frac{\langle h_{i + 1}, f_{s + 1} \rangle^2}{\|\txp_i\|_2^2} + \frac{d}{9k^2}\|\Pi_{X_{0:s}}^{\perp} \bA_{s + 1}^{\top} \fp_{s + 1}\|_2^2.
	 \end{align*}
	 Recall that we just proved $\langle h_1, f_{s + 1} \rangle^2 / 9k^2 = 1 + o_P(1)$. Furthermore, by the law of large numbers and \cref{eq:increasing3} for $t = s + 1$, we have $d\|\Pi_{X_{0:s}}^{\perp} \bA_{s + 1}^{\top} \fp_{s + 1}\|_2^2 / 9k^2 = O_P(d / k) = o_P(1)$. Therefore, in order to prove the first part of \cref{eq:increasing2} for $s = t + 1$, it suffices to show
	 \begin{align}\label{eq:dhf}
	 	\frac{d\langle h_{i + 1}, f_{s + 1} \rangle^2}{9k^2\|\txp_i\|_2^2} = o_P(1)
	 \end{align}
	 for all $1 \leq i \leq s$. By \cref{eq:ht+1} 
	 \begin{align*}
	 	\frac{\sqrt{d}\langle h_{i + 1}, f_{s + 1} \rangle}{3k\|\txp_i\|_2} =& \frac{\sqrt{d}\langle \fp_i, f_{s + 1} \rangle}{3k\|\txp_i\|_2} \frac{\langle x_i, \txp_i \rangle}{\|\fp_i\|_2^2} + \frac{\sqrt{d}\langle \Pi_{F_{1:i}}^{\perp} \tA_{i + 1} \txp_i, f_{s + 1} \rangle}{3k\|\txp_i\|_2} \\
	 	= &  \frac{\sqrt{d}\langle \fp_i, f_{s + 1} \rangle}{3k\|\fp_i\|_2} \frac{ \|\xp_i\|_2 }{\|\fp_i\|_2} +  \frac{\sqrt{d}\langle \tA_{i + 1} \txp_i, f_{s + 1} \rangle}{3k\|\txp_i\|_2} - \sum_{j = 1}^i \frac{\sqrt{d}\langle \fp_j, \tA_{i + 1} \txp_i \rangle \langle \fp_j, f_{s + 1} \rangle}{3k\|\txp_i\|_2 \|\fp_j\|_2^2}.
	 \end{align*}
	 Using \cref{eq:47.5}, \eqref{eq:fs+1-upper} and Cauchy-Schwarz inequality, we see that
	 \begin{align*}
	 	 \frac{\sqrt{d}\langle \fp_i, f_{s + 1} \rangle}{3k\|\fp_i\|_2} \frac{ \|\xp_i\|_2 }{\|\fp_i\|_2} \leq \frac{\sqrt{d}\| f_{s + 1} \|_2}{3k} \frac{ \|\xp_i\|_2 }{\|\fp_i\|_2} = O_P(\sqrt{d / k}) = o_P(1). 
	 \end{align*}
	 For all $1 \leq j \leq i$, we have 
	 \begin{align*}
	 	\frac{\sqrt{d}\langle \fp_j, \tA_{i + 1} \txp_i \rangle}{\|\txp_i\|_2 \|\fp_j\|_2} \overset{d}{=} \normal(0,1).
	 \end{align*}
	 Therefore, 
	 \begin{align*}
	 	\frac{\sqrt{d}\langle \fp_j, \tA_{i + 1} \txp_i \rangle \langle \fp_j, f_{s + 1} \rangle}{3k\|\txp_i\|_2 \|\fp_j\|_2^2} = O_P(1) \cdot \frac{\langle \fp_j, f_{s + 1} \rangle}{k\|\fp_j\|_2}  = o_P(1). 
	 \end{align*}
	 %
	 %
	 %
	 Note that 
	 \begin{align*}
	 	 & \frac{\sqrt{d}\langle \tA_{i + 1} \txp_i, f_{s + 1} \rangle}{3k\|\txp_i\|_2} \\
    =& \frac{1}{3k} \langle z_{i + 1}, (\sum_{i = 1}^{s + 1} \alpha_{i, s + 1} z_i + v_{s + 1})^3  \rangle \\
	 	= & \frac{\alpha_{i + 1, s + 1}^3}{3k} \langle z_{i + 1}^4, {1} \rangle + \frac{1}{3k} \Big( \langle z_{i + 1}, (\sum_{i = 1}^{s + 1} \alpha_{i, s + 1} z_i )^3 \rangle - \alpha_{i + 1, s + 1}^3\langle z_{i + 1}^4, 1 \rangle \Big) + \frac{1}{k} \langle z_{i + 1} g_{s + 1}^2 v_{s + 1}, 1 \rangle \\
	 	&+ \frac{1}{k} \langle z_{i + 1} g_{s + 1} v_{s + 1}^2, 1 \rangle + \frac{1}{3k} \langle z_{i + 1} v_{s + 1}^3, 1\rangle,
	 \end{align*}
	 which is $o_P(1)$ via a similar argument that is similar to the derivation of \cref{eq:infinity-norm-upper-bound} and $\alpha_{i + 1, s + 1} = o_P(1)$, which we have already proved. Thus, we have completed the proof of \cref{eq:increasing2} for $t = s + 1$.

\section{Gaussian conditioning: Proof of Lemma~\ref{lemma:conditioning}}\label{sec:pf_cond_lem}

   The proof idea comes from \cite[Lemma 3.1]{montanari2022adversarial}. We first show that for all $t \in \NN$, 
   \begin{align}
   		& A = \Pi_{F_{1:t}}^{\perp} \tW_{t + 1} \Pi_{X_{0:t-1}}^{\perp} + \Pi_{F_{1:t}} A \Pi_{X_{0:t-1}}^{\perp} + \Pi_{F_{1:t}}^{\perp} A \Pi_{X_{0:t-1}} + \Pi_{F_{1:t}} A \Pi_{X_{0:t-1}}, \label{eq:cond-W1} \\
   		& A = \Pi_{F_{1:t}}^{\perp} \bar W_{t + 1} \Pi_{X_{0:t}}^{\perp} + \Pi_{F_{1:t}} A \Pi_{X_{0:t}}^{\perp} + \Pi_{F_{1:t}}^{\perp} A \Pi_{X_{0:t}} + \Pi_{F_{1:t}} A \Pi_{X_{0:t}}, \label{eq:cond-W2}
   \end{align}
   where $\tW_{t + 1}\overset{d}{=}\bar{W}_{t + 1} \overset{d}{=} A$, and satisfy $\tW_{t + 1} \perp \cF_{t,t}$, $\bar W_{t + 1} \perp \cF_{t,t+1}$. Next, we prove \cref{eq:cond-W1} and \cref{eq:cond-W2} via induction. For the base case $t = 0$, \cref{eq:cond-W1} holds as we can take $\tW_1 = A$, which is independent of $\cF_{0,0}$ by dedinition. \cref{eq:cond-W2} with $t = 0$ is a direct consequence of Lemma 3.1 in \cite{montanari2022adversarial}. 
   
   Suppose decompositions \eqref{eq:cond-W1} and \eqref{eq:cond-W2} hold for $t = s - 1$, we then prove it also holds for $t = s$ based on induction hypothesis. We first decompose $A$ as the sum of the following four terms:
   \begin{align*}
   		A = \Pi_{F_{1:s}}^{\perp} A \Pi_{X_{0:s-1}}^{\perp} + \Pi_{F_{1:s}} A \Pi_{X_{0:s-1}}^{\perp} + \Pi_{F_{1:s}}^{\perp} A \Pi_{X_{0:s-1}} + \Pi_{F_{1:s}} A \Pi_{X_{0:s-1}}.
   \end{align*}  
   Using induction hypothesis, we see that
   \begin{align*}
   	    & A = \Pi_{F_{1:s - 1}}^{\perp} \bar W_{s} \Pi_{X_{0:s-1}}^{\perp} + \Pi_{F_{1:s - 1}} A \Pi_{X_{0:s - 1}}^{\perp} + \Pi_{F_{1:s - 1}}^{\perp} A \Pi_{X_{0:s - 1}} + \Pi_{F_{1:s - 1}} A \Pi_{X_{0:s - 1}} \\
   	    \implies & \Pi_{F_{1:s - 1}}^{\perp} A \Pi_{X_{0:s-1}}^{\perp} = \Pi_{F_{1:s - 1}}^{\perp} \bar W_{s} \Pi_{X_{0:s-1}}^{\perp} \implies \Pi_{F_{1:s}}^{\perp} A \Pi_{X_{0:s-1}}^{\perp} = \Pi_{F_{1:s}}^{\perp} \bar W_{s} \Pi_{X_{0:s-1}}^{\perp} \\
   		\implies & A = \Pi_{F_{1:s}}^{\perp} \bar{W}_s \Pi_{X_{0:s-1}}^{\perp} + \Pi_{F_{1:s}} A \Pi_{X_{0:s-1}}^{\perp} + \Pi_{F_{1:s}}^{\perp} A \Pi_{X_{0:s-1}} + \Pi_{F_{1:s}} A \Pi_{X_{0:s-1}}.
   \end{align*} 
   Since by induction we have $\bar{W}_s \perp \cF_{s - 1, s}$, we can then conclude that $\bar W_s \perp \{F_{1:s}, Y_{1:s}, X_{0:s-1}\}$. We take 
   \begin{align*}
   	\tW_{s + 1} = \Pi_{F_{1:s}}^{\perp} \bar{W}_s \Pi_{X_{0:s-1}}^{\perp} + \Pi_{F_{1:s}} W' \Pi_{X_{0:s-1}}^{\perp}  + \Pi_{F_{1:s}}^{\perp} W' \Pi_{X_{0:s-1}} + \Pi_{F_{1:s}} W' \Pi_{X_{0:s-1}}, 
   \end{align*}
   where $W'$ is an independent copy of $A$ that is independent of $\cF_{s,s}$. We immediately see that given any specific value of $\{F_{1:s}, X_{0:s-1}, Y_{1:s}, \Pi_{F_{1:s}}\bar{W}_s, \bar{W}_s \Pi_{X_{0:s - 1}}\}$, the conditional distribution of $\tW_{s + 1}$ is equal to the law of $A$. As a result, we deduce that $\tW_{s + 1} \perp \{F_{1:s}, X_{0:s-1},  Y_{1:s}, \Pi_{F_{1:s}}\bar{W}_s, \bar{W}_s \Pi_{X_{0:s - 1}}\}$. Again by induction, we know that 
   \begin{align*}
   		A = \Pi_{F_{1:s - 1}}^{\perp} \bar W_{s} \Pi_{X_{0:s-1}}^{\perp} + \Pi_{F_{1:s - 1}} A \Pi_{X_{0:s - 1}}^{\perp} + \Pi_{F_{1:s - 1}}^{\perp} A \Pi_{X_{0:s - 1}} + \Pi_{F_{1:s - 1}} A \Pi_{X_{0:s - 1}}.
   \end{align*}
   It then follows that
   \begin{align*}
   		A^{\top} f_s =& \Pi_{X_{0:s - 1}}^{\perp} \bar{W}_s^{\top} \Pi_{F_{1:s - 1}}^{\perp} f_s + \Pi_{X_{0:s - 1}}^{\perp}  A^{\top} \Pi_{F_{1:s - 1}} f_s + \Pi_{X_{0:s - 1}} A^{\top} \Pi_{F_{1:s - 1}}^{\perp} f_s + \Pi_{X_{0:s - 1}} A^{\top} \Pi_{F_{1:s - 1}} f_s \\
   		= & \bar{W}_s^{\top} f_s - \bar{W}_s^{\top} F_{1:s-1}(F_{1:s - 1}^{\top} F_{1:s - 1})^{\dagger} F_{1:s - 1}^{\top} f_s - \Pi_{X_{0:s - 1}} \bar{W}_s^{\top} (I - F_{1:s-1}(F_{1:s - 1}^{\top} F_{1:s - 1})^{\dagger} F_{1:s - 1}^{\top}) f_s + \\
   		& X_{0:s - 1}(X_{0:s - 1}^{\top} X_{0:s - 1})^{\dagger} D Y_{1:s}^{\top},
   \end{align*}
   where $D = \diag(\{ \|x_i\|_2^2 / d \}_{0 \leq i \leq s - 1})$. Therefore, we see that 
   $$x_s = A^{\top} f_s \in \sigma ( \{ F_{1:s}, X_{0:s-1},  Y_{1:s}, {F_{1:s}}\bar{W}_s, \bar{W}_s {X_{0:s - 1}}\} ).$$  
   This further implies that $\tW_{s + 1} \perp \sigma(\{ A X_{0:s - 1}, A^{\top} F_{1:s}, X_{0:s - 1}, Y_{1:s}, F_{1:s} \}) = \sigma(\{X_{0:s}, Y_{1:s}\}) = \cF_{s, s}$, which concludes the proof of \cref{eq:cond-W1} for $t = s$. The proof for \cref{eq:cond-W2} for $t = s$ can be shown similarly. 
   
   Next, we prove \cref{lemma:conditioning} using \cref{eq:cond-W1} and \cref{eq:cond-W2}. For $t \in \NN$, we define 
   \begin{align}
   		& \tA_{t + 1} = \Pi_{F_{1:t}}^{\perp} \tilde{W}_{t + 1} \Pi_{\tx_{t}^{\perp}} + \Pi_{F_{1:t}} \tilde{M}_{t + 1} \Pi_{\tx_{t}^{\perp}} + \Pi_{F_{1:t}} \tilde{M}_{t + 1} \Pi_{\tx_{t}^{\perp}}^{\perp} + \Pi_{F_{1:t}}^{\perp} \tilde{M}_{t + 1} \Pi_{\tx_{t}^{\perp}}^{\perp}, \label{eq:8} \\
   		& \bA_{t + 1} = \Pi_{X_{0:t}}^{\perp} \bar W_{t + 1}^{\top} \Pi_{\fp_{t + 1}} + \Pi_{X_{0:t}}^{\perp} \bar M_{t + 1}^{\top} \Pi_{\fp_{t + 1}}^{\perp} + \Pi_{X_{0:t}} \bar M_{t + 1}^{\top} \Pi_{\fp_{t + 1}}^{\perp} + \Pi_{X_{0:t}} \bar M_{t + 1}^{\top} \Pi_{\fp_{t + 1}}. \label{eq:9}
   \end{align}
   where $\tilde{M}_{t + 1} \overset{d}{=} \bar{M}_{t + 1} \overset{d}{=} A$ are i.i.d., and independent of $\sigma(\cF_{T, T} \cup \sigma ((\bar{W}_t, \tW_t)_{0 \le t \le T + 1}))$. From \cref{eq:cond-W1,eq:cond-W2} we see that 
   \begin{align}
   		& \Pi_{F_{1:t}}^{\perp} A \Pi_{\tx_{t}^{\perp}} = \Pi_{F_{1:t}}^{\perp} \tilde{W}_{t + 1} \Pi_{\tx_{t}^{\perp}} = \Pi_{F_{1:t}}^{\perp} \tilde{A}_{t + 1} \Pi_{\tx_{t}^{\perp}} \in \cF_{t, t + 1}, \label{eq:10} \\
   		& \Pi_{X_{0:t}}^{\perp} A^{\top} \Pi_{\fp_{t + 1}} = \Pi_{X_{0:t}}^{\perp} \bar W_{t + 1}^{\top} \Pi_{\fp_{t + 1}} = \Pi_{X_{0:t}}^{\perp} \bar A_{t + 1}^{\top} \Pi_{\fp_{t + 1}} \in \cF_{t + 1, t + 1}. \label{eq:11}
   \end{align}
   We then show that $\{\tA_t, \bA_t: t \in [T]\}$ are i.i.d. with marginal distribution $A$. To this end, we only need to show: (1) For all $t \in [T]$, $\tA_t$ is independent of $\tA_1, \cdots, \tA_{t - 1}, \bA_1, \cdots, \bA_{t - 1}$ and has marginal distribution $A$; (2) For all $t \in [T]$, $\bA_t$ is independent of $\tA_1, \cdots, \tA_{t - 1}, \bA_1, \cdots, \bA_{t - 1}$ and has marginal distribution $A$. 
   
   We prove this result via induction. For the base case $t = 1$, $\tA_1 = \tW_1 \Pi_{x_0} + \tilde{M}_1 \Pi_{x_0}^{\perp}$ which obviously has marginal distribution equal to $A$ as $\tW_1 \perp \cF_{0,0}$. On the other hand, $\bA_1 = \Pi_{x_0}^{\perp} \bar W_{1}^{\top} \Pi_{f_1} + \Pi_{x_0}^{\perp} \bar M_1^{\top} \Pi_{f_1}^{\perp} + \Pi_{x_0} \bar M_1^{\top} \Pi_{f_1} + \Pi_{x_0} \bar{M}_1^{\top} \Pi_{f_1}^{\perp}$ and $\bar{W}_1$ is independent of $\cF_{0,1}$, thus the conditional distribution of $\bA_1$ conditioning on $\cF_{0,1}$ is always equal to $A$. The marginal distribution of $\bar{A}_1$ being $A$ follows as a simple corollary. Notice that $\tA_1 \in \sigma(\cF_{0,1} \cup \sigma(\tilde M_1))$ and $\cF_{0,1} \perp \tilde{M}_1$. Therefore, in order to show $\bA_1 \perp \tA_1$, it suffices to prove $\bA_1 \perp \tilde{M}_1$ and $\bA_1 \perp \cF_{0,1}$. The first independence follows by definition. As for the second independence, since $x_0, f_1 \in \cF_{0,1}$ and $\bar W_1, \bar{M}_1 \perp \cF_{0,1}$, we can conclude that conditioning on $\cF_{0,1}$, the conditional distribution of $\bA_1$ is always equal to its marginal distribution, thus concludes the proof of this step. 
   
   Suppose the result holds for the first $t$ steps, we then prove it also holds for $t + 1$ via induction. Conditioning on $\cF_{t,t}$, we see that the conditional distribution of $\tA_{t + 1}$ is always equal to the marginal distribution of $A$, thus $\tA_{t + 1}$ has marginal distribution equal to $A$ and $\tA_{t + 1} \perp \cF_{t,t}$. Notice that $\tA_i \in \sigma(\cF_{i - 1, i} \cup \sigma(\tilde{M}_i))$ and $\bA_i \in \sigma(\cF_{i,i} \cup \sigma(\bar{M}_i))$. Therefore, in order to prove $\tA_{t + 1} \perp \{\tA_1, \cdots, \tA_t, \bA_1, \cdots, \bA_t\}$, it suffices to prove $\tA_{t + 1} \perp \cF_{t,t}$ and $\tA_{t + 1} \perp\sigma(\{\tilde{M}_i, \bar{M}_i: i \in [t]\})$. We have already proved the first independence. The second independence follows by definition.
   
   As for $\bA_{t + 1}$, first notice that the conditional distribution of $\bA_{t + 1}$ conditioning on $\cF_{t, t + 1}$ is always equal to the marginal distribution of $A$, thus $\bA_{t + 1}$ has marginal distribution equal to $A$ and $\bA_{t + 1} \perp \cF_{t, t + 1}$.  Similarly, notice that $\tA_i \in \sigma(\cF_{i - 1, i} \cup \sigma(\tilde{M}_i))$ and $\bA_i \in \sigma(\cF_{i,i} \cup \sigma(\bar{M}_i))$. As a result, in order to prove $\bA_{t + 1} \perp \{\tA_1, \cdots, \tA_{t + 1}, \bA_1, \cdots, \bA_{t}\}$, we only need to show $\bA_{t + 1} \perp \cF_{t,t + 1}$ and $\bA_{t + 1} \perp\sigma(\{\tilde{M}_1, \cdots, \tilde{M}_{t + 1}, \bar{M}_1, \cdots, \bar{M}_t\})$. We have already proved the first independence. The second independence follows by definition. Thus, we have concluded the proof of arguments (1) and (2) via induction. 
   
   Finally, we are ready to prove the lemma. We first show that 
   \begin{align}\label{eq:prop-ht}
   		h_{t + 1} = A\txp_t
   \end{align}
   for all $0 \leq t \leq T - 1$. By \cref{eq:cond-W1} and \cref{eq:10}, we have
   \begin{align*}
   		A \txp_t = &  \Pi_{F_{1:t}}^{\perp} \tW_{t + 1} \Pi_{X_{0:t - 1}}^{\perp} \txp_t + \Pi_{F_{1:t}} A \txp_t \\
   		= & \Pi_{F_{1:t}}^{\perp} \tA_{t + 1} \txp_t + \sum_{i = 1}^t \fp_i \cdot \frac{\langle A^{\top} \fp_i, \txp_t \rangle}{\|\fp_i\|_2^2} \\
   		= & \Pi_{F_{1:t}}^{\perp} \tA_{t + 1} \txp_t + \fp_t \cdot \frac{\langle x_t, \txp_t \rangle}{\| \fp_t \|_2^2} = h_{t + 1},
   \end{align*}
   which concludes the proof of \cref{eq:prop-ht}. Now, by definition, we deduce that
   \begin{align*}
       x_t =\, & A^\top f_t = \Pi_{X_{0: t-1}}^{\perp} A^\top f_t + \sum_{i=0}^{t-1} \frac{\langle \tilde{x}_i^{\perp}, A^\top f_t \rangle}{\norm{\tilde{x}_i^{\perp}}_2^2} \cdot \tilde{x}_i^{\perp} \\
       \stackrel{(i)}{=}\, & \Pi_{X_{0: t-1}}^{\perp} A^\top f_t^{\perp} + \sum_{i=0}^{t-1} \frac{\langle A \tilde{x}_i^{\perp}, f_t \rangle}{\norm{\tilde{x}_i^{\perp}}_2^2} \cdot \tilde{x}_i^{\perp} \\
       =\, & \Pi_{X_{0: t-1}}^{\perp} A^\top f_t^{\perp} + \sum_{i=0}^{t-1} \frac{\langle h_{i+1}, f_t \rangle}{\norm{\tilde{x}_i^{\perp}}_2^2} \cdot \tilde{x}_i^{\perp} \\
       \stackrel{(ii)}{=}\, & \Pi_{X_{0: t-1}}^{\perp} {\bA}_t^\top f_t^{\perp} + \sum_{i=0}^{t-1} \frac{\langle h_{i+1}, f_t \rangle}{\norm{\tilde{x}_i^{\perp}}_2^2} \cdot \tilde{x}_i^{\perp},
   \end{align*}
   where $(i)$ follows from the fact that $\Pi_{X_{0: t-1}}^{\perp} A^\top f_i = \Pi_{X_{0: t-1}}^{\perp} x_i = 0$ for $0 \le i \le t - 1$, and $(ii)$ follows from Eq.~\eqref{eq:11}. Similarly, we obtain that
   \begin{align*}
       y_{t + 1} =\, & A \tilde{x}_t = \sum_{i = 1}^{t + 1} \frac{\langle \tilde{x}_{i - 1}^{\perp}, \tilde{x}_t \rangle}{\norm{\tilde{x}_{i - 1}^{\perp}}_2^2} \cdot A \tilde{x}_{i - 1}^{\perp} = \sum_{i = 1}^{t + 1} \frac{\langle \tilde{x}_{i - 1}^{\perp}, \tilde{x}_t \rangle}{\norm{\tilde{x}_{i - 1}^{\perp}}_2^2} \cdot h_i \\
       =\, & \sum_{i = 1}^{t + 1} \Pi_{F_{1:i - 1}}^{\perp} \tilde{A}_i \tx_{i - 1}^{\perp} \frac{\langle \txp_{i - 1}, \tx_t \rangle}{\|\tx_{i - 1}^{\perp}\|_2^2} + \sum_{i = 2}^{t + 1} \fp_{i - 1} \cdot \frac{\langle x_{i - 1}, \txp_{i - 1} \rangle}{\|\fp_{i - 1}\|_2^2} \cdot \frac{\langle \txp_{i - 1}, \tx_t\rangle}{\|\txp_{i - 1}\|_2^2}.
   \end{align*}
   This completes the proof of \cref{lemma:conditioning}.

\section{Supporting lemmas}

This section contains supporting lemmas required by our proof. 

\begin{lem}[Tails of the normal distribution, Proposition 2.1.2 of \cite{vershynin2018high}]\label{lemma:gaussian-tail}
	Let $g \sim \normal(0,1)$. Then for all $t > 0$ we have
	\begin{align*}
		\left(\frac{1}{t} - \frac{1}{t^3}\right) \cdot \frac{1}{\sqrt{2\pi}} e^{-t^2 / 2} \leq \P(g \geq t) \leq \frac{1}{t} \cdot \frac{1}{\sqrt{2\pi}} e^{-t^2 / 2}.
	\end{align*}
\end{lem}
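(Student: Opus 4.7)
The plan is to prove both inequalities via standard integration-by-parts manipulations of the Gaussian density $\varphi(x) = \frac{1}{\sqrt{2\pi}} e^{-x^2/2}$. Since $\P(g \ge t) = \int_t^{\infty} \varphi(x)\, dx$, the essential identity to exploit is $\frac{d}{dx} e^{-x^2/2} = -x e^{-x^2/2}$, which allows the Gaussian density to be integrated in closed form whenever one can write $\varphi(x) = \frac{1}{x} \cdot x \varphi(x)$ and integrate by parts.

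For the upper bound, I would use the trivial observation that on $[t, \infty)$ one has $1 \le x/t$, so
\begin{equation*}
\int_t^{\infty} e^{-x^2/2}\, dx \;\le\; \int_t^{\infty} \frac{x}{t} e^{-x^2/2}\, dx \;=\; \frac{1}{t} e^{-t^2/2},
\end{equation*}
which immediately gives $\P(g \ge t) \le \frac{1}{t\sqrt{2\pi}} e^{-t^2/2}$.

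For the lower bound, I would perform integration by parts with $u = 1/x$ and $dv = x e^{-x^2/2}\, dx$, obtaining
\begin{equation*}
\int_t^{\infty} e^{-x^2/2}\, dx \;=\; \int_t^{\infty} \frac{1}{x} \cdot x e^{-x^2/2}\, dx \;=\; \frac{1}{t} e^{-t^2/2} - \int_t^{\infty} \frac{1}{x^2} e^{-x^2/2}\, dx.
\end{equation*}
Then I would control the remainder term by a second integration by parts (or, more simply, by the same trick): writing $\frac{1}{x^2} e^{-x^2/2} = \frac{1}{x^3} \cdot x e^{-x^2/2}$ and integrating by parts yields $\int_t^{\infty} \frac{1}{x^2} e^{-x^2/2}\, dx = \frac{1}{t^3} e^{-t^2/2} - 3\int_t^{\infty} \frac{1}{x^4} e^{-x^2/2}\, dx \le \frac{1}{t^3} e^{-t^2/2}$. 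Dividing by $\sqrt{2\pi}$ yields the lower bound.

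There is no genuine obstacle here since this is a textbook computation; the only minor subtlety is handling the boundary terms at infinity in the integration by parts, which vanish due to the Gaussian decay. The result is standard and is stated here only as an auxiliary fact for controlling $\|z_i\|_\infty$ in the other lemmas, so the proof can equally well be omitted in favor of the citation to Proposition~2.1.2 of \cite{vershynin2018high}.
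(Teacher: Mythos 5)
Your proof is correct, and it is the standard textbook argument (indeed essentially the one found in Vershynin's book, Proposition~2.1.2, which the paper simply cites without reproducing a proof). Both integration-by-parts steps and the bounding of boundary terms are handled properly, so nothing further is needed.
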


\begin{lem}[Bernstein's inequality, Theorem 2.8.1 of \cite{vershynin2018high}]\label{lemma:bernstein}
	Let $X_1, \cdots, X_N$ be independent, mean zero, sub-exponential random variables. Then, for every $t \geq 0$, we have
	\begin{align*}
		\P\left( \Big| \sum_{i = 1}^N X_i \Big| \geq t \right) \leq 2 \exp \left[ -c \min\left( \frac{t^2}{\sum_{i = 1}^N \|X_i\|_{\Psi_1}^2}, \frac{t}{\max_i \|X_i\|_{\Psi_1}} \right) \right],
	\end{align*}
	where $c > 0$ is an absolute constant, and $\|\cdot\|_{\Psi_1}$ is the Orlicz norm. 
\end{lem}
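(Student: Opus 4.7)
The plan is to use the Chernoff--Markov bounding trick together with the MGF characterization of sub-exponential random variables. The key preliminary fact I will need is that, for a mean-zero random variable $X$ with $\|X\|_{\Psi_1} \le K$, one has the MGF estimate
\begin{equation*}
\E[\exp(\lambda X)] \le \exp\!\bigl(C K^2 \lambda^2\bigr), \qquad \text{for all } |\lambda| \le c/K,
\end{equation*}
for absolute constants $c,C > 0$. I would prove this by Taylor-expanding $e^{\lambda X} - 1 - \lambda X$, using the moment bounds $\E|X|^k \le C k!\, K^k$ that follow from the definition of the $\Psi_1$-norm, and summing the resulting series once $|\lambda| K$ is small enough.

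With that MGF bound in hand, set $S_N := \sum_{i=1}^N X_i$, $\sigma^2 := \sum_{i=1}^N \|X_i\|_{\Psi_1}^2$, and $K_* := \max_i \|X_i\|_{\Psi_1}$. For any $\lambda \in [0, c/K_*]$, independence gives
\begin{equation*}
\E[\exp(\lambda S_N)] = \prod_{i=1}^N \E[\exp(\lambda X_i)] \le \exp\!\bigl(C \lambda^2 \sigma^2\bigr),
\end{equation*}
so applying Markov's inequality to $e^{\lambda S_N}$ yields $\P(S_N \ge t) \le \exp(-\lambda t + C \lambda^2 \sigma^2)$.

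The final step is to optimize the exponent over $\lambda \in [0, c/K_*]$. The unconstrained minimizer is $\lambda_\star = t/(2 C \sigma^2)$. If $\lambda_\star \le c/K_*$ (the sub-Gaussian regime, $t$ small), plugging in gives the bound $\exp(-t^2/(4 C \sigma^2))$. Otherwise (the sub-exponential regime, $t$ large relative to $\sigma^2/K_*$) the optimum saturates the constraint at $\lambda = c/K_*$, yielding $\exp(-c' t / K_*)$. Taking the worse of the two cases produces the claimed $\min$-form. A symmetric argument applied to $-X_i$, combined with a union bound, gives the two-sided inequality with the factor of $2$.

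The main obstacle is essentially bookkeeping rather than any single deep step: one must track constants consistently so that the MGF constraint $|\lambda| \le c/K_*$ meshes with the optimal $\lambda_\star$ in both regimes, and then verify that the two-case bound collapses cleanly into the single $\min$-form stated in the lemma. No new ideas beyond the classical Cram\'er--Chernoff method are required.
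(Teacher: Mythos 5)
The paper does not prove this lemma; it is quoted verbatim from Theorem~2.8.1 of \cite{vershynin2018high}, and your Cram\'er--Chernoff argument (MGF bound for sub-exponential variables, Markov on $e^{\lambda S_N}$, optimize $\lambda$ over the two regimes, symmetrize) is exactly the proof given there. Your outline is correct and matches the standard approach.
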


\begin{lem}[Concentration inequality for sub-Weibull distribution, adapted from Theorem 3.1 of \cite{hao2019bootstrapping}]\label{lemma:conc_non_exp} Let $\{ X_i \}_{1 \le i \le n}$ be a sequence of i.i.d. random variables such that for some constants $C_1, C_2 > 0$ and $q \in (0, 1)$, $\P ( \vert X_1 \vert \ge t ) \le C_1 \exp(- C_2 t^q)$ for all sufficiently large $t > 0$. Then, there exists a constant $C > 0$, such that the following holds for all $t > 0$:
\begin{equation*}
	\P \left( \left\vert \sum_{i=1}^{n} \left( X_i - \E [X_i] \right) \right\vert \ge t \right) \le C \exp \left(-\min \left\{ \frac{C^{-2} t^2}{n}, \ C^{-q} t^q \right\}\right).
\end{equation*}
\end{lem}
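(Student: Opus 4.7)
The plan is a classical truncation-plus-Bernstein argument, which is the standard route from a sub-Weibull tail (exponent $q<1$) to the mixed sub-Gaussian/sub-Weibull concentration bound stated in the lemma. Fix $t>0$ and choose a truncation level $M>0$ to be optimized. Writing $\bar X_i := X_i - \E X_i$, decompose $\bar X_i = U_i + V_i$, where
\begin{equation*}
U_i := X_i \mathbbm{1}\{|X_i| \le M\} - \E\bigl[X_i \mathbbm{1}\{|X_i| \le M\}\bigr], \qquad V_i := \bar X_i - U_i.
\end{equation*}
The $U_i$'s are i.i.d., mean-zero, and bounded by $2M$ in absolute value. The sub-Weibull tail assumption, combined with integration by parts ($\E|X_1|^p = p\int_0^\infty s^{p-1} \P(|X_1|>s)\, ds$), shows that all moments of $X_1$ are bounded by constants depending only on $C_1,C_2,q$; in particular $\var(U_i) \le \E X_1^2 \le \sigma^2$ for a constant $\sigma^2$.

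Classical Bernstein for bounded, mean-zero variables with variance $\sigma^2$ and supremum $2M$ then yields
\begin{equation*}
\P\!\left(\left|\sum_{i=1}^n U_i\right| \ge t/2\right) \le 2\exp\!\left(-c\min\!\left\{\frac{t^2}{n\sigma^2},\ \frac{t}{M}\right\}\right).
\end{equation*}
For the tail part, note that on the event $\cE := \{\max_i |X_i| \le M\}$ the random piece of each $V_i$ vanishes, leaving only the deterministic bias $-\E[X_i \mathbbm{1}\{|X_i|>M\}]$; integration by parts against the sub-Weibull tail gives $|\E[X_i \mathbbm{1}\{|X_i|>M\}]| \lesssim \exp(-C_2 M^q/2)$ for $M$ large enough, so the total bias $n|\E[X_i\mathbbm{1}\{|X_i|>M\}]|$ can be absorbed into $t/2$ as soon as $M^q \gtrsim \log(n/t)$. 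Outside $\cE$ we apply a union bound, $\P(\cE^c) \le nC_1 \exp(-C_2 M^q)$.

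It remains to optimize $M$. In the Gaussian regime $t \le n^{1/(2-q)}$ (equivalently $t^2/n \le t^q$), choose $M$ so that $C_2 M^q \asymp t^2/n$; both the sub-exponential term $t/M$ in Bernstein's bound and the union bound $n \exp(-C_2 M^q)$ are then dominated by $\exp(-c' t^2/n)$. In the Weibull regime $t > n^{1/(2-q)}$, choose $M \asymp t$; then $t/M = O(1)$ and $C_2 M^q \asymp t^q$, so the dominant term is $\exp(-c\, t^q)$. In both regimes the three contributions combine into $C\exp\!\bigl(-\min\{C^{-2} t^2/n,\ C^{-q} t^q\}\bigr)$ after taking $C$ large enough to absorb all absolute constants. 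The main technical nuisance is bookkeeping: verifying that the three pieces (Bernstein on $U_i$, truncation bias, and the union-bound tail) can all be dominated by the claimed form with a \emph{single} constant $C$, uniformly over $t$. Very small $t$ is handled by observing the target bound is trivially $\ge 1$ for a sufficiently large $C$, and the transition between the two regimes is absorbed into the $\min$ on the right-hand side.
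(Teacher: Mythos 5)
Your route is genuinely different from the paper's: the paper does not reprove concentration at all, but simply invokes Theorem~3.1 of \cite{hao2019bootstrapping} with $a = (1,\dots,1)$, which gives the bound in high-probability form $\left|\sum_i (X_i - \E X_i)\right| \le C\bigl(\sqrt{n\log(1/\alpha)} + (\log(1/\alpha))^{1/q}\bigr)$ w.p.\ $1-\alpha$, and then algebraically inverts it (one of the two summands must be at least $t/2$) to obtain the stated tail form. Your attempt to give a self-contained truncation-plus-Bernstein proof would be a nice alternative if it worked, but as written it has a genuine gap in the optimization of the truncation level $M$, and the gap is structural rather than bookkeeping.

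Concretely: Bernstein for the truncated part gives $\exp\bigl(-c\min\{t^2/(n\sigma^2),\, t/M\}\bigr)$, so to recover the target rate $r := \min\{t^2/n,\ t^q\}$ you need simultaneously $t/M \gtrsim r$ (i.e.\ $M \lesssim t/r$) and, for the union bound $n\exp(-C_2 M^q)$, $M^q \gtrsim r$ (i.e.\ $M \gtrsim r^{1/q}$). In the Weibull regime $r = t^q$ these demands read $M \gtrsim t$ and $M \lesssim t^{1-q}$, which are incompatible for large $t$; with your choice $M \asymp t$ the sub-exponential Bernstein term is $\exp(-c\,t/M) = \exp(-O(1))$, i.e.\ trivial, so the truncated sum is \emph{not} controlled at rate $\exp(-ct^q)$. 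The Gaussian regime also fails on part of its range: with $M^q \asymp t^2/n$ one has $M \lesssim n/t$ only when $t \lesssim n^{(1+q)/(2+q)}$, and since $(1+q)/(2+q) < 1/(2-q)$ for $q\in(0,1)$, in the window $n^{(1+q)/(2+q)} \ll t \le n^{1/(2-q)}$ the term $\exp(-c\,t/M)$ is strictly weaker than $\exp(-ct^2/n)$ (indeed $t/M \to O(1)$ as $t$ approaches $n^{1/(2-q)}$, e.g.\ for $q=1/2$). No single truncation level repairs this; one needs either a multi-scale (dyadic) truncation with Bernstein applied level by level, or the moment-based argument ($\E|S_n|^p \lesssim (\sqrt{pn} + p^{1/q})^p$ plus Markov with optimized $p$) that underlies the cited theorem. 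The simplest fix is to do what the paper does: quote the black-box result and invert it.
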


\begin{proof}
    Choosing $a$ to be the vector of all ones in \cite[Theorem 3.1]{hao2019bootstrapping}, we know that there exists a constant $C > 0$, such that
    \begin{equation*}
        \P \left( \left\vert \sum_{i=1}^{n} \left( X_i - \E [X_i] \right) \right\vert \ge C \left( \sqrt{n \log (1 / \alpha)} + (\log(1 / \alpha))^{1/q} \right) \right) \le \alpha.
    \end{equation*}
    Set $t = C \left( \sqrt{n \log (1 / \alpha)} + (\log(1 / \alpha))^{1/q} \right)$, then one of the followings must happen:
    \begin{itemize}
        \item [(a)] $C \sqrt{n \log (1 / \alpha)} \ge t/2$. This implies that
        \begin{equation*}
            \alpha \le \exp \left( - \frac{t^2}{4 C^2 n} \right).
        \end{equation*}
        
        \item [(b)] $C (\log(1 / \alpha))^{1/q} \ge t/2$, which is equivalent to
        \begin{equation*}
            \alpha \le \exp \left( - \frac{t^q}{(2C)^q} \right).
        \end{equation*}
    \end{itemize}
    Hence, we finally obtain that
    \begin{align*}
        \P \left( \left\vert \sum_{i=1}^{n} \left( X_i - \E [X_i] \right) \right\vert \ge t \right) & \le \alpha \le \max \left\{ \exp \left( - \frac{t^2}{4 C^2 n} \right), \ \exp \left( - \frac{t^q}{(2C)^q} \right) \right\} \\
        & = \exp \left( - \min \left\{ \frac{t^2}{4 C^2 n}, \ \frac{t^q}{(2C)^q} \right\} \right).
    \end{align*}
    This completes the proof.
\end{proof}

\end{document}